\def\eqref#1{equation~\ref{#1}}
\def\1{\bm{1}}
\def\diag{{\textnormal{diag}}}
\def\rank{{\textnormal{rank}}}
\def\sspan{{\textnormal{span}}}
\def\vsigma{{\bm{\sigma}}}
\def\ve{{\bm{e}}}
\def\vg{{\bm{g}}}
\def\vq{{\bm{q}}}
\def\vu{{\bm{u}}}
\def\vv{{\bm{v}}}
\def\mA{{\bm{A}}}
\def\mB{{\bm{B}}}
\def\mC{{\bm{C}}}
\def\mD{{\bm{D}}}
\def\mF{{\bm{F}}}
\def\mG{{\bm{G}}}
\def\mI{{\bm{I}}}
\def\mK{{\bm{K}}}
\def\mM{{\bm{M}}}
\def\mP{{\bm{P}}}
\def\mQ{{\bm{Q}}}
\def\mS{{\bm{S}}}
\def\mU{{\bm{U}}}
\def\mV{{\bm{V}}}
\def\mSigma{{\bm{\Sigma}}}
\DeclareMathAlphabet{\mathsfit}{\encodingdefault}{\sfdefault}{m}{sl}
\SetMathAlphabet{\mathsfit}{bold}{\encodingdefault}{\sfdefault}{bx}{n}
\def\gA{{\mathcal{A}}}
\def\gF{{\mathcal{F}}}
\def\gG{{\mathcal{G}}}
\def\gH{{\mathcal{H}}}
\def\gL{{\mathcal{L}}}
\def\gN{{\mathcal{N}}}
\def\gX{{\mathcal{X}}}
\def\sP{{\mathbb{P}}}
\newcommand{\E}{\mathbb{E}}
\newcommand{\R}{\mathbb{R}}
\DeclareMathOperator*{\argmin}{arg\,min}
\DeclareMathOperator{\Tr}{Tr}
\renewenvironment{boxed}
    {
    \begin{tcolorbox}[colback=black!2!white,colframe=black!75!black,left=2pt,right=2pt,top=2pt,bottom=2pt]
    }
    { 
    \end{tcolorbox}
    }
\newcommand{\rebuttal}[1]{{#1}}
\newcommand{\set}[1] {{\left \{ #1 \right \}}}
\newcommand{\sset}[2]{{\left \{ #1 \; \middle \vert \; #2 \right \}}}
\newcommand{\res}[2]{{#1 $\pm$ #2}}
\def \ie {\textit{i.e.}}
\def \eg {\textit{e.g.}}
\def \aew {\textit{a.e.}}
\def \iid {\textit{i.i.d.}}
\def \wrt {\textit{w.r.t.}}
\def \err {\textnormal{err}}
\def \px {{P_\gX}}
\def \pa {{P_\gA}}
\def \hpa {{\hat{P}_\gA}}
\def \hpx {{\hat{P}_\gX}}
\def \lxp {{L^2(P_\gX)}}
\def \lap {{L^2(P_\gA)}}
\def \lhxp {{L^2(\hpx)}}
\def \lhap {{L^2(\hpa)}}
\def \pax {{P_{\gA \gX}}}
\def \tax {\Gamma}
\def \rad {\mathfrak{R}}
\def \lrad {{\bar{\rad}}}
\def \bt {\kappa}
\def \downe {\gF(\tax; \epsilon)}
\def \Bdowne {\gF_{B}(\tax; \epsilon)}
\def \nse {\nu}
\def \eph {{\hat{\gH}_{\tax}}}
\def \laph {{\gH_{\tax}}}
\def \ephd {{\hat{\gH}_d}}
\def \hPhi {{\hat{\Phi}}}
\def \hPsi {{\hat{\Psi}}}
\def \learnh {\gH_{\hPsi}}
\def \learnk {{\hat{K}_{\hPsi}}}
\def \Phistar {{\boldsymbol{\Phi}^*}}
\def \Psistar {{\boldsymbol{\Psi}^*}}
\def \hPhistar {{\bar{\boldsymbol{\Phi}}^*}}
\def \Phistard {{\boldsymbol{\Phi}^*_d}}
\def \hPhistard {{\bar{\boldsymbol{\Phi}}^*_d}}
\def \dlbd {{\mD_{\lambda}}}
\def \mhG {{\hat{\mG}}}
\def \mhF {{\hat{\mF}}}
\def \fproj {{f_{\hat{\Psi}}}}
\def \ka {{K_A}}
\def \kx {{K_X}}
\def \taxlong {{\Gamma_{x \rightarrow a}}}
\def \taxstar {{\Gamma_{a \rightarrow x}}}
\def \pxp {{P_X^+}}
\def \pap {{P_A^+}}
\crefname{equation}{Eqn.}{Eqns.}
\crefname{ass}{Assumption}{Assumptions}
\crefname{thm}{Theorem}{Theorems}
\crefname{figure}{Figure}{Figures}
\newtheorem{thm}{Theorem}
\newtheorem{prop}{Proposition}
\newtheorem{lem}[prop]{Lemma}
\newtheorem{cor}[prop]{Corollary}
\newtheorem{example}{Example}
\newtheorem{defi}{Definition}
\newtheorem{ass}{Assumption}
\theoremstyle{remark}
\newtheorem*{remark}{Remark}
\pgfplotsset{compat=1.11,
legend image code/.code={
\draw[mark repeat=2,mark phase=2]
plot coordinates {
(0cm,0cm)
(0.15cm,0cm)        
(0.3cm,0cm)         
};%
}
}
\title{Understanding Augmentation-based \\ Self-Supervised Representation Learning via RKHS Approximation and Regression}
\author{Runtian Zhai, Bingbin Liu, Andrej Risteski, Zico Kolter, Pradeep Ravikumar  \\
  Carnegie Mellon University  \\ 
  \href{mailto:rzhai@cs.cmu.edu?subject=[ICLR'24] Questions about Understanding Augmentation-based SSL\&cc=pradeepr@cs.cmu.edu}{{\color{black} \texttt{\{rzhai,bingbinl,aristesk,zkolter,pradeepr\}@cs.cmu.edu}}}
}
\begin{document}

\maketitle
\begin{abstract}
Data augmentation is critical to the empirical success of modern self-supervised representation learning, such as contrastive learning and masked language modeling.
However, a theoretical understanding of the exact role of augmentation remains limited.
Recent work has built the connection between self-supervised learning and the approximation of the top eigenspace of a graph Laplacian operator, suggesting that learning a linear probe atop such representation can be connected to RKHS regression.
Building on this insight, this work delves into a statistical analysis of augmentation-based pretraining.
Starting from the isometry property, a geometric characterization of the target function given by the augmentation, we disentangle the effects of the model and the augmentation,
and prove two generalization bounds that are free of model complexity.
Our first bound works for an arbitrary encoder, where the prediction error is decomposed as the sum of an estimation error incurred by fitting a linear probe with RKHS regression, and an approximation error entailed by RKHS approximation.
Our second bound specifically addresses the case where the encoder is near-optimal, that is it approximates the top-$d$ eigenspace of the RKHS induced by the augmentation.
A key ingredient in our analysis is the \textit{augmentation complexity},
which we use to quantitatively compare different augmentations and analyze their impact on downstream performance. 
\end{abstract}

\section{Introduction}

It is widely acknowledged that better data augmentation techniques have been a major driving force in many recent breakthroughs in self-supervised representation learning.
For example, contrastive learning~\citep{chen2020simple} with aggressive random cropping and strong color distortion greatly improves the performance of vision tasks,
and masked prediction with random masking and replacement is among the state-of-the-art representation learning methods for both natural language processing~\citep{devlin2018bert} and vision~\citep{he2022masked}.
Due to their extraordinary empirical successes, developing a rigorous theoretical understanding of augmentation-based representation learning is an important open problem, and is crucial for inventing new, better and principled augmentation techniques.

A key advance was recently made by \citet{haochen2021provable}, who analyzed a variant of contrastive representation learning, termed spectral contrastive learning, by connecting it to learning eigenfunctions of a Laplacian operator over a population \textit{augmentation graph}. This has been followed up by multiple works which extended these results to more general contrastive learning approaches~\citep{saunshi2022understanding,johnson2022contrastive,cabannes2023ssl}. However, the generalization guarantees in these works have to explicitly grapple with the function class being used to learn the eigenfunctions (typically deep neural networks in practice). Such a dependency was even deemed necessary by \citet{saunshi2022understanding} who argued that ``function-class-agnostic analysis leads to vacuous guarantees.''
Consequently, the generalization bounds such as \citet[Theorem~4.2]{haochen2021provable} need to depend on the Rademacher or other complexities of this encoder function class,
yet finding proper complexity measures for flexible function classes such as neural networks remains an open problem.
Moreover, with the effects of the augmentation and the encoder function class intertwined, these analyses cannot discern the exact role of augmentation in self-supervised representation learning.

In this work, we factor out the effect of the encoder function class completely.
Our starting point is the observation that target functions that are ``soft invariant'' to the augmentation lie in a specific RKHS $\laph$ which we call the \textit{(augmentation) induced RKHS}.
Meanwhile, the $d$-dimensional feature encoder we pretrain also spans an RKHS, and the closer this RKHS is to the induced RKHS, the better the encoder.
Thus, pretraining can be viewed as approximating the induced RKHS with unlabeled data.
Then, learning a linear probe atop corresponds to regression on the encoder's RKHS with labeled data.
This perspective elucidates what roles the two stages of representation learning are playing:

\begin{itemize}
    \item \textbf{Upstream:} The self-supervised pretraining stage can be viewed as doing RKHS approximation.
    It learns, with unlabeled data, an encoder $\hat{\Psi}$ whose learned RKHS $\learnh$ (\cref{eq:Khat}) approximates $\laph$.
    \item \textbf{Downstream:} The supervised stage is doing RKHS regression over $\learnh$ with labeled data.
\end{itemize}
\cref{fig:teaser} provides an illustration of this RKHS approximation/regression perspective.
The learning performance is measured by the prediction error of the downstream predictor,
which decomposes into two parts based on the above framework:
the approximation error upstream entailed by RKHS approximation, and the estimation error downstream entailed by RKHS regression.

Thus, using classical function analytic tools, this leads us to our first set of results: 
For an \textit{arbitrary} pretrained encoder, we provide a generalization bound on the $L^2$ distance between the predictor and the target function (\cref{thm:main-2}).
We emphasize that the encoder can be \textit{arbitrary}, \ie{} any architecture and size.
As a result, \cref{thm:main-2} starkly contrasts prior work in that it disentangles the effects of the model and the augmentation: our generalization bound (a) is nonparametric, (b) does not depend on any model complexity or model inductive bias, and hence (c) allows the user to choose any model class for the encoder.
Our bound depends on two key quantities: 
(a) the augmentation complexity $\bt$, which is smaller for stronger and ``better behaved'' augmentations;
(b) the trace gap, which serves as a proxy of how well the encoder approximates the induced RKHS.

The bound for an arbitrary encoder is great, but one might also be interested in proving guarantees for the \textit{optimal} encoder that minimizes the worst-case approximation error.
We will show that the optimal $d$-dimensional encoder consists of the top-$d$ eigenfunctions of the induced RKHS (\cref{prop:reg}).
However, these eigenfunctions are not accessible given only finite samples, so we instead study the near-optimal $d$-dimensional encoder, which is a Monte-Carlo approximation of the optimal one.
Our second set of results provide a generalization bound for the near-optimal encoder, by bounding its trace gap  (which captures its quality) with high probability in terms of some spectral aspects of the induced RKHS.
The bound shows that: As the number of unlabeled samples goes to infinity, the near-optimal encoder will become optimal.

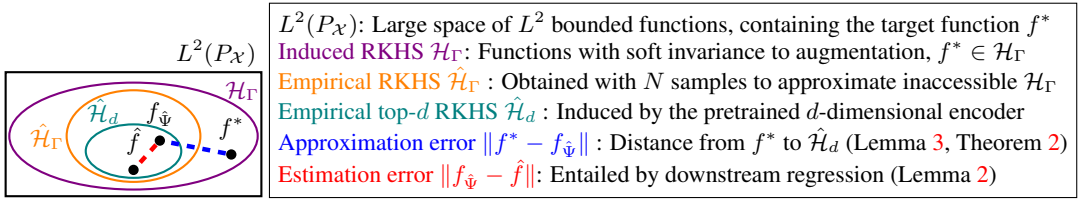
\begin{figure}[!t]
\vskip -.4in
    \begin{tikzpicture}
\tikzstyle{every node}=[font=\small]











\draw[thick] (9-14.4,-0.1) rectangle ++(3.4,1.65);
\node[] at (11.8-14.4,1.8) {$\lxp$};
\draw[thick, violet] (10.7-14.4,0.7) ellipse (.65in and .28in);
\node[violet] at (12.15-14.4,1.3) {$\laph$};
\draw[orange, thick] (10.7-14.4,0.7) ellipse (.35in and .24in);
\node[orange] at (9.55-14.4,0.7) {$\eph$};
\draw[teal, thick] (10.7-14.4,0.5) ellipse (.25in and .14in);
\node[teal] at (10.3-14.4,1.03) {$\ephd$};

\node[circle,fill=black,inner sep=0pt,minimum size=3.9pt,label=above:{$f^*$}] (a1) at (12-14.4, 0.455) {};
\node[circle,fill=black,inner sep=0pt,minimum size=3.9pt,label=above:{$\fproj$}] (a2) at (11.05-14.4, 0.64) {};
\node[circle,fill=black,inner sep=0pt,minimum size=3.9pt,label=above:{$\hat{f}$}] (a3) at (10.7-14.4, 0.25) {};
\draw[dashed, blue, line width=.6mm] (a1) -- (a2);
\draw[dashed, red, line width=.6mm] (a2) -- (a3);

\node[draw, align=left, anchor=south west] at (-1.9,-0.125) {
$\lxp$: Large space of $L^2$ bounded functions, containing the target function $f^*$ \\
{\color{violet} Induced RKHS $\laph$}: Functions with soft invariance to augmentation, $f^* \in \laph$ \\ 
{\color{orange} Empirical RKHS $\eph$ }: Obtained with $N$ samples to approximate inaccessible $\laph$ \\
{{\color{teal} Empirical top-$d$ RKHS $\ephd$ }: Induced by the pretrained $d$-dimensional encoder } \\ 
{\color{blue} Approximation error $\| f^* - \fproj \|$ }: Distance from $f^*$ to $\ephd$ (\cref{lem:f_diff}, \cref{thm:main-3}) \\ 
{\color{red} Estimation error  $\| \fproj - \hat{f} \|$}: Entailed by downstream regression (\cref{thm:main})
};

\end{tikzpicture}
    \vskip -.22in
    \caption{\small{Overall RKHS approximation/regression framework illustration and commentary.}}
    \label{fig:teaser}
    \vskip -.15in
\end{figure}

A practical implication of our framework is that the \emph{augmentation complexity} $\bt$, defined as the $L^{\infty}$ norm of the kernel of $\laph$,
can be used as a tool to quantitatively compare different augmentations and analyze their impact on downstream performance.
In Section \ref{sec:bt-big}, we demonstrate this point with mask-type augmentations on synthetic and real datasets,
and show that (i) $\bt$ depends on both the augmentation strength and the augmentation strategy;
(ii) a smaller $\bt$ (\eg{} stronger augmentation) leads to a smaller generalization gap,
but an overly strong augmentation causes poor training performance.
Thus, there is a ``sweet spot'' in the middle with the best test performance.
Overall, we believe that this work places modern representation learning on a more scientific footing and can inspire future empirical advancements.
Finally, it is worth emphasizing that this work is about studying self-supervised learning from a kernel perspective, rather than studying kernel methods.

\section{Problem Setup, Geometry of the Augmentation Induced RKHS}
\label{sec:prelim}

Let $\gX \subset \R^{d_\gX}$ be the data space, and $\px$ be the data distribution.
Let $\lxp$ be the $L^2$ function space, such that any $f \in \lxp$ satisfies $\E_{\px}[f(X)^2] < \infty$.
Let $\langle f_1, f_2 \rangle_\px = \int f_1(x) f_2(x) d \px (x)$ and $\| f \|_\px = \langle f, f \rangle_\px^{1/2}$ be the inner product and the norm of the Hilbert space $\lxp$.
Let $f^* \in \lxp$ be the target function we want to learn. 
This work studies the least-squares regression problem (see \citet{gyorfi2002distribution} for an introduction), which is formally stated as follows:

\begin{boxed}
\textbf{Problem.}
    \textit{
    Given unlabeled samples $x_1,\cdots,x_N$ and labeled samples $\tilde{x}_1,\cdots,\tilde{x}_n$ \iid{} sampled from $\px$, and labels $\tilde{y}_k = f^*(\tilde{x}_k) + \nse_k$ for $k \in [n]$ and random noise $\nse_k$, 
    find a predictor $\hat{f} \in \lxp$ with a low prediction error $\err(\hat{f}, f^*) :=  \| \hat{f} - f^* \|_\px^2= \E_{\px}[(\hat{f}(X) - f^*(X))^2]$.
    }
\end{boxed}

\vspace{-.07 in}
\subsection{The Augmentation Operator \texorpdfstring{$\tax$}{Gamma}: A Source of Prior Knowledge}
\label{sec:gamma}
\vspace{-.07 in}

The core problem in self-supervised representation learning is how to leverage the unlabeled data $x_1,\cdots,x_N$.
Without labels, the pretraining task must be built upon some \textit{prior knowledge} we have about the target function $f^*$.
Data augmentation based pretraining tasks are built upon the following prior knowledge:
\textit{Two augmentations of the same sample $x$ should be similar to each other.}

Formally, let $\gA$ denote an augmented space that contains the augmentations of $x \in \gX$.
Data augmentation induces a joint distribution $\pax$,
with marginal distributions $\pa$ and $\px$.
We will use capital letters $A$ and $X$ to denote random variables from $\pa$ and $\px$.
We define an \textit{augmentation operator} $\tax = \taxlong : \lxp \rightarrow \lap$ as $(\taxlong f)(a) = \E[f(X) | a]$ for all $f \in \lxp$,
and define its adjoint operator $\tax^* = \taxstar : \lap \rightarrow \lxp$ as $(\taxstar g)(x) = \E[g(A) | x]$ for all $g \in \lap$.
We can see that $\taxstar$ is indeed the adjoint of $\taxlong$,
since for all $f \in \lxp$ and $g \in \lap$,
there is $\langle \taxlong f, g \rangle_\pa  = \iint f(x) g(a) p(a,x) da dx = \langle f, \taxstar g \rangle_\px$.
Intuitively, the augmentation operator connects any single sample $x$ to the distribution of its augmentations $p(a|x)$.

\textbf{Example:}
In BERT, suppose we use 15\% random masking.
Then, $\gX$ is the space of original sentences, and $\gA$ is the space of 15\% masked sentences;
$\px$ is the distribution over original sentences, $A \sim p(\cdot|x)$ is the 15\% randomly masked version of an original sentence $x$, and $\pax(a,x) = \px(x) p(a|x)$.
Thus, $(\taxstar g)(x)$ is the mean of $g$ over all 15\% randomly masked versions of $x$.

Now we define the ``soft invariance'' \wrt{} the augmentation.
Denote the function class $\downe$ to be the set of function $f^*$ such that $\exists g^* \in \lap$ such that $f^*(x) = (\taxstar g^*)(x) = E[g^*(A)|x]$, and
\vspace{-0.2em}
\begin{equation}
\label{eqn:ass-lap}
    \frac{1}{2} \E_{X \sim \px} \E_{A, A' \sim p(\cdot | X)} \left [ (g^*(A) - g^*(A'))^2 \right ] \le \epsilon \|g^*\|_\pa^2 ,
\end{equation}
for some small positive constant $\epsilon$.
Eqn. (\ref{eqn:ass-lap}) is a regularization constraint induced by the random walk normalized Laplacian over the \textit{augmentation graph} defined in \citet{haochen2021provable},
where the edge weight between $a$ and $a'$ is given by $\pap(a,a')$ in Eqn. (\ref{eqn:ka-def}).
Compared to Assumption 1.1 in \citet{johnson2022contrastive},
\cref{eqn:ass-lap} has an additional $\|g^*\|_\pa^2$ term on the right-hand side so it is homogeneous.
For any $f \in \downe$, if $\|f\|_\px \le B$ for some constant $B$, then we denote $f \in \Bdowne$.
\vspace{-.05 in}
\begin{boxed}
\begin{ass}[See discussions in \cref{app:discussion}]
\label{ass:lap}
    We assume that $f^* \in \Bdowne$ for given $B, \epsilon$.
\end{ass}
\end{boxed}
Let the range of $\tax^*$ be $R(\tax^*) = \sset{f = \taxstar g}{g \in \lap}$.
Then $f^* \in R(\tax^*)$ by Assumption \ref{ass:lap}.

\vspace{-.05 in}
\subsection{Induced RKHS \texorpdfstring{$\laph$}{H} and the Isometry Property}
\label{sec:induced-rkhs}
\vspace{-.05 in}
Now, we show that $R(\tax^*)$ is an RKHS.
Define the following kernel over $\gA \times \gA$:
\begin{equation}
\label{eqn:ka-def}
    \ka(a_1,a_2) := \frac{d \pap}{d(\pa \otimes \pa)} =  \frac{\pap(a_1,a_2)}{ \pa (a_1) \pa (a_2)} , \  \pap(a_1,a_2) := \int p(a_1|x) p(a_2|x) d \px (x).
\end{equation}
Here, $\pap$ is called the \textit{associative distribution},
and $\ka$ is the \textit{Radon-Nikodym derivative} of $\pap$ \wrt{} $\pa \otimes \pa$, also called the positive-pair kernel in \citet{johnson2022contrastive}.
Next, we define a \textit{dual kernel} as:
\begin{equation}
\label{eq:Ka}
    \kx(x_1,x_2) := \frac{d \pxp}{d(\px \otimes \px)} = \frac{\pxp(x_1,x_2)}{ \px (x_1) \px (x_2)} = \int \frac{ p(a|x_1) p(a|x_2)}{ \pa (a)}  da  ,
\end{equation}
which is constructed by swapping $a$ and $x$ in $\kx$ and then applying the Bayes rule.
It holds that:
\begin{equation*}
\vspace{-.05 in}
\begin{cases}
(\taxstar \taxlong f)(x) = (\tax^* \tax f)(x) = \int  \kx(x,x') f(x') \px (x') dx';\\ 
(\taxlong \taxstar g)(a) = (\tax \tax^* g)(a) = \int \ka(a,a')  g(a') \pa (a') da'  .
\end{cases}
\end{equation*}
See App. \ref{app:sec2} for derivation.
Let $\lambda_i \in \R$ and $\psi_i \in \lxp$ be the sorted \textit{eigenvalue} and \textit{eigenfunction} of $\tax^* \tax$, \ie{} $\tax^* \tax \psi_i = \lambda_i \psi_i$,
with $\lambda_1 \ge \lambda_2 \ge \cdots \ge 0$.
Suppose $\int \kx(x,x')^2 d \px (x) d \px (x') < \infty$,
so that $\tax^* \tax$ is a compact Hilbert-Schmidt integral operator.
Then, by Hilbert-Schmidt theorem,
we can choose $\psi_1,\psi_2,\cdots$ that form an orthonormal basis of $\lxp$,
such that $\langle \psi_i, \psi_j \rangle_\px = \delta_{i,j}$,
and any $f \in \lxp$ can be written as $f = \sum_i u_i \psi_i$ for some $\{u_i\}_i$.

Note that $\lambda_i \in [0,1]$, since $\|g\|_\pa^2 \ge \| \tax^* g \|_\px^2$ for any $g \in \lap$ by Jensen's inequality.
It is easy to check that $\lambda_1 = 1$ and $\psi_1 \equiv 1$ is always a pair of eigenvalue and eigenfunction of $\tax^* \tax$.
Moreover, when $d$ is sufficiently large, $\lambda_d$ should be close to $0$.
Similarly, we can define a set of orthonormal eigenfunctions of $\tax \tax^*$ denoted by $\{\phi_i \}$,
and we can relate these two sets of eigenfunctions by:
\begin{prop}[Duality]
\label{prop:dual-2}
Operators $\tax \tax^*$ and $\tax^* \tax$ share the same non-zero eigenvalues, and there exist eigenfunctions $\{ \phi_i \}$ of $ \tax \tax^*$ that form an orthonormal basis of $\lap$, such that for any $\lambda_i > 0$,\vspace{-.05 in}
\begin{equation*}
    \psi_i = \lambda_i^{-1/2} \tax^* \phi_i = \lambda_i^{-1/2} \taxstar \phi_i \quad \text{and} \quad \phi_i = \lambda_i^{-1/2} \tax \psi_i = \lambda_i^{-1/2} \taxlong \psi_i .
\end{equation*}
Moreover, we have the following spectral decomposition of the Radon-Nikodym derivative:
\begin{equation}
\label{eqn:rn-der}
   \frac{dP_{ \gA \gX}}{d(\pa \otimes \px)} = \frac{ P_{ \gA \gX} (a,x)}{\pa (a) \px (x)} = \sum_i \lambda_i^{1/2} \phi_i(a) \psi_i(x)  .
\end{equation}
\end{prop}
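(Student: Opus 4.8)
The plan is to carry out the standard singular value decomposition for the compact operator $\tax=\taxlong:\lxp\to\lap$, leveraging the eigendecomposition of $\tax^*\tax$ that is already available. Since $\tax^*\tax$ is compact, the operator $\tax\tax^*=\taxlong\taxstar$ is compact, self-adjoint and positive semidefinite on $\lap$, so the Hilbert--Schmidt theorem supplies an orthonormal eigenbasis for it; the real task is to produce one that is \emph{aligned} with $\{\psi_i\}$. To that end, for each index with $\lambda_i>0$ I would define $\phi_i:=\lambda_i^{-1/2}\taxlong\psi_i$ and verify three things by direct computation: (i) orthonormality, $\langle\phi_i,\phi_j\rangle_\pa=(\lambda_i\lambda_j)^{-1/2}\langle\tax^*\tax\psi_i,\psi_j\rangle_\px=\delta_{ij}$; (ii) $\tax\tax^*\phi_i=\lambda_i^{-1/2}\tax(\tax^*\tax\psi_i)=\lambda_i\phi_i$, so $\phi_i$ is a $\lambda_i$-eigenfunction of $\tax\tax^*$; and (iii) $\taxstar\phi_i=\lambda_i^{-1/2}\tax^*\tax\psi_i=\lambda_i^{1/2}\psi_i$, which rearranges to $\psi_i=\lambda_i^{-1/2}\taxstar\phi_i$ (and $=\lambda_i^{-1/2}\tax^*\phi_i$ since $\tax^*=\taxstar$).

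Next I would argue that the nonzero spectra coincide with multiplicities: for each $\mu>0$ the maps $\psi\mapsto\mu^{-1/2}\taxlong\psi$ and $\phi\mapsto\mu^{-1/2}\taxstar\phi$ are mutually inverse isometries between $\ker(\tax^*\tax-\mu I)\subset\lxp$ and $\ker(\tax\tax^*-\mu I)\subset\lap$, so these eigenspaces are isometrically isomorphic, hence finite-dimensional of equal dimension. Completing $\{\phi_i:\lambda_i>0\}$ to an orthonormal basis of $\lap$ by appending any orthonormal basis of $\ker(\taxstar)$ (whose elements are assigned eigenvalue $0$) is legitimate because $\lap=\overline{R(\tax)}\oplus\ker(\taxstar)$ and, by the spectral theorem, $\overline{R(\tax)}$ is spanned by the nonzero eigenspaces of $\tax\tax^*$. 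This yields the claimed orthonormal basis $\{\phi_i\}$ of $\lap$.

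For the Radon--Nikodym decomposition, I would first note that $\tax$ is the integral operator with kernel $r(a,x):=\frac{dP_{\gA\gX}}{d(\pa\otimes\px)}=\frac{p(a|x)}{\pa(a)}$, i.e. $(\taxlong f)(a)=\E[f(X)|a]=\int r(a,x)f(x)\,d\px(x)$. Since $\{\phi_i\}$ and $\{\psi_j\}$ are orthonormal bases of $\lap$ and $\lxp$, the products $\{\phi_i\otimes\psi_j\}$ form an orthonormal basis of $L^2(\pa\otimes\px)$, and the coefficients of $r$ against it are $\langle r,\phi_i\otimes\psi_j\rangle_{\pa\otimes\px}=\langle\taxlong\psi_j,\phi_i\rangle_\pa=\lambda_i^{1/2}\delta_{ij}$, using $\taxlong\psi_j=\lambda_j^{1/2}\phi_j$ when $\lambda_j>0$ and $\|\taxlong\psi_j\|_\pa^2=\langle\tax^*\tax\psi_j,\psi_j\rangle_\px=\lambda_j=0$ (hence $\taxlong\psi_j=0$) when $\lambda_j=0$. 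Expanding $r$ in this basis then gives $r=\sum_i\lambda_i^{1/2}\phi_i(a)\psi_i(x)$, which is the stated identity.

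The step I expect to require the most care is not any of the operator identities (those are routine) but the functional-analytic bookkeeping around convergence of the last series: the expansion holds in $L^2(\pa\otimes\px)$ exactly when $r\in L^2(\pa\otimes\px)$, i.e. when $\|r\|_{L^2(\pa\otimes\px)}^2=\int\kx(x,x)\,d\px(x)=\sum_i\lambda_i=\Tr(\tax^*\tax)<\infty$, a trace-class condition slightly stronger than the Hilbert--Schmidt hypothesis $\sum_i\lambda_i^2<\infty$ imposed on $\kx$. I would either promote this to a standing assumption (it holds, e.g., whenever the augmentation induces a bounded density ratio) or, lacking it, state the decomposition as an identity of SVD coefficients --- that $\lambda_i^{1/2}$ and $(\phi_i,\psi_i)$ are the singular values and singular functions of $\taxlong$ --- with the series read in the corresponding weak sense; the first two paragraphs are unaffected either way.
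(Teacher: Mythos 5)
Your proposal is correct and follows essentially the same route as the paper's proof: you define $\phi_i=\lambda_i^{-1/2}\tax\psi_i$ for $\lambda_i>0$, verify orthonormality and the eigen-relations via the adjoint identity, and obtain the decomposition in \cref{eqn:rn-der} by computing the coefficients of the Radon--Nikodym derivative against the $\phi_i$'s. The only difference is in the last step's bookkeeping: the paper expands $P_{\gA\gX}(a,x)/(\pa(a)\px(x))$ in $\lap$ for each fixed $x$, whereas you expand it in the tensor basis of $L^2(\pa\otimes\px)$ and rightly flag the trace-class condition $\sum_i\lambda_i<\infty$ this requires (which is indeed supplied later by the augmentation-complexity assumption, since $S_\lambda\le\bt^2$), a more careful but equivalent treatment.
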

\vspace{-.1in}
Define a Hilbert space $\laph := \sset{f = \sum_i u_i \psi_i \in \lxp}{\sum_i \lambda_i^{-1} u_i^2 < \infty}$,
whose inner product is $\langle f_1, f_2 \rangle_{\laph} = \sum_i \lambda_i^{-1} u_i v_i$
for $f_1 = \sum_i u_i \psi_i$ and $f_2 = \sum_i v_i \psi_i$.
It satisfies (proof in Appendix \ref{app:sec2}):\vspace{-.05 in}
\textit{
\begin{enumerate}[(i)]
    \item $\kx$ is the reproducing kernel of $\laph$, such that for all $f \in \laph$, $f(x) = \langle f, \kx(x, \cdot) \rangle_{\laph}$.
    \item $\laph = R(\tax^*)$. We call this the \textit{augmentation induced RKHS}.
    \item $\laph$ is isometric to $\textnormal{span}(\set{\phi_i}_{\lambda_i > 0})$, a subspace of $\lap$, and $\| f\|_{\laph} = \inf_{g: f = \tax^* g} \|g \|_\pa$.
    \item \rebuttal{For any $f^* \in \Bdowne \subset R(\tax^*)$, let $f^* = \sum_i u_i \psi_i$. Define $g_0 := \sum_i \lambda_i^{-1/2} u_i \phi_i$. Then, $g_0$ must satisfy \cref{eqn:ass-lap}, so we can choose $g^* = g_0$, in which case \cref{eqn:ass-lap} is equivalent to:}
\begin{equation}
\vspace{-.05 in}
\label{eqn:new-cond}
  \langle g^*, (I - \tax \tax^*) g^* \rangle_\pa \le \epsilon \| g^* \|_\pa^2 \; \Leftrightarrow \; \sum_i \frac{1-\lambda_i}{\lambda_i} u_i^2 \le \epsilon \sum_i \frac{1}{\lambda_i} u_i^2 .
\end{equation}
\end{enumerate}
}
\vspace{-.05in}
Moreover, Eqn. (\ref{eqn:new-cond}) is equivalent to a simple and intuitive \textit{isometry property}:\vspace{-.05 in}
\begin{boxed}
\begin{equation}
\label{eqn:lap-equ}
    (1 - \epsilon) \| f^* \|_{\laph}^2 \le \| f^* \|_\px^2 \le \| f^* \|_{\laph}^2  . 
\end{equation}
\end{boxed}
\rebuttal{
\textbf{Understanding the isometry property:}
This property essentially says that the operator $\tax^* \tax$ preserves most variance of the target function $f^*$ in the infinite-dimensional functional space.
Thus naturally, the optimal $d$-dimensional encoder that minimizes the approximation error should keep as much variance as possible under $\tax^* \tax$.
This isometry property can be considered as a counterpart of the restricted isometry property (RIP) in infinite-dimensional functional spaces:
In \cref{sec:top-d-main}, we will show that the optimal encoder consists of the top-$d$ eigenfunctions of $\tax^* \tax$.
Analogously, for a finite-dimensional vector space, PCA finds the $d$ principal components that keep the most variance under a linear transformation $T$, and they consist of the top-$d$ eigenvectors of $T$.
}

\vspace{-.05 in}
\section{Learning Guarantees for an Arbitrary Encoder}
\vspace{-.05 in}

We now present generalization bounds for an arbitrary encoder. But before that, we need to clarify which encoder we are talking about. 
In practice, people don't directly pretrain $\hPsi$ during upstream.
Instead, the common practice is to first pretrain an encoder $\hPhi = [\hat{\phi}_1,\cdots,\hat{\phi}_d]$ on the augmented space with $\hat{\phi}_i \in \lap$, and then transform it into $\hPsi$, on top of which a linear probe is learned downstream.
For example, 
in contrastive learning, the encoder is trained on views of images instead of original images;
in masked language modeling, the encoder is trained on masked sentences instead of full sentences.
In practice, people usually apply $\hPhi$ to downstream tasks directly without transformation,
but for theoretical analysis we need to explicitly write out the transformation since $\hPsi$ and $\hPhi$ work on different spaces.
We consider the commonly used \textit{average encoder} \citep[Eqn.~(4)]{saunshi2022understanding}:
\vspace{-.05 in}
\begin{equation}
\label{eqn:avg-enc}
    \hPsi(x) = \E[\hPhi(A) | x] = \int \hPhi(a) p(a|x) da, \vspace{-.05 in}
\end{equation}

which is equivalent to $\hPsi = \tax^* \hPhi$, 
and thus $\hat{\psi}_i \in R(\tax^*) = \laph$ for all $i \in [d]$.
In this section, $\hPhi$ can be an arbitrary function, even with infinite dimensions, so that the model can have any architecture and size; whereas $\hPsi$ is always the average encoder of $\hPhi$.
Moreover, we consider the following predictor:
\begin{boxed}
\begin{defi}
\label{def:least-squares}
The final predictor is the \textbf{nonparametric least-squares estimate} defined as
\begin{equation}
\vspace{-.05 in}
\label{eqn:least-squares}
    \hat{f} := \argmin_{ f: f = w^{\top} \hat{\Psi} \in \learnh,  \|f\|_{\laph} \le \frac{B}{\sqrt{1-\epsilon}} } \set{\frac{1}{n} \sum_{k=1}^n (\tilde{y}_k - f(\tilde{x}_k))^2}  .
\end{equation}
\end{defi}
\end{boxed}

Here, note that by Eqn. (\ref{eqn:lap-equ}) and $\| f^* \|_{\px} \le B$, there is $\|f^*\|_{\laph} \le \frac{B}{\sqrt{1-\epsilon}}$.

We next introduce two key ingredients crucial to our analyses.
The first ingredient is what we term the \emph{augmentation complexity},
which uniformly bounds the kernel and is
typically required in RKHS generalization analyses; for instance, see \citet[Section~12.1.3]{scholkopf2002learning}).
\vspace{-.05 in}
\begin{boxed}
\begin{defi}
\label{def:bt}
Define the \textbf{augmentation complexity} as $\bt := \| K_X \|_\infty^{1/2}$, \ie{} for $\px$-almost all $x$,
\begin{equation*}
    \kx(x,x)= \sum_i \lambda_i \psi_i(x)^2 = \int \frac{p(a|x)^2}{ \pa (a)} da  = D_{\chi^2} (\pa(\cdot | x) \parallel \pa) + 1 \le \bt^2  . \vspace{-.05 in}
\end{equation*}
\end{defi}
\end{boxed}
Here, $D_{\chi^2}(P \parallel Q) := \int (\frac{dP}{dQ} - 1)^2 dQ$ is the $\chi^2$-divergence.
Let $S_\lambda(d) := \sum_{i=1}^d \lambda_i$ and $S_\lambda := S_\lambda(\infty) = \sum_{i=1}^\infty \lambda_i$. 
\citet{wang2022spectral} showed that $S_\lambda = D_{\chi^2}(P_{\gA |\gX} \parallel P_\gA) + 1$.
By convexity of $D_{\chi^2}$, we have $S_\lambda \le \bt^2$, 
\ie{} $\tax^* \tax$ is a trace-class operator (or $\bt^2 \ge \int K_X(x,x) d \px (x) = S_\lambda$). Thus, $\bt \ge 1$ as $\lambda_1 = 1$.
We will provide examples of this augmentation complexity in \cref{sec:bt-big}.

Our second ingredient is the \textit{trace gap} that captures the quality of the encoder.
It is based on the notion of the \textit{ratio trace}.
Given an encoder $\hPhi$,
without loss of generality, suppose it is full-rank. \vspace{-.05 in}
\begin{boxed}
\begin{defi}
\label{def:rt}
    Define covariance matrices $\mF, \mG$ as
    $\mF(i,j) = \langle \hat{\psi}_i, \hat{\psi}_j \rangle_\px = \langle \tax^* \hat{\phi}_i, \tax^* \hat{\phi}_j \rangle_\px$
    and 
    $\mG(i,j) = \langle \hat{\phi}_i, \hat{\phi}_j \rangle_\pa$. 
Then, the \textbf{ratio trace} is defined as $\Tr(\mG^{-1} \mF)$, if $\mG^{-1}$ is well-defined.
\end{defi}
\end{boxed}
Ratio trace is a classical quantity in linear discriminant analysis (LDA) \citep{wang2007trace} and, as we will show, controls the approximation error.
The largest ratio trace of any $d$-dimensional $\hat{\Phi}$ is $\lambda_1+\cdots+\lambda_d$, and can be achieved by the top-$d$ eigenspace of $\laph$.
Then, define the \textit{learned kernel} as
\begin{equation}
\label{eq:Khat}
    \learnk(x,x') = \langle \tax^* (\mG^{-1/2} \hPhi)(x), \tax^* (\mG^{-1/2} \hPhi)(x') \rangle  ,
\end{equation}
which is the reproducing kernel of $\learnh = \textnormal{span}(\hat{\psi}_1,\hat{\psi}_2,\cdots)$, a subspace of $\laph$.
Here $\mG^{-1/2}$ is used for normalization.
The ratio trace can be viewed as the trace of $\learnh$.
Then, define the \textbf{\textit{trace gap}} as: \vspace{-.05 in}
\begin{equation}
    \tau^2 := \inf_{d' \le d} \inf_{h_1,\cdots,h_{d'}} S_\lambda(d'+1) - \Tr(\mG_h^{-1} \mF_h), \vspace{-.05 in}
\end{equation}
    where $\tau \ge 0$,
    $h_i = w_i^{\top} \hat{\Phi}$,
    $\mG_h = \left ( \langle h_i, h_j \rangle_\pa \right )_{i,j \in [d']}$,
    and $\mF_h = \left ( \langle \tax^* h_i, \tax^* h_j \rangle_\px \right )_{i,j \in [d']}$.
    Note that for any $d' \le d$ there is $\Tr(\mG_h^{-1} \mF_h) \le S_{\lambda}(d')$, so $\tau^2$ is always lower bounded by $\lambda_{d+1}$.
    And by choosing $h_i = \hat{\phi}_i$ for $i \in [d]$,
    we can see that $\tau^2 \le S_\lambda(d+1) - \Tr(\mG^{-1} \mF)$.

We now state our first result.
For simplicity, we assume that the random noise $\nse_1,\cdots,\nse_n$
are \iid{} $\gN(0,\sigma^2)$ variates for some $\sigma > 0$,
though this can be relaxed \citep[Chapter~13]{wainwright2019high}. \vspace{-.05 in}
\begin{boxed}
\begin{thm}
\label{thm:main-2}
    Let $\nse_1,\cdots,\nse_n$ be \iid{} $\gN(0,\sigma^2)$ variates.
    Let $\hat{f}$ be defined by \cref{def:least-squares}.
    If $\hat{\Phi}$ has $d$ dimensions and $\tau < 1$ ($d$ can be $\infty$),
    then there are universal constants $c_0,c_1,c_2$ such that
    with probability at least $1 - c_1 \exp \left ( -\frac{c_2 \sqrt{2n S_\lambda(d+1)}}{\bt} \right ) - \exp \left ( - \sqrt{\frac{2 n \bt^2 B^2 }{1 - \epsilon}}  \right )$, there is
    \begin{equation*}
    \| \hat{f} - f^* \|_\px^2 \le \frac{9 \tau^2 (\tau + \epsilon) B^2}{(1 - \tau^2) (1 - \epsilon)} + \frac{c_0 \bt (B^2 + \sigma B)}{1-\epsilon} \sqrt{\frac{S_\lambda(d+1)}{n} }  \quad \text{for all }  f^* \in \Bdowne  .
    \end{equation*}
\end{thm}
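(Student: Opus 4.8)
The plan is to decompose the prediction error into an estimation error and an approximation error via the intermediate function $\fproj$, the $L^2(\px)$-projection of $f^*$ onto $\learnh$ (equivalently, the population least-squares predictor over the learned kernel class): by the Pythagorean identity in $\lxp$,
\[
\| \hat{f} - f^* \|_\px^2 = \| \hat{f} - \fproj \|_\px^2 + \| \fproj - f^* \|_\px^2 .
\]
I expect the first term to be controlled by a standard nonparametric-regression (localized Rademacher / critical-radius) argument over the RKHS ball $\{ f \in \learnh : \|f\|_{\laph} \le B/\sqrt{1-\epsilon}\}$, and the second term to be bounded by the trace gap $\tau^2$ using the isometry property \cref{eqn:lap-equ}.

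First I would handle the \textbf{estimation error}. The class over which $\hat{f}$ is chosen is a ball of radius $R := B/\sqrt{1-\epsilon}$ in the RKHS $\learnh \subseteq \laph$, whose kernel $\learnk$ is dominated by $\kx$ and hence has sup-norm at most $\bt^2$ (\cref{def:bt}); moreover the eigenvalues of $\learnh$ are majorized by $\lambda_1,\dots$ truncated appropriately, so that $\sum_i (\text{eigenvalues}) \le S_\lambda(d+1)$-type quantities appear. A critical-radius computation for a kernel class with trace $T$ and envelope $\bt$ gives a critical radius of order $(\bt/\sqrt n)\, T^{1/4}$ or, after the usual squaring and multiplication by $R$ and noise level, a bound of order $\bt (R^2 + \sigma R)\sqrt{T/n}$ — matching the claimed $\frac{c_0\bt(B^2+\sigma B)}{1-\epsilon}\sqrt{S_\lambda(d+1)/n}$ once $R = B/\sqrt{1-\epsilon}$ and $T = S_\lambda(d+1)$ are substituted. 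The two exponential tail terms should come out of (i) the concentration of the empirical process / Rademacher complexity around its mean (the $\exp(-c_2\sqrt{2nS_\lambda(d+1)}/\bt)$ term, from a Talagrand-type bound at the critical radius) and (ii) controlling the sup-norm of functions in the ball, $\|f\|_\infty \le \bt \|f\|_{\laph} \le \bt R$, when invoking a Bernstein/boundedness step (the $\exp(-\sqrt{2n\bt^2 B^2/(1-\epsilon)})$ term). I would invoke the machinery of \citet[Chapter~13,14]{wainwright2019high} here rather than rederive it.

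Next, the \textbf{approximation error} $\|\fproj - f^*\|_\px^2$. This is where the trace gap enters. I would show that for the best $d'$-dimensional sub-encoder $h_1,\dots,h_{d'}$ (the one realizing the infimum in the definition of $\tau^2$), the orthogonal complement of $\learnh$ captures at most $\tau^2 + (\text{small }\epsilon\text{-terms})$ of $\|f^*\|_{\laph}^2$. Concretely: write $f^* = \sum_i u_i\psi_i$; the component of $f^*$ lost by projecting onto $\learnh$ is controlled by how much the ratio trace $\Tr(\mG_h^{-1}\mF_h)$ falls short of $S_\lambda(d'+1)$, since $S_\lambda(d'+1) - \Tr(\mG_h^{-1}\mF_h) = \tau^2$ measures exactly the "missing variance" of $\tax^*\tax$ relative to its top-$(d'+1)$ eigenspace. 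Using \cref{eqn:ass-lap}/\cref{eqn:new-cond} — which says $\sum_i \frac{1-\lambda_i}{\lambda_i}u_i^2 \le \epsilon\sum_i\frac1{\lambda_i}u_i^2$, i.e. $f^*$ is concentrated on large-$\lambda_i$ directions — together with $\|f^*\|_{\laph}^2 \le B^2/(1-\epsilon)$, I expect to get $\|\fproj - f^*\|_\px^2 \lesssim \frac{\tau^2(\tau+\epsilon)B^2}{(1-\tau^2)(1-\epsilon)}$, which is the first term of the theorem. I anticipate this will need the duality \cref{prop:dual-2} to move between the $\phi$-side (where $\hPhi$, $\mG$, $\mF$ live) and the $\psi$-side (where $f^*$, $\fproj$, $\learnh$ live), and the isometry \cref{eqn:lap-equ} to convert $\laph$-norms into $\px$-norms. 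This lemma is presumably \cref{lem:f_diff}/\cref{thm:main-3} referenced in \cref{fig:teaser}.

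The \textbf{main obstacle} I foresee is the approximation-error bound: relating the abstract ratio-trace quantity $\tau^2$ to the actual $L^2$ loss $\|\fproj - f^*\|_\px^2$ requires a careful eigenvalue-interlacing / variational argument (a Courant–Fischer-type statement for $\Tr(\mG_h^{-1}\mF_h)$ versus partial sums of $\lambda_i$) and then a delicate tracking of the $\epsilon$-slackness from \cref{eqn:new-cond} — the appearance of the product $\tau^2(\tau+\epsilon)$ rather than just $\tau^2$ suggests a second-order cancellation that must be extracted, and the $1/(1-\tau^2)$ factor indicates a normalization/renormalization step (projecting $f^*$ may shrink its $\laph$-norm, and one must re-expand to the full ball) that has to be done precisely. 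The estimation-error half is, by contrast, a fairly standard application of existing RKHS least-squares theory once the envelope $\bt$ and the trace $S_\lambda(d+1)$ are identified as the governing parameters.
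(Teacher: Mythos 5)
Your overall skeleton---split $\|\hat{f}-f^*\|_\px^2$ through an intermediate projection $\fproj\in\learnh$, control the estimation part with Wainwright's localized RKHS machinery (envelope $\bt$, trace $S_\lambda(d+1)$), and control the approximation part by a Fan/Courant--Fischer-type trace argument combined with the isometry property---is the same as the paper's (\cref{thm:main} plus \cref{lem:f_diff}). However, two concrete steps in your estimation half do not go through as written. First, you take $\fproj$ to be the $L^2(\px)$-projection of $f^*$ onto $\learnh$ and invoke an exact Pythagorean identity. The paper instead takes $\fproj=\tax^*(\Pi_{\hPhi}g^*)$, the projection with respect to $\langle\cdot,\cdot\rangle_\laph$: since $\|\Pi_{\hPhi}g^*\|_\pa\le\|g^*\|_\pa$, this choice guarantees $\|\fproj\|_\laph\le B/\sqrt{1-\epsilon}$ and $|\fproj(x)|\le \bt B/\sqrt{1-\epsilon}$, i.e.\ that $\fproj$ is a feasible, uniformly bounded comparison point for the constrained least-squares problem of \cref{def:least-squares}. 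The $L^2(\px)$-projection does not contract the $\laph$-norm, so it may violate the constraint and cannot serve as the oracle function in the localized analysis. Second, your claim that $\|\hat{f}-\fproj\|_\px^2$ is bounded by the critical radius alone is too strong in this misspecified setting: the residual $f^*-\fproj$ couples with the noise and the empirical process in the basic least-squares inequality, so what one actually gets is an empirical-norm oracle inequality with a constant strictly larger than one, $\|\hat{f}-f^*\|_n^2\le 3\|\fproj-f^*\|_n^2+O(\delta_n^2)$ (Wainwright, Thm.~13.13), which must then be transferred to the population norm by a uniform empirical/population norm equivalence over the class $\gF_*$ containing $\pm f^*$ (Thm.~14.1). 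That transfer is the source of the first exponential tail term and of the factor $9$, and it is also what makes the bound hold uniformly over all $f^*\in\Bdowne$---a uniformity your plan never addresses; your exact-Pythagorean route would effectively claim a constant $1$ in front of the approximation error, which the paper's remark notes is impossible for a uniform deviation bound.

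On the approximation half you correctly anticipate the needed ingredients (a Fan-type trace bound relating $\Tr(\mG_h^{-1}\mF_h)$ to partial sums of the $\lambda_i$, duality, and the isometry property), but the argument is left undone and it is the substantive part: the paper decomposes $g^*=g_0+\beta g_1$ with $g_0$ the $\lap$-projection onto the optimal $h_1,\dots,h_{d'}$ and $\|g_1\|_\pa=1$, shows $\|\tax^* g_1\|_\px^2\le\tau^2$ via the trace/Abel-summation argument (\cref{prop:lda}), and then the isometry property gives $(1-\tau^2)\beta^2\le(\tau+\epsilon)(\alpha^2+\beta^2)\le(\tau+\epsilon)B^2/(1-\epsilon)$, whence $\|\fproj-f^*\|_\px^2\le\beta^2\tau^2$ produces exactly the first term of the theorem. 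As it stands, your proposal identifies the paper's strategy correctly but has a flawed execution plan for the estimation step and no execution of the approximation step.
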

\end{boxed}
\begin{remark}
The first term in the bound controls the approximation error entailed by the limited capacity of the $d$-dimensional encoder $\hat{\Phi}$.
This term may not vanish as $N, n \rightarrow \infty$, since $\tau^2$ is lower bounded by $\lambda_{d+1}$ which can be positive.
For instance, if $d$ is too small for $\hat{\Phi}$ to represent $f^*$, then the approximation error cannot be zero regardless of the number of samples available.
Nevertheless, we can show that with a proper pretraining algorithm, the trace gap $\tau^2$ can be very close to $\lambda_{d+1}$ as $N \rightarrow \infty$ (\cref{thm:main-3}).
A larger ratio trace leads to a smaller approximation error, and indeed many existing objectives are strongly connected to maximizing $\Tr(\mG^{-1} \mF)$ (see Appendix \ref{app:objectives}).
The second term bounds the downstream estimation error, which vanishes as $n \rightarrow \infty$.
Finally, this result requires full access to $p(a|x)$ for any $x$,
which is available in theory since the augmentation scheme is of our choice.
Practical considerations for estimating $p(a|x)$ is outside the scope of this work. 
\end{remark}

\ding{226} \textbf{Proof Sketch of Theorem \ref{thm:main-2}:}
This result follows from two bounds:
\begin{lem}[Estimation error bound]
\label{thm:main}
    Suppose $\nse_1,\cdots,\nse_n$ are \iid{} $\gN(0,\sigma^2)$ variates.
    If $\hat{\Phi}$ has $d$ dimensions ($d$ can be $\infty$),
    then we have the following uniform bound
    over all $f^* = \tax^* g^* \in \Bdowne$: \vspace{-.05 in}
    \begin{equation*}
    \begin{aligned}
        & \underset{\tilde{x}_i,\nse_i}{\sP} \left [ \forall f^* \in \Bdowne, \| \hat{f} - f^* \|_\px^2 \le  9 \|  \fproj - f^* \|_\px^2 + \frac{c_0 \bt (B^2 + \sigma B)}{1-\epsilon} \sqrt{\frac{S_\lambda(d+1)}{n} } \right ] \\ 
        \ge \; & 1 - c_1 \exp \left ( -\frac{c_2 \sqrt{2n S_\lambda(d+1)}}{\bt} \right ) - \exp \left ( - \sqrt{\frac{2 n \bt^2 B^2 }{1 - \epsilon}}  \right )  , \vspace{-.05 in}
    \end{aligned}
    \end{equation*}
    where $\fproj = \tax^*(\Pi_{\hPhi} g^*)$ is the projection of $f^*$ onto $\learnh$ \wrt{} $\langle \cdot, \cdot \rangle_\laph$,
    and $c_0,c_1, c_2$ are universal constants.
    Note that $\hat{f}$ depends on $f^*$.
    Moreover, $S_\lambda(d+1) \le \min \set{d+1, \bt^2}$.
\end{lem}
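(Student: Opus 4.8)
\textbf{Proof proposal for \cref{thm:main}.}
The plan is to run a constrained least-squares oracle inequality in the style of \citet[Chapter~13]{wainwright2019high}, over the \emph{fixed} $d$-dimensional ball $\gB := \sset{f = w^{\top}\hPsi \in \learnh}{\|f\|_\laph \le B/\sqrt{1-\epsilon}}$, and to verify that every quantity appearing in the localized complexity is controlled by (a) the trace of the learned kernel, $\Tr(\mG^{-1}\mF) \le S_\lambda(d) \le S_\lambda(d+1)$, and (b) the uniform kernel bound $\bt$ via the reproducing property $|f(x)| \le \|f\|_\laph\sqrt{\kx(x,x)} \le \bt\|f\|_\laph$, which holds for every $f\in\laph$ and hence for every $f\in\gB$. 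Before the probabilistic part, I would record two deterministic facts. First, \emph{feasibility of the oracle}: since $\fproj$ is the $\langle\cdot,\cdot\rangle_\laph$-orthogonal projection of $f^*$ onto the closed subspace $\learnh$, projections do not increase norm, so $\|\fproj\|_\laph \le \|f^*\|_\laph \le B/\sqrt{1-\epsilon}$, the last step by the isometry property \cref{eqn:lap-equ} with $\|f^*\|_\px \le B$; hence $\fproj\in\gB$ is a competitor in \cref{eqn:least-squares}. Second, the identity $\fproj = \tax^*(\Pi_{\hPhi}g^*)$: taking $g^* = g_0\in\sspan(\set{\phi_i})$ as in item (iv) of \cref{sec:induced-rkhs}, the isometry $\laph \cong \sspan(\set{\phi_i})$ sends $\learnh = \tax^*\sspan(\set{\hat\phi_i})$ to $\sspan(\set{\hat\phi_i})$, so $\laph$-projection onto $\learnh$ corresponds to the $\lap$-projection $\Pi_{\hPhi}$ of $g^*$, followed by $\tax^*$.

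Next, the \emph{basic inequality}: optimality of $\hat f$ and feasibility of $\fproj$ give $\tfrac1n\sum_k(\tilde y_k - \hat f(\tilde x_k))^2 \le \tfrac1n\sum_k(\tilde y_k - \fproj(\tilde x_k))^2$; substituting $\tilde y_k = f^*(\tilde x_k) + \nse_k$ and rearranging yields
\[
\|\hat f - f^*\|_n^2 \;\le\; \|\fproj - f^*\|_n^2 \;+\; \frac2n\sum_{k=1}^n \nse_k\,(\hat f - \fproj)(\tilde x_k),
\]
where $\|\cdot\|_n$ is the empirical $L^2$ norm. The crucial observation for the uniformity over $f^*\in\Bdowne$ is that although $\hat f$ and $\fproj$ both depend on $f^*$, their difference lies in the \emph{$f^*$-independent} set $\gB - \gB \subseteq \learnh\cap\sset{g}{\|g\|_\laph \le 2B/\sqrt{1-\epsilon}}$, so all the suprema below are taken over one fixed class.

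Then I would invoke two localized uniform bounds over this fixed $d$-dimensional RKHS ball. (i) A \emph{multiplier/Gaussian-process} bound: on an event of probability at least $1 - c_1\exp(-c_2\sqrt{2nS_\lambda(d+1)}/\bt)$, every $g$ in the ball satisfies $\bigl|\tfrac2n\sum_k\nse_k g(\tilde x_k)\bigr| \le \tfrac18\|g\|_\px^2 + c\,\sigma\,\bt\,\tfrac{B}{\sqrt{1-\epsilon}}\sqrt{S_\lambda(d+1)/n}$; here the critical radius is computed from the kernel complexity $\bigl(\tfrac1n\sum_i\min(\delta^2,\mu_i)\bigr)^{1/2} \le \bigl(\tfrac1n\min(d\,\delta^2,\,\Tr\learnk)\bigr)^{1/2}$ with $\Tr\learnk \le S_\lambda(d+1)$, the sup-norm bound $\|g\|_\infty\le\bt\|g\|_\laph$ supplying the sub-Gaussian increments that feed into a Dudley/discretization argument. (ii) An \emph{empirical-versus-population norm equivalence}: on an event of probability at least $1 - \exp(-\sqrt{2n\bt^2 B^2/(1-\epsilon)})$, for every $g$ in the ball $\bigl|\|g\|_n^2 - \|g\|_\px^2\bigr| \le \tfrac12\|g\|_\px^2 + c'\,\bt\,\tfrac{B^2}{1-\epsilon}\sqrt{S_\lambda(d+1)/n}$, again via a Bernstein-type bound exploiting the same bounded-kernel control. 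Combining these with the basic inequality, bounding $\|\hat f - \fproj\|_\px \le \|\hat f - f^*\|_\px + \|\fproj - f^*\|_\px$ by the triangle inequality, converting all empirical norms to population norms, and solving the resulting quadratic inequality in $\|\hat f - f^*\|_\px$ with the numerical constants chosen so the leading coefficient is $9$, yields $\|\hat f - f^*\|_\px^2 \le 9\|\fproj - f^*\|_\px^2 + \tfrac{c_0\bt(B^2+\sigma B)}{1-\epsilon}\sqrt{S_\lambda(d+1)/n}$ after a union bound over the two events. The trailing inequality $S_\lambda(d+1)\le\min\set{d+1,\bt^2}$ is immediate: $\lambda_i\le1$ gives $\le d+1$, and $S_\lambda(d+1)\le S_\lambda\le\bt^2$ was already established from convexity of $D_{\chi^2}$.

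\textbf{Main obstacle.} The delicate step is the localization: getting the critical radius to scale as $\bt\sqrt{S_\lambda(d+1)/n}$ times the radius $B/\sqrt{1-\epsilon}$ rather than as something growing with $d$ or with $\|\learnk\|_\infty$. This requires carefully using that it is the \emph{trace} of $\learnk$ (equivalently the ratio trace $\Tr(\mG^{-1}\mF)$), not the dimension, that governs the kernel complexity integral, while $\bt$ enters only through increment control and the empirical-to-population conversion. One must also keep the slack constants ($\tfrac18$- and $\tfrac12$-type terms) bookkept so they combine to exactly $9$, and confirm that the $\laph$-ball constraint in \cref{eqn:least-squares} (not the $\learnh$-native norm) is what is used throughout, so that the sup-norm bound $\|g\|_\infty\le\bt\|g\|_\laph$ applies verbatim to the function class.
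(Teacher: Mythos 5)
Your plan is, at its core, the same argument as the paper's: a constrained least-squares oracle inequality in the empirical norm (the basic inequality plus a localized control of the noise multiplier term, i.e.\ \citet[Theorem~13.13]{wainwright2019high} with the local Gaussian complexity bounded through \citet[Lemma~13.22]{wainwright2019high}), combined with a localized-Rademacher empirical-to-population norm equivalence (\citet[Theorem~14.1, Corollary~14.5]{wainwright2019high}), and the same three deterministic ingredients: feasibility of $\fproj$ as a competitor in \cref{eqn:least-squares} via \cref{eqn:lap-equ}, the uniform bound $|f(x)|\le \bt\,\|f\|_{\laph}$, and an Abel-summation/Fan-type trace bound giving $S_\lambda(d+1)$.

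There is, however, one step that fails as written. Your ``crucial observation'' that all suprema run over the $f^*$-independent class $\gB-\gB\subseteq\learnh$ is correct only for the multiplier term, which involves $\hat f-\fproj\in\learnh$. The empirical norms you must then convert to population norms are $\|\hat f-f^*\|_n$ and $\|\fproj-f^*\|_n$, and since $f^*$ generically lies outside $\learnh$ (that is precisely why the approximation error $\|\fproj-f^*\|_\px$ is nonzero), neither function belongs to $\gB-\gB$; a norm-equivalence event stated uniformly over that ball simply does not apply to them. The paper's fix is to run the norm equivalence over the $f^*$-dependent class $\gF_*=\sset{f_1+\alpha f^*}{\alpha\in[-1,1],\ f_1\in\learnh,\ \|f_1\|_{\laph}\le B/\sqrt{1-\epsilon}}$, which sits inside the rank-$(d+1)$ subspace $\sspan(\hPsi, f^*)$ of $\laph$; the trace lemma then bounds its eigenvalue sum by $S_\lambda(d+1)$, which is exactly where the ``$+1$'' comes from --- not merely from $\Tr\learnk\le S_\lambda(d)\le S_\lambda(d+1)$ as your sketch suggests. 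With this enlargement (and the sup-norm bound $2\bt B/\sqrt{1-\epsilon}$ on $\gF_*$ from Corollary~\ref{cor:con-1}) your argument goes through and reproduces the paper's bound; a minor bookkeeping remark is that you attached the two failure probabilities to the opposite events relative to the paper (the $c_1\exp(-c_2\sqrt{2nS_\lambda(d+1)}/\bt)$ term arises there from the norm equivalence over $\gF_*$, and the $\exp(-\sqrt{2n\bt^2B^2/(1-\epsilon)})$ term from the least-squares/multiplier step), which does not affect the final union bound.
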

\begin{remark}
The constant $9$ in the bound is loose
and can be tightened arbitrarily close to 1 (at the cost of increasing other terms in the bound) by straightforward modifications to the proof, which we omit here.
Such a greater-than-one constant is inevitable for any uniform deviation bound.
\end{remark}

\begin{lem}[Approximation error, upper bound]
\label{lem:f_diff}
   If $\tau < 1$, then for any $f^* \in \Bdowne$, there is \vspace{-.05 in}
    \begin{equation}
    \label{eqn:f_diff}
       \|\fproj-f^*\|_\px^2 \le   \frac{\tau^2 }{1 - \tau^2} \frac{\tau + \epsilon}{1-\epsilon} B^2 .
    \end{equation}
\end{lem}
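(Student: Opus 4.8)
The plan is to move everything onto the augmented space and bound the error using two structural facts about the target: the soft-invariance inequality \cref{eqn:new-cond}, which forces $f^*$ (through a witness $g^*$) to live almost entirely on the large-eigenvalue modes of $\tax\tax^*$, and the trace gap, which says the encoder's span nearly contains enough of those modes.

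\emph{Reduction.} Take the canonical witness $g^* = g_0 = \sum_i \lambda_i^{-1/2} u_i \phi_i$ of property (iv), so that $\|g^*\|_\pa^2 = \|f^*\|_\laph^2 \le B^2/(1-\epsilon)$ by \cref{eqn:lap-equ}, and, writing $g^* = \sum_i c_i \phi_i$, the soft-invariance condition reads $\sum_i (1-\lambda_i) c_i^2 \le \epsilon \|g^*\|_\pa^2$. Since $\fproj = \tax^*(\Pi_{\hPhi} g^*)$ with $\Pi_{\hPhi}$ the $\lap$-orthogonal projection onto $W := \sspan(\hat\phi_1,\dots,\hat\phi_d)$, the residual $r := (I - \Pi_{\hPhi}) g^*$ satisfies $f^* - \fproj = \tax^* r$, hence $\|\fproj - f^*\|_\px^2 = \langle \tax\tax^* r, r \rangle_\pa$. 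Now choose $d'$ and a $d'$-dimensional subspace $V \subseteq W$ that (nearly) attain the infimum defining $\tau^2$; after reparametrizing $\hat\Phi$ to a $\pa$-orthonormal basis of $V$ the ratio trace becomes the trace of the compression of $\tax\tax^*$ to $V$, so $\Tr(\tax\tax^*|_V) \ge S_\lambda(d'+1) - \tau^2$, and moreover $\lambda_{d'+1} \le \tau^2$ and $r \perp_\pa V$.

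\emph{The key gap-free estimate.} For any nonzero $u \perp_\pa V$, adjoin $u/\|u\|_\pa$ to an orthonormal basis of $V$ and apply Ky Fan's variational principle to the resulting $(d'+1)$-dimensional subspace: $\langle \tax\tax^* u, u \rangle_\pa / \|u\|_\pa^2 \le S_\lambda(d'+1) - \Tr(\tax\tax^*|_V) \le \tau^2$. This is precisely where the ``$+1$'' in $S_\lambda(d'+1)$ is used, and it involves no spectral gap. Taking $u = r$ gives $\|\fproj - f^*\|_\px^2 = \langle \tax\tax^* r, r \rangle_\pa \le \tau^2 \|r\|_\pa^2$ and, equivalently, $\|r\|_\pa^2 \le (1-\tau^2)^{-1} \langle (I - \tax\tax^*) r, r \rangle_\pa$.

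\emph{Bounding the residual norm.} It now suffices to prove $\langle (I - \tax\tax^*) r, r \rangle_\pa \le (\tau + \epsilon) \|g^*\|_\pa^2$, since this yields $\|\fproj - f^*\|_\px^2 \le \frac{\tau^2}{1-\tau^2} \cdot \frac{\tau + \epsilon}{1-\epsilon} B^2$. Writing $r = g^* - \Pi_{\hPhi} g^*$ and using $I - \tax\tax^* \preceq I$, the triangle inequality gives $\|(I-\tax\tax^*)^{1/2} r\|_\pa \le \|(I-\tax\tax^*)^{1/2} g^*\|_\pa + \|(I-\tax\tax^*)^{1/2} \Pi_{\hPhi} g^*\|_\pa$, where the first term is $(\sum_i (1-\lambda_i) c_i^2)^{1/2} \le \sqrt\epsilon\, \|g^*\|_\pa$ by soft invariance. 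The second term is the delicate one: $\Pi_{\hPhi} g^*$ lies in $W$, which by the trace gap is a near-invariant subspace of $\tax\tax^*$ on which $\tax\tax^*$ is close to the identity, while $g^*$ — and hence its projection — avoids the small-eigenvalue modes by soft invariance, so this term is controlled by the trace gap and is what contributes the extra $\tau$ (rather than only $\epsilon$). Squaring, combining with $\langle \tax\tax^* r, r \rangle_\pa \le \tau^2 \|r\|_\pa^2$ and $\|g^*\|_\pa^2 \le B^2/(1-\epsilon)$, finishes the proof.

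\emph{Main obstacle.} The crux is controlling $\|(I-\tax\tax^*)^{1/2} \Pi_{\hPhi} g^*\|_\pa$ — equivalently the off-diagonal coupling $\langle \tax\tax^* \Pi_{\hPhi} g^*, (I - \Pi_{\hPhi}) g^* \rangle_\pa$ — by the trace gap without paying a spectral-gap factor $1/(\lambda_{d'} - \lambda_{d'+1})$; this is exactly what forces one to exploit, simultaneously, that $g^*$ is concentrated on the large-eigenvalue modes and that $V$ (hence $W$) is near-optimal, and it is the source of the $\tau + \epsilon$ (rather than $\epsilon$) in the numerator. Everything else is the bookkeeping of the two preceding steps together with the soft-invariance bound.
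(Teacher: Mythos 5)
Your reduction (pass to the witness $g^*$ with $\|g^*\|_\pa^2\le B^2/(1-\epsilon)$, write $f^*-\fproj=\tax^* r$ with $r=(I-\Pi_{\hPhi})g^*$) and your ``gap-free'' Ky Fan step ($\|\tax^* u\|_\px^2\le\tau^2\|u\|_\pa^2$ for $u\perp_\pa V$, where $V$ nearly attains the trace-gap infimum) are exactly the paper's Proposition on compressions of $\tax\tax^*$ (its Abel-summation lemma), and your final bookkeeping is right. But the whole content of the lemma is the residual estimate $(1-\tau^2)\|r\|_\pa^2\le(\tau+\epsilon)\|g^*\|_\pa^2$, and you do not prove it: you reduce it to controlling $\|(I-\tax\tax^*)^{1/2}\Pi_{\hPhi}g^*\|_\pa$ and then explicitly flag that control as the ``main obstacle.'' That is a genuine gap, not bookkeeping.

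Moreover, the route you sketch cannot deliver the stated constant. Splitting $r=g^*-\Pi_{\hPhi}g^*$ and applying the triangle inequality to $(I-\tax\tax^*)^{1/2}$ double-counts the low-eigenvalue mass of $g^*$ that lies \emph{inside} $\sspan(\hPhi)$ and therefore cancels in $r$. Concretely, take $d=2$, $\hat\phi_1=\phi_1$, $\hat\phi_2=\phi_k$ with $\lambda_k\approx0$: choosing $d'=1$, $h_1=\phi_1$ gives $\tau^2\le\lambda_2$, which can be arbitrarily small, yet if $g^*$ spends essentially all of its soft-invariance budget on $\phi_k$ (allowed, since $(1-\lambda_k)c_k^2\le\epsilon\|g^*\|_\pa^2$ is the only constraint on that coordinate), then $\|(I-\tax\tax^*)^{1/2}\Pi_{\hPhi}g^*\|_\pa\approx\sqrt{\epsilon}\,\|g^*\|_\pa$ even as $\tau\to0$; squaring your triangle inequality then yields at best $\langle(I-\tax\tax^*)r,r\rangle_\pa\lesssim4\epsilon\|g^*\|_\pa^2$, which exceeds the needed $(\tau+\epsilon)\|g^*\|_\pa^2$ whenever $\tau<3\epsilon$. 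The paper never touches that term. It instead applies the isometry lower bound $(1-\epsilon)\|g^*\|_\pa^2\le\|f^*\|_\px^2$ and expands $\|f^*\|_\px^2=\|\tax^*(\Pi_{\hPhi} g^*)+\tax^* r\|_\px^2\le\bigl(\|\Pi_{\hPhi}g^*\|_\pa+\tau\|r\|_\pa\bigr)^2$ using Jensen ($\|\tax^* g\|_\px\le\|g\|_\pa$) together with your own Ky Fan estimate for the $r$-part; the cross term is absorbed via $2\alpha\beta\tau\le\tau(\alpha^2+\beta^2)$, giving $(1-\tau^2)\|r\|_\pa^2\le(\tau+\epsilon)(\|\Pi_{\hPhi}g^*\|_\pa^2+\|r\|_\pa^2)\le(\tau+\epsilon)\tfrac{B^2}{1-\epsilon}$, and then $\|\fproj-f^*\|_\px^2=\|\tax^* r\|_\px^2\le\tau^2\|r\|_\pa^2$ finishes. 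So the missing step is not a technicality you can defer: it requires replacing your triangle-inequality scheme by this isometry-plus-Cauchy--Schwarz expansion.
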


\vspace{-.1 in}
\section{Analyses for the Near-Optimal Encoder}
\vspace{-.05 in}
\label{sec:top-d-main}

In this section, we consider the special case where $\hat{\Psi}$ is near-optimal, \ie{} the Monte-Carlo approximation of the optimal $d$-dimensional encoder that minimizes the worst-case approximation error.
First, we show that the optimal encoder spans the \textit{top-$d$ eigenspace}, which is the linear span of the top-$d$ eigenfunctions $\psi_1,\cdots, \psi_d$.
Define the \textit{worst-case approximation error} over $\Bdowne$ as: \vspace{-.05 in}
\begin{equation*}
    \err(\hPsi; \Bdowne) := \sup_{f \in \Bdowne} \min_{w \in \R^d} \err(w^{\top} \hPsi, f) = \sup_{f \in \Bdowne} \min_{w \in \R^d} \| w^{\top} \hPsi - f \|_\px^2   .
\end{equation*}
\begin{prop}[Approximation error, lower bound]
\label{prop:reg}
   For any $\hPsi=[\hat{\psi}_1,\cdots,\hat{\psi}_d]$ where $\hat{\psi}_i \in \lxp$, 
    \begin{equation*}
     \err(\hPsi; \Bdowne) \ge \frac{\lambda_{d+1}}{1 - \lambda_{d+1}} \frac{\epsilon}{1 - \epsilon}  B^2 \quad \text{given that} \quad  \frac{\lambda_{d+1}}{1 - \lambda_{d+1}}  \frac{\epsilon}{1 - \epsilon} \le \frac{1}{2}  .
    \end{equation*}
    To attain equality, it is sufficient for $\hPsi$ to span the top-$d$ eigenspace, and also necessary if $\lambda_{d+1} < \lambda_d$. \vspace{-.25 in}
\end{prop}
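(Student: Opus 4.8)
The plan is to prove three things in order: the lower bound $\err(\hPsi;\Bdowne)\ge\delta^2B^2$ with $\delta^2:=\frac{\lambda_{d+1}}{1-\lambda_{d+1}}\frac{\epsilon}{1-\epsilon}$, valid for every $d$-dimensional $\hPsi$; that the top-$d$ eigenspace attains equality; and that under $\lambda_{d+1}<\lambda_d$ it is the unique minimizer. Write $V:=\sspan(\hat{\psi}_1,\dots,\hat{\psi}_d)\subseteq\lxp$ (so $\dim V\le d$) and let $P_V$ be the $\lxp$-orthogonal projection onto $V$, so that $\err(\hPsi;\Bdowne)=\sup_{f\in\Bdowne}\|f-P_Vf\|_\px^2$; recall from the isometry characterization (\cref{eqn:new-cond,eqn:lap-equ}) that $f=\sum_iu_i\psi_i$ lies in $\Bdowne$ iff $\sum_iu_i^2\le B^2$ and $\sum_iu_i^2\frac{1-\epsilon-\lambda_i}{\lambda_i}\le0$. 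A one-line computation shows the hypothesis $\delta^2\le\tfrac12$ forces $\lambda_{d+1}<1-\epsilon$ (otherwise $\delta^2\ge\lambda_{d+1}/(1-\epsilon)\ge1$); I use this throughout.

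For the \emph{lower bound}, let $W:=\sspan(\psi_1,\dots,\psi_{d+1})$ and note that for $f\in W$ and $v\in V$ one has $\langle f,v\rangle_\px=\langle f,P_Wv\rangle_\px$ with $\|P_Wv\|_\px\le\|v\|_\px$, hence $\|P_Vf\|_\px\le\|P_{\tilde V}f\|_\px$ where $\tilde V:=P_W(V)$ has dimension $\le d$; so $\|f-P_Vf\|_\px\ge\|f-P_{\tilde V}f\|_\px$ and it suffices to test with $f\in W$. For any unit $g=\sum_{i\ge2}g_i\psi_i$ supported on $\psi_2,\dots,\psi_{d+1}$, put $f_g:=B\sqrt{1-\delta^2}\,\psi_1+B\delta\,g$; using $\lambda_1=1$ and $\sum_{i\ge2}\lambda_i^{-1}g_i^2\le\lambda_{d+1}^{-1}$ one checks $\|f_g\|_\px^2=B^2$ and $\|f_g\|_\laph^2\le B^2\bigl(1+\delta^2(\lambda_{d+1}^{-1}-1)\bigr)=B^2/(1-\epsilon)$, so $f_g\in\Bdowne$. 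Choose a unit $n\in W$ with $n\perp\tilde V$ (possible since $\dim\tilde V\le d<\dim W$), split $n=n_1\psi_1+n'$ with $n'\perp\psi_1$, and take $g:=\sign(n_1)\,n'/\|n'\|_\px$ (any unit $g$ if $n'=0$); then $\|f_g-P_{\tilde V}f_g\|_\px^2\ge\langle f_g,n\rangle_\px^2=B^2\bigl(\sqrt{1-\delta^2}\,|n_1|+\delta\sqrt{1-n_1^2}\bigr)^2$. Since $x\mapsto\sqrt{1-\delta^2}\,x+\delta\sqrt{1-x^2}$ on $[0,1]$ has minimum $\min(\delta,\sqrt{1-\delta^2})$, which equals $\delta$ exactly when $\delta^2\le\tfrac12$, we conclude $\err(\hPsi;\Bdowne)\ge\delta^2B^2$.

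For \emph{sufficiency}, if $V=\sspan(\psi_1,\dots,\psi_d)$ then $\|f-P_Vf\|_\px^2=\sum_{i>d}u_i^2$, and in the constraint $\sum_iu_i^2\frac{1-\epsilon-\lambda_i}{\lambda_i}\le0$ the coefficient is $\ge\frac{1-\epsilon-\lambda_{d+1}}{\lambda_{d+1}}>0$ for $i>d$ and $\ge-\epsilon$ for $i\le d$, so $\frac{1-\epsilon-\lambda_{d+1}}{\lambda_{d+1}}\sum_{i>d}u_i^2\le\epsilon\bigl(B^2-\sum_{i>d}u_i^2\bigr)$, which (via $1-\epsilon-\lambda_{d+1}+\epsilon\lambda_{d+1}=(1-\epsilon)(1-\lambda_{d+1})$) rearranges to $\sum_{i>d}u_i^2\le\delta^2B^2$; with the lower bound, equality holds. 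For \emph{necessity} when $\lambda_{d+1}<\lambda_d$, I trace equality cases: tightness of the scalar minimum forces $n_1=0$, i.e. $\psi_1\in\tilde V$; if moreover $\tilde V\ne\sspan(\psi_1,\dots,\psi_d)$ then $\tilde V$ admits a normal $n\in\sspan(\psi_2,\dots,\psi_{d+1})$ with $n\ne\pm\psi_{d+1}$, so $\|n\|_\laph^2<\lambda_{d+1}^{-1}$ strictly — here the gap enters, since $n$ must load some $\psi_i$ with $\lambda_i\ge\lambda_d>\lambda_{d+1}$ — and then $f:=B\sqrt{1-t}\,\psi_1+B\sqrt{t}\,n$ with $t:=\min\!\bigl(1,\tfrac{\epsilon}{(1-\epsilon)(\|n\|_\laph^2-1)}\bigr)>\delta^2$ lies in $\Bdowne$ and has $\|f-P_{\tilde V}f\|_\px^2\ge\langle f,n\rangle_\px^2=tB^2>\delta^2B^2$, a contradiction; a parallel argument (a component of $V$ in $W^{\perp}$ only inflates norms of test vectors and hence the error on functions supported in $W$) rules out $V\ne\tilde V$.

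I expect the main obstacle to be twofold: spotting the reduction to the top-$(d+1)$ eigenspace together with the one-parameter family $f_g$, which turns the infinite-dimensional minimax into the scalar inequality governed exactly by $\delta^2\le\tfrac12$; and the equality analysis for necessity, where the case bookkeeping around how $V$ sits relative to $W$ — and the essential use of $\lambda_{d+1}<\lambda_d$ — is the delicate part.
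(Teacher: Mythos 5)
Your lower bound and your sufficiency argument are correct, and they are essentially the paper's own proof in a cleaner packaging: the paper likewise works inside the top-$(d+1)$ eigenspace, picks a unit $f_1$ there orthogonal to $\hPsi$, tests with $B(\beta_1\psi_1+\beta_2 f_0)$ where $\beta_2^2=\delta^2$, and invokes the same ``minimum at the endpoints'' scalar fact under $\delta^2\le\tfrac12$; your $\tilde V=P_W(V)$ reduction and the choice $g\propto n'$ reproduce exactly that construction (your $n$ is the paper's $f_1$, your $n_1$ its $\alpha_1$), and your sufficiency computation coincides with the paper's line by line. Two shared, fixable quibbles: at $\delta^2=\tfrac12$ the scalar minimum is also attained at $|n_1|=1$, so ``tightness forces $n_1=0$'' needs the extra one-line test $f=B\psi_1$; and when $\lambda_{d+1}=0$ your $\|n\|_{\laph}$ can be infinite, so that corner needs separate wording. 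The paper is equally terse on both points.

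The genuine gap is the last sentence of your necessity argument. Having forced $\tilde V=\sspan(\psi_1,\dots,\psi_d)$ (which is exactly what the paper's ``$\alpha_1=0$ and $f_0=\psi_{d+1}$'' amounts to), you still must upgrade this to $V=\tilde V$, and the parenthetical ``a component of $V$ in $W^{\perp}$ only inflates norms of test vectors and hence the error on functions supported in $W$'' does not do it: that observation only yields $\|f-P_Vf\|_\px\ge\|f-P_{\tilde V}f\|_\px$ for $f$ supported in $W$, i.e.\ it re-derives $\err(\hPsi;\Bdowne)\ge\delta^2B^2$, a non-strict inequality, whereas excluding $V\ne\tilde V$ at equality requires a strict increase for some admissible $f$. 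In fact this step cannot be repaired for $d\ge2$: take $d=2$, $\lambda_1=1,\lambda_2=0.8,\lambda_3=0.5,\lambda_4=0.1$ (all others $0$), $\epsilon=0.1$, $B=1$, and $\hPsi=[\psi_1,(\psi_2+c\psi_4)/\sqrt{1+c^2}]$ with $c=0.1$. Writing $s_i:=\frac{\epsilon}{1-\epsilon}\frac{\lambda_i}{1-\lambda_i}$, every $f=\sum_iu_i\psi_i\in\Bdowne$ satisfies $\sum_{i\ge2}u_i^2/s_i\le B^2$ (combine the two membership constraints using $\lambda_1=1$), and its squared distance to $\sspan(\hPsi)$ equals $u_3^2+\frac{(cu_2-u_4)^2}{1+c^2}$, whose maximum over that ellipsoid is $B^2\max\bigl\{s_3,\frac{c^2s_2+s_4}{1+c^2}\bigr\}=s_3B^2=\delta^2B^2$ once $c^2<\frac{s_3-s_4}{s_2-s_3}$; the value $\delta^2B^2$ is attained by $f=B(\sqrt{1-s_3}\,\psi_1+\sqrt{s_3}\,\psi_3)$. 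So equality holds although $\hPsi$ does not span the top-$2$ eigenspace and $\lambda_3<\lambda_2$. The mechanism is that admissible targets can load at most $s_jB^2<\delta^2 B^2/\,(\delta^2/s_j)$, i.e.\ only $s_jB^2$ of mass, on $\psi_j$ for $2\le j\le d$, so a small $W^\perp$-contamination of the basis vector projecting onto $\psi_j$ adds at most roughly $c^2s_jB^2$ of error and never overtakes the $\delta^2B^2$ already contributed through $\psi_{d+1}$. To be fair, the paper's own proof makes the same leap at the same spot (it shows that only $\pm\psi_{d+1}$ within the top-$(d+1)$ eigenspace is orthogonal to $\hPsi$ and then asserts that $\hPsi$ spans the top-$d$ eigenspace), so your proposal matches the paper up to and including this weakness; but as a proof of the necessity claim as literally stated it is a gap, and the example above shows that closing it requires either an extra hypothesis (e.g.\ $\hat\psi_i$ confined to the top-$(d+1)$ eigenspace) or a weaker conclusion (e.g.\ that $P_W(\sspan(\hPsi))$ equals the top-$d$ eigenspace).
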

One can use kernel PCA to recover top-$d$ eigenspace \citep{belkin2003laplacian,johnson2022contrastive}, which is however not scalable. Alternatively, one can use variational objectives that attain minimum when $\hat{\Phi}$ spans the top-$d$ eigenspace (see \cref{app:objectives} for examples).
Also, when $\lambda_d > 0$, the top-$d$ eigenspace lies within $R(\tax^*)$, which is the function class of $\Psi$ when using the average encoder.

The optimal $d$-dimensional encoder is inaccessible since $\psi_1,\cdots,\psi_d$ cannot be extracted with only finite samples.
What one can do instead is extracting the \textit{empirical top-$d$ eigenspace} $\ephd$.
Given samples $x_1,\cdots,x_N$,
define two Hilbert spaces $\lhxp$ and $\lhap$, where $\langle f_1, f_2 \rangle_\hpx = \frac{1}{N} \sum_{k=1}^N f_1(x_k) f_2(x_k)$, and $\langle g_1, g_2 \rangle_\hpa = \int g_1(a) g_2(a) d \hpa(a)$ for $\hpa(a) = \frac{1}{N} \sum_{k=1}^N p(a|x_k)$.
With these finite samples, $\tax^*$ will remain the same, but $\tax$ will become the empirical version $\bar{\tax}$: \vspace{-.05 in}
\begin{equation*}
    (\bar{\tax} f)(a) = \frac{1}{N} \sum_{k=1}^N \frac{f(x_k) p(a|x_k)}{\hpa(a)}  .
\end{equation*}
Let the eigenvalues and eigenfunctions of $\tax^* \bar{\tax}$ be $\{ (\bar{\lambda}_i, \bar{\psi}_i) \}$.
Let $\bar{\phi}_i$ be the eigenfunctions of $\bar{\tax} \tax^*$.
$\bar{\phi}_i$ and $\bar{\psi}_i$ have the duality property similar to Proposition \ref{prop:dual-2}.
Let $\eph$ be the RKHS associated with $\tax^* \bar{\tax}$, which is the span of $\set{ \bar{\psi}_i}_{\bar{\lambda}_i > 0}$ and hence is a subspace of $\laph = R(\tax^*)$.
Let $\ephd$ be the top-$d$ eigenspace of $\eph$.
Let $\lambda_{\max}(\mG)$, $\lambda_{\min}(\mG)$ be the largest and smallest eigenvalue of $\mG$.
For the case of interest where $\learnh = \ephd$,
we have our second main learning guarantee that bounds the trace gap:
\begin{boxed}
\begin{thm}
\label{thm:main-3}
Suppose $\hat{\phi}_i = \bar{\phi}_i$ for $i \in [d]$.
Let $\gamma_{\mG} := \lambda_{\max}(\mG) / \lambda_{\min}(\mG)$
be the condition number of $\mG$.
    Then, for any $\delta > 0$, it holds with probability at least $1 - \delta$ that \vspace{-.05 in}
\begin{equation*}
    \tau^2 \le S_{\lambda}(d+1) - \Tr(\mG^{-1} \mF) \le \lambda_{d+1} +  \left (2 + \sqrt{2 \log \frac{2}{\delta}} \right ) \frac{(\lambda_d^{-1} + \bar{\lambda}_d^{-1} \gamma_{\mG}^{1/2} + 2)\bt^2}{\sqrt{N}} d  .
\end{equation*}
\end{thm}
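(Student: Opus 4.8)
The first inequality $\tau^2 \le S_\lambda(d+1) - \Tr(\mG^{-1}\mF)$ is already recorded above: it is the value of the infimum defining $\tau^2$ at the admissible choice $d'=d$, $h_i = \hat{\phi}_i$, for which $\mG_h = \mG$ and $\mF_h = \mF$. So the real task is the second inequality, i.e.\ to upper bound $S_\lambda(d+1) - \Tr(\mG^{-1}\mF)$. I would peel off $\lambda_{d+1}$ and then note that the $\lap$-orthonormalization $\mG^{-1/2}\hPhi$ spans a $d$-dimensional subspace of $\lap$ on which $\Tr(\mG^{-1}\mF) = \Tr(\mG^{-1/2}\mF\mG^{-1/2})$ is exactly the trace of $\tax\tax^*$ restricted (by orthogonal projection in $\lap$) to that subspace; Ky Fan's maximum principle then gives $\Tr(\mG^{-1}\mF) \le S_\lambda(d)$, so it remains to bound the nonnegative deficit $S_\lambda(d) - \Tr(\mG^{-1}\mF)$ by the stated error term.

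The plan is to route this through the \emph{empirical} Gram matrices. Since $\hat{\phi}_i = \bar{\phi}_i$, the empirical duality (the $\bar{\tax}$-analogue of \cref{prop:dual-2}) gives $\tax^*\bar{\phi}_i = \bar{\lambda}_i^{1/2}\bar{\psi}_i$ with $\langle\bar{\phi}_i,\bar{\phi}_j\rangle_\hpa = \langle\bar{\psi}_i,\bar{\psi}_j\rangle_\hpx = \delta_{ij}$, so that the empirical covariance matrices $\mhG := (\langle\bar{\phi}_i,\bar{\phi}_j\rangle_\hpa)_{ij} = \mI_d$ and $\mhF := (\langle\tax^*\bar{\phi}_i,\tax^*\bar{\phi}_j\rangle_\hpx)_{ij} = \diag(\bar{\lambda}_1,\dots,\bar{\lambda}_d)$ satisfy $\Tr(\mhG^{-1}\mhF) = S_{\bar\lambda}(d) := \sum_{i\le d}\bar{\lambda}_i$. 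I then split
\[
S_\lambda(d) - \Tr(\mG^{-1}\mF) = \big(S_\lambda(d) - S_{\bar\lambda}(d)\big) + \big(\Tr(\mhG^{-1}\mhF) - \Tr(\mG^{-1}\mF)\big).
\]
The first bracket is an eigenvalue-perturbation term: $S_{\bar\lambda}(d)$ and $S_\lambda(d)$ are the top-$d$ partial traces of $\tax^*\bar{\tax}$ and $\tax^*\tax$, so Ky Fan together with kernel concentration (the kernel changes from $\kx$ to its empirical version and the measure from $\px$ to $\hpx$) bounds it by $O(\bt^2 d/\sqrt N)$, using $\kx(x,x)\le\bt^2$ and the fact that this bound also controls the relevant empirical quantities; the $\lambda_i^{-1/2}$ normalizations of the population eigenfunctions $\psi_i$ used in the comparison are what turn $\sum_{i\le d}$ into a factor $\lambda_d^{-1}d$. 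For the second bracket I set $\mG = \mI_d + \Delta_\mG$, $\mF = \mhF + \Delta_\mF$ with $(\Delta_\mG)_{ij} = \langle\bar{\phi}_i,\bar{\phi}_j\rangle_\pa - \langle\bar{\phi}_i,\bar{\phi}_j\rangle_\hpa$ and $(\Delta_\mF)_{ij} = \langle\tax^*\bar{\phi}_i,\tax^*\bar{\phi}_j\rangle_\px - \langle\tax^*\bar{\phi}_i,\tax^*\bar{\phi}_j\rangle_\hpx$, expand $\Tr(\mG^{-1}\mF) = \Tr((\mI_d+\Delta_\mG)^{-1}(\mhF+\Delta_\mF))$ to first order, and bound $|\Tr(\mhG^{-1}\mhF)-\Tr(\mG^{-1}\mF)| \lesssim \|\Delta_\mF\|_* + \lambda_{\min}(\mG)^{-1}\|\mhF\|\,\|\Delta_\mG\|_*$ by Hölder for Schatten norms; the factor $\lambda_{\min}(\mG)^{-1}$, paired with a $\lambda_{\max}(\mG)^{1/2}$ coming from the $\mG^{-1/2}$ normalization, is what produces $\gamma_\mG^{1/2}$, and the $\bar{\lambda}_d^{-1}$ comes from the $\bar{\lambda}_i^{-1/2}$ normalizations of the empirical eigenfunctions inside $\Delta_\mF,\Delta_\mG$.

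The quantitative core is a uniform bound on the entries of $\Delta_\mG,\Delta_\mF$. Each entry is a population expectation minus an $N$-sample empirical average of a function of $x$ built from the (data-dependent) eigenfunctions, e.g.\ $(\Delta_\mG)_{ij} = \E_X[\eta_{ij}(X)] - \tfrac1N\sum_{k=1}^N\eta_{ij}(x_k)$ with $\eta_{ij}(x) = \E_{A\sim p(\cdot|x)}[\bar{\phi}_i(A)\bar{\phi}_j(A)]$, and similarly for $\Delta_\mF$ with $\bar{\psi}$ in place of $\bar{\phi}$. I would bound the integrands via the reproducing property of $\eph$: $|\bar{\phi}_i(a)| \le \bar{\lambda}_i^{-1/2}\bar{K}_A(a,a)^{1/2}$ and $|\bar{\psi}_i(x)| \le \bar{\lambda}_i^{-1/2}\bar{K}_X(x,x)^{1/2}$ (and the analogous $\lambda_i^{-1/2}$ bounds for $\phi_i,\psi_i$), so that $|\eta_{ij}| \lesssim (\bar{\lambda}_i\bar{\lambda}_j)^{-1/2}\bt^2$ once the empirical kernel diagonals are controlled at scale $\bt^2$; Hoeffding/Bernstein then makes each entry $O\!\big((\bar{\lambda}_i\bar{\lambda}_j)^{-1/2}\bt^2/\sqrt N\big)$. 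Combining these (weighted by the spectrum of $\mhF$ and the conditioning of $\mG$) yields the $d$, $\lambda_d^{-1}$, $\bar{\lambda}_d^{-1}$, $\gamma_\mG^{1/2}$ and $\bt^2/\sqrt N$ factors of the statement. Finally, rather than union-bounding over entries I would view the scalar deficit $S_\lambda(d)-\Tr(\mG^{-1}\mF)$ directly as a bounded-differences function of $x_1,\dots,x_N$, bound its expectation by the ``$2$'' part and apply McDiarmid to obtain the $\sqrt{2\log(2/\delta)}$ fluctuation, giving the claimed $1-\delta$ probability.

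\textbf{Main obstacle.} The crux is that $\bar{\phi}_i,\bar{\psi}_i$ depend on the samples, so $(\Delta_\mG)_{ij}$ is not an honest empirical mean and a naive concentration argument is circular; breaking this requires controlling the integrands uniformly through the reproducing-kernel diagonals $\bar{K}_X(x,x),\bar{K}_A(a,a)$ — which must be shown to stay at scale $\bt^2$ even though they are computed against $\hpa$ rather than $\pa$ — and it requires reconciling the empirical operator $\tax^*\bar{\tax}$ on $\lhxp,\lhap$ with $\tax^*\tax$ on $\lxp,\lap$ when comparing their top-$d$ partial traces (e.g.\ via a common Gram-matrix representation or an intermediate plug-in operator). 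Keeping every perturbation of order $\bt^2\,\mathrm{poly}(\lambda_d^{-1},\bar{\lambda}_d^{-1},\gamma_\mG)\,d/\sqrt N$ through all of these steps is where the bulk of the effort lies.
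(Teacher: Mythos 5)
Your overall architecture matches the paper's: the first inequality is indeed just the admissible choice $d'=d$, $h_i=\hat{\phi}_i$ in the definition of $\tau^2$, and the paper also splits the deficit as $S_\lambda(d)-\Tr(\mG^{-1}\mF) = \bigl(S_\lambda(d)-S_{\bar{\lambda}}(d)\bigr) + \bigl(\Tr(\mhG^{-1}\mhF)-\Tr(\mG^{-1}\mF)\bigr)$, using exactly your observation that $\mhG=\mI$ and $\mhF=\diag(\bar{\lambda}_1,\dots,\bar{\lambda}_d)$ when $\hat{\phi}_i=\bar{\phi}_i$ (this is the proof sketch plus Lemmas \ref{lem:ratio-trace-gen} and \ref{lem:lambda-diff}). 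However, the quantitative core of your plan has a genuine gap, and it is precisely the one you flag at the end without resolving: you propose to bound the entries of $\Delta_\mG,\Delta_\mF$ by Hoeffding/Bernstein, and to control the scalar deficit by McDiarmid, but every one of these quantities is built from the empirical eigenfunctions $\bar{\phi}_i,\bar{\psi}_i$, which are functions \emph{of the same samples} $x_1,\dots,x_N$. A sup-norm bound on the integrand (via reproducing-kernel diagonals) does not rescue a concentration inequality for a data-dependent function; and McDiarmid applied to $S_\lambda(d)-\Tr(\mG^{-1}\mF)$ requires both a bounded-differences constant (changing one $x_k$ perturbs all $\bar{\phi}_i$, so verifying this is itself an eigenfunction-perturbation problem) and a bound on the \emph{expectation} of the deficit, which you do not supply. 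So as written the concentration step does not go through.

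The paper's way around this is a uniform-law argument over \emph{fixed}, data-independent function classes: it proves Rademacher-complexity bounds $\rad_N(\gF_1)\le \bt/\sqrt{N}$ for the unit ball $\gF_1$ of $\laph$ and $\rad_N(\gF_2)\le \bt^2/\sqrt{N}$ for the product class $\gF_2=\{f_1f_2:\|f_i\|_{\laph}\le 1\}$, and invokes a uniform deviation bound (symmetrization plus bounded differences applied to the supremum over the class, which is where the $2+\sqrt{2\log(2/\delta)}$ factor comes from). The data-dependence is then handled by a separate \emph{delocalization} step (Lemma \ref{lem:lambda-diff}): one shows that the relevant data-dependent functions, e.g.\ $\tax^*\bigl(\hat{\phi}_j\sum_i m_{i,j}\hat{\phi}_i\bigr)$ and $\tax^*(\hat{\psi}_i\hat{\psi}_j)$, have $\laph$-norm at most $C=\bt\,\bar{\lambda}_d^{-1}\gamma_{\mG}^{1/2}$ (resp.\ controlled by $\lambda_d^{-1}$ for the population eigenfunctions entering the $S_\lambda(d)-S_{\bar{\lambda}}(d)$ comparison, via writing $\Phistard=\mQ\hPhistar$ and an Abel-summation argument), so that they lie in scaled copies of $\gF_1,\gF_2$ and the uniform bound applies to them even though they were chosen after seeing the data. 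This is the missing ingredient in your plan; also note that the paper bounds only the $d$ diagonal entries of structured matrices such as $(\mI-\mhG)\mhG^{-1/2}\mhF\mhG^{-1/2}$, which is how it gets a factor $d$ rather than the $d^2$ (or extra nuclear-norm factors) your entrywise-plus-Schatten-H\"older route would tend to produce.
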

\end{boxed}
\begin{remark}
Combining this result with the first main Theorem \ref{thm:main-2} leads to the bound for $\learnh = \ephd$.
Since $\tau^2 \ge \lambda_{d+1}$,
this result says that the gap between $\tau^2$ and its optimal value is $O(N^{-1/2})$, \rebuttal{ignoring potential dependence of $\gamma_{\mG}$ and $\bar{\lambda}_d^{-1}$ on $N$.}
Comparing the upper bound in \cref{lem:f_diff} + \cref{thm:main-3} to the lower bound in \cref{prop:reg}, we note that the upper bound is near tight:
The only difference is that in \cref{eqn:f_diff}, we have $\frac{\tau + \epsilon}{1 - \epsilon}$ instead of $\frac{\epsilon}{1 - \epsilon}$.
Unlike the bounds in \citet{haochen2021provable,saunshi2022understanding},
this bound only depends on $\lambda_d^{-1}$, $\bar{\lambda}_d^{-1}$ and $\gamma_{\mG}$, but not $(\lambda_{d'} - \lambda_{d+1})^{-1}$ for some $d' \le d$,
\ie{} it does not require the separability of the top-$d$ eigenfunctions,
because it does not need the top-$d$ eigenspace to be close to the empirical top-$d$ eigenspace.
Instead, it only requires the traces of the two top-$d$ eigenspaces to be close.
This is a big improvement since the eigenvalues can have high multiplicity as pointed out by \citet{saunshi2022understanding}. 
\end{remark}

\ding{226} \textbf{Proof Sketch of Theorem \ref{thm:main-3}:}
Denote $S_{\bar{\lambda}}(d) := \sum_{i=1}^{d} \bar{\lambda}_i$.
We define two empirical covariance matrices $\mhF$ and $\mhG$ as: $\mhF(i,j) = \langle \hat{\psi}_i, \hat{\psi}_j \rangle_\hpx$,
and $\mhG(i,j) = \langle \hat{\phi}_i, \hat{\phi}_j \rangle_\hpa$.
Since any invertible linear transformation to $\hat{\Phi}$ does not change $\Tr(\mG^{-1} \mF)$ or the empirical ratio trace $\Tr(\mhG^{-1} \mhF)$,
we can see that for $\learnh = \ephd$ there is $\Tr(\mhG^{-1} \mhF) = S_{\bar{\lambda}}(d)$ (simply consider $\hat{\phi}_i = \bar{\phi}_i$).
Thus, it suffices to bound $| \Tr(\mhG^{-1} \mhF) - \Tr(\mG^{-1} \mF) | $, and $S_\lambda(d) - S_{\bar{\lambda}}(d)$. \vspace{-.05 in}

We start with bounding the gap between empirical and real ratio traces for any encoder $\hat{\Phi}$:

\begin{lem}
\label{lem:ratio-trace-gen}
Suppose there exists a constant $C > 0$ such that $\E_\pa[g^4] \le C^2 \| g \|_\pa^2$, for all $g = w^{\top} \hat{\Phi}$ where $\|g\|_\pa \le 1$.
Then, for any $\delta > 0$, it holds with probability at least $1 - \delta$ that 
\begin{equation}
\label{eqn:ratio-trace-gen}
| \Tr(\mhG^{-1} \mhF) - \Tr(\mG^{-1} \mF) | \le \left (2 + \sqrt{2 \log \frac{2}{\delta}} \right ) \frac{ C \bt + \bt^2}{\sqrt{N}} d  .
\end{equation}
\end{lem}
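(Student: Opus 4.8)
The plan is to remove the (potentially bad) conditioning of the Gram matrix by a linear change of basis, translate the augmentation complexity $\bt$ and the fourth-moment hypothesis into deterministic pointwise / operator-norm controls, and then combine a concentration bound for the two empirical Gram matrices with a first-order perturbation expansion of the ratio trace.

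\textbf{Step 1 (normalization).} Both $\Tr(\mG^{-1}\mF)$ and $\Tr(\mhG^{-1}\mhF)$ are invariant under any invertible reparametrization $\hPhi\mapsto\mA\hPhi$ (which sends $\mF,\mG,\mhF,\mhG$ to $\mA\,(\cdot)\,\mA^{\top}$, leaving $\mG^{-1}\mF$ and $\mhG^{-1}\mhF$ merely conjugated), and the hypothesis $\E_\pa[g^4]\le C^2\|g\|_\pa^2$ for $g=w^{\top}\hPhi$ depends only on $\mathrm{span}(\hPhi)$; so I may assume $\hat{\phi}_1,\dots,\hat{\phi}_d$ are $\pa$-orthonormal, i.e. $\mG=\mI_d$. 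Then $\Tr(\mG^{-1}\mF)=\Tr(\mF)$, and $w^{\top}\mF w=\|\tax^{*}(w^{\top}\hPhi)\|_\px^2\le\|w^{\top}\hPhi\|_\pa^2=\|w\|^2$ gives $\vzero\preceq\mF\preceq\mI_d$; combined with $\Tr(\mF)\le S_\lambda(d)\le\bt^2$ this yields $\|\mF\|_F\le(\|\mF\|_{\mathrm{op}}\Tr\mF)^{1/2}\le\bt$.

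\textbf{Step 2 (deterministic consequences of $\bt$).} For any $g\in\lap$, Cauchy--Schwarz against $p(\cdot\mid x)/\pa(\cdot)$ together with $\kx(x,x)\le\bt^2$ gives $|(\tax^{*}g)(x)|\le\bt\|g\|_\pa$ for $\px$-a.e.\ $x$; in particular $|\hat{\psi}_i|\le\bt$ pointwise. Applying the same Cauchy--Schwarz to $g=w^{\top}\hPhi$ with $\|w\|=1$ and then the moment hypothesis shows the conditional second-moment matrix $\mathbf{Y}_x:=\E[\hPhi(A)\hPhi(A)^{\top}\mid x]$ satisfies $\vzero\preceq\mathbf{Y}_x\preceq C\bt\,\mI_d$, and by Jensen $v_xv_x^{\top}\preceq\mathbf{Y}_x$ for $v_x:=(\tax^{*}\hPhi)(x)$. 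Hence $\mhF=\tfrac1N\sum_k v_{x_k}v_{x_k}^{\top}$ and $\mhG=\tfrac1N\sum_k\mathbf{Y}_{x_k}$ are empirical means of i.i.d.\ positive semidefinite matrices bounded by $C\bt\,\mI_d$ in operator norm, with $\E\mhF=\mF$ and $\E\mhG=\mI_d$.

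\textbf{Step 3 (concentration and perturbation).} For the Gram matrix, $\Var(\mhG(i,j))\le\tfrac1N\E_\pa[\hat{\phi}_i^2\hat{\phi}_j^2]\le C^2/N$ (Jensen, then the hypothesis), so $\E\|\mhG-\mI_d\|_F\le Cd/\sqrt N$; a bounded-difference (McDiarmid) argument, whose per-sample increments are controlled by $\|\mathbf{Y}_x\|_{\mathrm{op}}\le C\bt$, upgrades this to a high-probability bound on $\|\mhG-\mI_d\|_F$ of the form $(2+\sqrt{2\log(2/\delta)})\cdot(\mathrm{const})\cdot d/\sqrt N$ (in particular $\|\mhG-\mI_d\|_{\mathrm{op}}\le\tfrac12$). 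Separately $\Tr(\mhF)-\Tr(\mF)=\tfrac1N\sum_k\|v_{x_k}\|^2-\E_\px\|v_X\|^2$ is an empirical mean of a variable in $[0,\bt^2 d]$, controlled by $(2+\sqrt{2\log(2/\delta)})\bt^2 d/\sqrt N$. On the intersection of these events, writing $E:=\mhG^{-1}-\mI_d$ with $\|E\|_F\le 2\|\mhG-\mI_d\|_F$, I expand
\[
\Tr(\mhG^{-1}\mhF)-\Tr(\mF)=\big(\Tr(\mhF)-\Tr(\mF)\big)+\Tr(E\mF)+\Tr\!\big(E(\mhF-\mF)\big),
\]
where $|\Tr(E\mF)|\le\|E\|_F\|\mF\|_F\le 2\bt\|\mhG-\mI_d\|_F$ by Cauchy--Schwarz and Step 1, and the last term is a product of two already-bounded small quantities, hence $O(1/N)$ and negligible. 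Collecting the contributions and absorbing numerical constants gives the claim. The one genuine difficulty is controlling $\mhG^{-1}$ — equivalently $\mhG^{-1}-\mG^{-1}$ — which is impossible in general because $\mG$ itself may be arbitrarily ill-conditioned, so no concentration of $\mhG-\mG$ alone suffices; Step 1 is precisely what neutralizes this, after which only first-order perturbation is needed. The residual care is bookkeeping: to land on the stated $\tfrac{C\bt+\bt^2}{\sqrt N}d$ scaling one must route the coefficient of $\|\mhG-\mI_d\|$ through $\|\mF\|_F\le\bt$ rather than the cruder trace bound, and make the bounded-difference constants and variances consistent with the $2+\sqrt{2\log(2/\delta)}$ prefactor, with the operator bound $\mathbf{Y}_x\preceq C\bt\,\mI_d$ from Step 2 supplying both the bounded-difference scale and the per-sample variance proxy.
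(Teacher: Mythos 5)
Your Step 1 (normalizing so that $\mG=\mI$) and your structural identification of $\mhF=\frac1N\sum_k \hPsi(x_k)\hPsi(x_k)^{\top}$ and $\mhG=\frac1N\sum_k \mathbf{Y}_{x_k}$ match the paper's setup, and your Hoeffding treatment of $\Tr(\mhF)-\Tr(\mF)$ is fine. The gap is in how you handle $\Tr(\mhG^{-1}\mhF)-\Tr(\mhF)$. The paper never inverts and expands: it writes this difference \emph{exactly} as $\Tr\bigl((\mI-\mhG)\mM\bigr)$ with $\mM=\mhG^{-1/2}\mhF\mhG^{-1/2}$, $\|\mM\|_2\le 1$ (since $\mhF\preceq\mhG$ by Jensen), notes that each diagonal entry is the empirical-vs-population gap of the function $\tax^{*}\bigl(\hat\phi_j\cdot\sum_i m_{i,j}\hat\phi_i\bigr)$, whose $\laph$-norm is at most $C$ by the fourth-moment hypothesis, and then applies a deviation bound that is \emph{uniform} over the RKHS ball $\{\|f\|_{\laph}\le C\}$ (Rademacher complexity $\le C\bt/\sqrt N$, sup-norm $\le C\bt$). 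The uniformity is what licenses plugging in the data-dependent $\mM$, and measuring the size of the test functions in the RKHS norm ($C$) rather than as a product of two matrix norms is exactly what produces the coefficient $C\bt$ with the prefactor $2+\sqrt{2\log(2/\delta)}$, with no remainder term and no requirement that $\mhG$ be well-conditioned.

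Your decouple-and-perturb route cannot reach the stated bound. The McDiarmid increments for $\|\mhG-\mI\|_F$ are governed by the almost-sure scale $\|\mathbf{Y}_x\|_{\mathrm{op}}\le C\bt$, so the high-probability bound is of the form $\frac{Cd}{\sqrt N}+C\bt\sqrt{\tfrac{2d\log(2/\delta)}{N}}$ (your ``$(\mathrm{const})\cdot d/\sqrt N$'' hides a $\bt$); after multiplying by $\|\mF\|_F\le\bt$, the $\delta$-dependent term becomes of order $C\bt^{2}\sqrt{d\log(2/\delta)}/\sqrt N$, which exceeds the available budget $\sqrt{2\log(2/\delta)}\,(C\bt+\bt^{2})d/\sqrt N$ whenever $\sqrt d\lesssim C\bt/(C+\bt)$, i.e.\ in the typical regime $d\ll\min(C,\bt)^2$ (recall $\bt$ can be exponentially large); routing instead through $|\Tr(E\mF)|\le\|\mF\|_{\mathrm{op}}\|E\|_{*}\le\sqrt d\,\|E\|_F$ trades this for a $Cd^{3/2}/\sqrt N$ term that fails when $d\gtrsim\bt^2$. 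Two further holes: the event $\|\mhG-\mI\|_{\mathrm{op}}\le\frac12$, which you need both for $\|E\|_F\le2\|\mhG-\mI\|_F$ and for $E$ to be controlled at all, only holds with probability $1-\delta$ once $N\gtrsim C^2\bt^2 d$, whereas the lemma is claimed for every $N$ (and the small-$N$ range where the claim is trivially true, $N\lesssim(C\bt+\bt^2)^2$, does not cover this range); and the remainder $\Tr\bigl(E(\mhF-\mF)\bigr)$ is of order $C\bt^{3}d^{2}\log(2/\delta)/N$, which cannot be absorbed into the stated constant uniformly in $N$. So the missing ingredient is the paper's uniform RKHS-ball concentration applied to the data-dependent test functions built from $\mM$; without it your argument establishes only a strictly weaker inequality.
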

Lemma~\ref{lem:ratio-trace-gen} considers a general $\hat{\Phi}$, and requires a fourth-moment control assumption \citep[Eqn.~(14.22a)]{wainwright2019high}.
For the specific case $\hat{\phi}_i = \bar{\phi}_i$ studied in this section, we can prove that this assumption holds (that is why it does not appear in \cref{thm:main-3}),
\ie{} the top-$d$ empirical eigenfunctions can be proved to be \textit{delocalized} \citep{erdHos2009local}.
In particular, we prove the following:
\begin{lem}
\label{lem:lambda-diff}
    Suppose $\hat{\phi}_i = \bar{\phi}_i$ for $i \in [d]$.
    Let $\gamma_{\mG} := \lambda_{\max}(\mG) / \lambda_{\min}(\mG)$,
    which is the condition number of $\mG$.
    Then, for any $\delta > 0$, both \vspace{-.05 in}
    \begin{equation*}
        \sum_{j=1}^{d} \bar{\lambda}_j \ge \sum_{i=1}^{d} \lambda_i - \left (2 + \sqrt{2 \log \frac{2}{\delta}} \right ) \frac{ (\lambda_d^{-1}+1) \bt^2 }{\sqrt{N}} d
    \end{equation*}
    and Eqn. (\ref{eqn:ratio-trace-gen}) with $C = \bt \bar{\lambda}_d^{-1} \gamma_{\mG}^{1/2}$ hold simultaneously for $\learnh = \ephd$ with probability at least $1 - \delta$.
\end{lem}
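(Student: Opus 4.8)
The statement bundles two facts that must hold on a common high-probability event: (i) the lower bound $\sum_{j=1}^{d}\bar\lambda_j\ge\sum_{i=1}^{d}\lambda_i-E_N$ with $E_N:=\big(2+\sqrt{2\log(2/\delta)}\big)\frac{(\lambda_d^{-1}+1)\bt^2}{\sqrt N}\,d$; and (ii) the fourth-moment hypothesis of \cref{lem:ratio-trace-gen} for $\hat{\Phi}=\bar{\Phi}$ with constant $C=\bt\,\bar\lambda_d^{-1}\gamma_{\mG}^{1/2}$, which plugged into \cref{lem:ratio-trace-gen} yields \cref{eqn:ratio-trace-gen}. I would prove (ii) first, then (i), and finally argue both are driven by the same concentration of the empirical Gram/covariance matrices.

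\textbf{Conclusion (ii): delocalization of $\bar\phi_1,\dots,\bar\phi_d$.} This is a deterministic statement valid for every realization of $x_1,\dots,x_N$. Since $\E_{\pa}[g^4]\le\|g\|_\infty^2\,\E_{\pa}[g^2]=\|g\|_\infty^2\,\|g\|_{\pa}^2$, it suffices to show $\|g\|_\infty\le C$ for every $g=w^{\top}\bar{\Phi}$ with $\|g\|_{\pa}\le 1$; the constraint forces $w^{\top}\mG w\le 1$, hence $\|w\|^2\le\lambda_{\min}(\mG)^{-1}$. Using the empirical duality $\bar\phi_i=\bar\lambda_i^{-1/2}\bar{\tax}\bar\psi_i$ and $\sum_k p(a|x_k)/(N\hpa(a))=1$, I write $g=\bar{\tax}h$ with $h:=\sum_i w_i\bar\lambda_i^{-1/2}\bar\psi_i\in\ephd$, so $g(a)=\sum_k\pi_k(a)\,h(x_k)$ is a convex combination (weights $\pi_k(a)=p(a|x_k)/(N\hpa(a))\ge 0$) and $|g(a)|\le\max_k|h(x_k)|$. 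Because $\{\bar\psi_i\}$ is orthonormal in $\lhxp$, $h(x)^2\le\|h\|_{\hpx}^2\sum_{i\le d}\bar\psi_i(x)^2$ with $\|h\|_{\hpx}^2=\sum_i w_i^2\bar\lambda_i^{-1}\le\bar\lambda_d^{-1}\lambda_{\min}(\mG)^{-1}$; and from $\bar\psi_i(x)=\bar\lambda_i^{-1/2}(\tax^*\bar\phi_i)(x)=\bar\lambda_i^{-1/2}\langle\bar\phi_i,\zeta_x\rangle_{\pa}$, where $\zeta_x(\cdot):=p(\cdot\,|x)/\pa(\cdot)$ has $\|\zeta_x\|_{\pa}^2=\kx(x,x)\le\bt^2$ by \cref{def:bt}, together with $\langle\bar\phi_i,\bar\phi_j\rangle_{\pa}=\mG(i,j)$, a Bessel / change-of-basis argument gives $\sum_{i\le d}\bar\psi_i(x)^2\le\bar\lambda_d^{-1}\lambda_{\max}(\mG)\,\bt^2$. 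Multiplying the two estimates, $\|g\|_\infty^2\le\bar\lambda_d^{-2}\gamma_{\mG}\bt^2=C^2$, which is (ii); \cref{lem:ratio-trace-gen} then yields \cref{eqn:ratio-trace-gen}.

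\textbf{Conclusion (i): the eigenvalue-sum lower bound.} I would use the variational characterization of the top-$d$ eigenvalue sum. For a $d$-dimensional encoder $\hat{\Phi}$ on $\gA$, the empirical ratio trace $\Tr(\mhG^{-1}\mhF)$ equals the whitened trace of $\bar{\tax}\tax^*$ restricted to $\sspan\{\hat\phi_i\}$ (since $\mhF(i,j)=\langle\tax^*\hat\phi_i,\tax^*\hat\phi_j\rangle_{\hpx}=\langle\hat\phi_i,\bar{\tax}\tax^*\hat\phi_j\rangle_{\hpa}$), so by Ky Fan its maximum over all such $\hat{\Phi}$ is $\sum_{j\le d}\bar\lambda_j=S_{\bar\lambda}(d)$, attained at $\ephd$. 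In particular, taking the population eigenfunctions $\hat\phi_i=\phi_i$ (a valid choice since $\mhG=[\langle\phi_i,\phi_j\rangle_{\hpa}]$ is invertible generically and w.h.p.), $S_{\bar\lambda}(d)\ge\Tr(\mhG^{-1}\mhF)$ evaluated at $\hat\phi_i=\phi_i$; for that choice $\mhG=[\langle\phi_i,\phi_j\rangle_{\hpa}]$ and $\mhF=\mLambda^{1/2}[\langle\psi_i,\psi_j\rangle_{\hpx}]\mLambda^{1/2}$ with $\mLambda=\diag(\lambda_1,\dots,\lambda_d)$, while the corresponding population matrices are exactly $\mG=I$ and $\mF=\mLambda$ (using $\tax^*\phi_i=\lambda_i^{1/2}\psi_i$ and orthonormality), so $\Tr(\mG^{-1}\mF)=S_\lambda(d)$. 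Invoking \cref{lem:ratio-trace-gen} a second time with $\hat{\Phi}=[\phi_1,\dots,\phi_d]$ — whose fourth-moment constant is $O(\lambda_d^{-1}\bt)$ by the $a$-side counterpart of the sup-norm estimate above ($\psi_i=\lambda_i^{-1/2}\tax^*\phi_i$ gives $\sum_{i\le d}\psi_i(x)^2\le\lambda_d^{-1}\bt^2$, hence $\|\psi_i\|_\infty\le\lambda_d^{-1/2}\bt$) — bounds the gap between $\Tr(\mhG^{-1}\mhF)\rvert_{\hat\phi=\phi}$ and $S_\lambda(d)$ by $E_N$, giving (i). Since both uses of \cref{lem:ratio-trace-gen} are controlled by the same event (closeness of the empirical Gram/covariance matrices to their population versions, which enters the bound of \cref{lem:ratio-trace-gen} only through its $1/\sqrt N$ term and not through $C$), (i) and \cref{eqn:ratio-trace-gen} hold simultaneously with probability at least $1-\delta$.

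\textbf{Main obstacle.} The hard part is (i): the empirical operator $\tax^*\bar{\tax}$ carries the \emph{data-dependent} marginal $\hpa$ in its denominator, so one must track carefully how it interacts with the population inner products in $\mG,\mF$ and with the eigenvalue normalizations; the $a$-side fourth-moment/sup-norm control for the population eigenfunctions $\phi_i$ is the step that produces the $\lambda_d^{-1}$ and $\bt^2$ factors, and extracting the clean constant $(\lambda_d^{-1}+1)\bt^2$ requires careful bookkeeping. A secondary subtlety is collapsing the two invocations of \cref{lem:ratio-trace-gen} into a single failure probability. If the ratio-trace route proves awkward, a self-contained fallback for (i) is to test $\tax^*\bar{\tax}$ directly against $\psi_1,\dots,\psi_d$: by the convex-combination identity $(\bar{\tax}\psi_i)(a)=\sum_k\pi_k(a)\psi_i(x_k)$ from the proof of (ii), the Gram and cross matrices become $\tfrac1N\Psi^{\top}\Psi$ and $\Psi^{\top}\Pi\Psi$ with $\Psi_{ki}=\psi_i(x_k)$ and $\Pi_{kl}=\int\pi_k\pi_l\,d\hpa$, and matrix concentration using the clean bound $\sum_{i\le d}\psi_i(x_k)^2\le\lambda_d^{-1}\bt^2$ closes the gap.
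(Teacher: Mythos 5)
Your proposal is correct in substance and reaches the stated constants, but it takes a genuinely different route from the paper on the first conclusion. For the fourth-moment/delocalization claim (your part (ii)) your argument is essentially the paper's: the paper also reduces to a uniform bound $\|g\|_\infty \le \bt\bar{\lambda}_d^{-1}\gamma_{\mG}^{1/2}$, obtained by writing $g=\bar{\tax}\tax^*\big(\sum_i \bar{\lambda}_i^{-1}u_i\bar{\phi}_i\big)$ and bounding $\big\|\sum_i\bar{\lambda}_i^{-1}u_i\bar{\phi}_i\big\|_\pa\le\bar{\lambda}_d^{-1}\gamma_{\mG}^{1/2}$ via matrix algebra with $\bar{\Phi}_d^*=\mP\Phistar$, then applying Proposition \ref{prop:m-bound} and the averaging property of $\bar{\tax}$; your Cauchy--Schwarz/projection argument through $h\in\ephd$ and $\kx(x,x)\le\bt^2$ is an equivalent bookkeeping. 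For the eigenvalue-sum bound (your part (i)), however, the paper does \emph{not} invoke \cref{lem:ratio-trace-gen} a second time: it expands $\Phistard=\mQ\hPhistar$ in the empirical eigenbasis, proves operator-norm and entrywise concentration for $\mQ\mQ^{\top}-\mI$ and $\mQ\mD_{\bar{\lambda}}\mQ^{\top}-\mD_{\lambda^d}$ (using the $C'=\bt\lambda_d^{-1}$ delocalization of $\phi_1,\dots,\phi_d$, exactly as you compute), and then closes with an Abel-summation argument that compares $\Tr(\mQ\mD_{\bar{\lambda}}\mQ^{\top})$ to $\sum_{i\le d}\bar{\lambda}_i$ \emph{without ever inverting} the empirical Gram matrix of the $\phi_i$'s. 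Your route — the exact empirical Ky Fan inequality $\Tr(\mhG^{-1}\mhF)\le S_{\bar{\lambda}}(d)$ plus one more application of \cref{lem:ratio-trace-gen} at $\hat{\Phi}=\Phistard$ — is arguably cleaner and yields the same constant $(\lambda_d^{-1}+1)\bt^2$, but it buys this at the cost of two caveats: (a) it needs the empirical Gram $\big[\langle\phi_i,\phi_j\rangle_\hpa\big]$ to be invertible, which you dismiss as "generic and w.h.p."; this is precisely the step the paper's Abel-transformation argument is designed to avoid, and making it rigorous requires either restricting to the event where the concentration error is below $1$ (and handling the complementary regime separately) or redoing the trace comparison with a pseudo-inverse; (b) the "simultaneously with probability $1-\delta$" claim cannot be obtained from two black-box invocations of \cref{lem:ratio-trace-gen} (that would give $1-2\delta$); one must open its proof and note that the randomness enters only through the two encoder-independent uniform-concentration events over the unit balls of $\laph$ and their products — you gesture at this correctly, and it also legitimizes using the sample-dependent constant $C=\bt\bar{\lambda}_d^{-1}\gamma_{\mG}^{1/2}$, but it should be stated as part of the proof rather than a remark.
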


\vspace{-.08 in}
\section{Estimating and Exploiting the Augmentation Complexity}
\label{sec:bt-big}
\vspace{-.08 in}
An important quantity from our analysis is the augmentation complexity $\bt$ --- both approximation and estimation error bounds get smaller when $\bt$ is reduced.
A natural way to reduce $\bt$ is via a stronger augmentation,
which has indeed been helpful in practice~\citep{chen2020simple,wettig2022should}.

In this section, we take a closer look at $\bt$ of mask-type augmentations.
We study $\bt$ of different masking schemes on the hypercube data model introduced in \citet{saunshi2022understanding},
and show that $\bt$ depends on both the mask ratio (augmentation strength) and the masking strategy.
Then, we empirically estimate $\bt$ on the real-world NLP dataset \texttt{wikipedia-simple},
and demonstrate its correlation with the generalization gap on real downstream tasks.
Our results indicate that the augmentation complexity $\bt$ offers a means to compare different augmentations quantitatively and implies a tradeoff in the choice of augmentations.

\vspace{-.04 in}
\subsection{Theoretical Hypercube Data Model}
\label{subsec:hypercube}
\vspace{-.04 in}
\begin{wrapfigure}{r}{0.47\textwidth}
\vskip -.5in
    \centering
\begin{tikzpicture}[background rectangle/.style={fill=black!10}, show background rectangle,outer sep=0, inner sep=1pt]
\tikzstyle{every node}=[font=\small]
  \node[anchor=west] at (-0.0,0) {3. Block Mask + Flip};
  \node[anchor=west] at (-0.0,0.5) {2. Block Mask};
  \node[anchor=west] at (-0.0,1) {1. Random Mask};
  \node[anchor=west] at (-0.0,1.5) {Original Sample};

  \node[] at (3.2,1.5) {1};
  \node[] at (3.6,1.5) {1};
  \node[] at (4.0,1.5) {-1};
  \node[] at (4.4,1.5) {-1};
  \node[] at (4.8,1.5) {1};
  \node[] at (5.2,1.5) {-1};
  \node[] at (5.6,1.5) {1};
  \node[] at (6.0,1.5) {1};

  \node[draw,inner sep=2pt] at (3.2,1.0) {0};
  \node[draw,inner sep=2pt] at (3.6,1.0) {0};
  \node[] at (4.0,1.0) {-1};
  \node[draw,inner sep=2pt] at (4.4,1.0) {0};
  \node[] at (4.8,1.0) {1};
  \node[] at (5.2,1.0) {-1};
  \node[] at (5.6,1.0) {1};
  \node[draw,inner sep=2pt] at (6.0,1.0) {0};

  \node[] at (3.2,0.5) {1};
  \node[] at (3.6,0.5) {0};
  \node[] at (4.0,0.5) {0};
  \node[] at (4.4,0.5) {0};
  \node[] at (4.8,0.5) {0};
  \node[] at (5.2,0.5) {-1};
  \node[] at (5.6,0.5) {1};
  \node[] at (6.0,0.5) {1};

  \draw (3.45,0.7) -- (4.95,0.7) -- (4.95,0.3) -- (3.45,0.3) -- (3.45,0.7);

  \node[] at (3.2,0.0) {1};
  \node[] at (3.6,0.0) {0};
  \node[] at (4.0,0.0) {0};
  \node[] at (4.4,0.0) {-1};
  \node[] at (4.8,0.0) {1};
  \node[draw, red,rounded corners=1mm,inner sep=2pt] at (5.2,0.0) {1};
  \node[] at (5.6,0.0) {1};
  \node[draw, red,rounded corners=1mm,inner sep=2pt] at (6.0,0.0) {-1};

  \draw (3.45,0.2) -- (4.15,0.2) -- (4.15,-0.2) -- (3.45,-0.2) -- (3.45,0.2);

  \node[draw,inner sep=2pt] at (2, -0.55) {Masked};
  \node[draw, red,rounded corners=1mm,inner sep=2pt] at (4,-0.55) {Flipped};
\end{tikzpicture}
\vskip -.24in
    \caption{Augmentation illustration.}
    \label{fig:demo-aug}
\vskip -.2in
\end{wrapfigure}
Like \citet{cabannes2023ssl}, we study three random masking methods (\cref{fig:demo-aug}): (i) Independent random masking, (ii) Cutout-like block masking, and (iii) BERT-like masking.
Let $\alpha$ denote the mask rate.
Consider the data space $\gX = \{-1, 1 \}^{d_\gX}$ with $\px$ being the uniform distribution over $\gX$.
Denote the corresponding $\bt$ of these three methods to be $\bt_r, \bt_c$, and $\bt_b$.
We will show that these $\bt$'s will be very different even with the same $\alpha$.

Let's first consider the independently random masking (derivations are deferred to \cref{sec:proofs_hypercube_examples}):
\begin{example}
Consider a \emph{random masking} augmentation, \ie{} for any $x \in \gX$, each coordinate $x^{(i)}$ is randomly and independently masked to be $0$ (\ie{} $0$ denotes the \texttt{[MASK]} token)
with probability $\alpha \in (0,1)$ .
Then, its augmentation complexity is given by $\bt_r^2 = (2 - \alpha)^{d_\gX}$.
\end{example}
\vspace{-.03 in}
Next, consider the Cutout-like block masking \citep{devries2017improved}, which has been successfully applied in practice such as ViT \citep{dosovitskiy2021an} and MAE \citet{he2022masked}.
Compared to the previous example, Cutout-like block masking leads to a smaller $\bt$ with the same $\alpha$:
\begin{example}
    Consider \emph{random block masking},
    which masks $x^{(i)},x^{(i+1)},\cdots,x^{(i+r-1)}$ for $r = \lceil \alpha d_{\gX} \rceil$ and a uniformly random $i \in [d_{\gX}-r]$, for any $x \in \gX$.
    Then, $\bt_c^2 \le [2^{(1-\alpha) }]^{d_{\gX}}$.
    \end{example}

Our third and final example resembles the 80-10-10 strategy in BERT \citep{devlin2018bert}:
First randomly mask some tokens, and then randomly replace some unmasked tokens with other tokens.
\begin{example}
Consider \emph{random block masking with flipping}, where for any $x \in \gX$,
first mask $x^{(i)},\cdots,x^{(i+r-1)}$ to be $0$ 
for $r = \lceil \alpha d_{\gX} \rceil$ and a uniformly random $i \in [d_{\gX}-r]$,
then randomly flip the sign of each remaining coordinate
w.p. $\frac{\alpha}{2}$ independently.
Then, $\bt_b^2 \le \left [(\alpha^2-2\alpha+2)^{(1-\alpha/2)} \right ]^{d_{\gX}}$.
\end{example}
While $\bt_r,\bt_c,\bt_b$ all have an exponential dependency on $d_{\gX}$, their bases are different, as shown in Figure \ref{subfig:base}.
This means these methods have different $\bt$ despite having the same mask rate,
which highlights the importance of the masking strategy. \vspace{-.08 in}

\paragraph{On the exponentiality of $\bt$ in $d_{\gX}$.}
We suspect the exponential dependency on $d_\gX$ is a manifestation of the typical curse of dimensionality in high-dimensional statistics in the absence of strong inductive bias.
\citet[Section~3.2]{bengio2013representation} had a discussion on this point.
What our analysis makes salient is that the inductive bias can be expressed via controlled augmentation complexity.
One approach to obtain a polynomial augmentation complexity is by using very strong augmentations, such as using ImageNet pretraining where $\gA$ is a finite set of size $1000$, \ie{} the ImageNet label set. However, such simple strong augmentations usually lead to substantially worse performance in practice. We posit developing more interesting augmentations with bounded augmentation complexity as an important open problem facing the field of representation learning.

We postulate moreover that despite the worst-case bounds above come with exponential dependency on data dimension, the empirical success of existing augmentation-based self-supervised learning suggest that they implicitly adapt to the inherent low-dimensional manifold structure in real-world data.
We conjecture that as long as the augmentation captures such a structure, it can evade the curse.

\vspace{-.05 in}
\subsection{Augmentations on Real Datasets}
\label{sec:exp}
\vspace{-.04 in}

The augmentation complexity $\bt$ can be estimated on real datasets and used to quantitatively analyze and compare augmentations, 
which we demonstrate on the NLP dataset \texttt{wikipedia-simple}.
We study masked language modeling,
where $x$ is a full sentence and $a$ is a masked sentence.
Recall that $\bt^2$ upper bounds $\int \frac{p(x|a)}{p(x)} p(a|x) da$.
Empirically, we replace the integration with sample average;
namely, for $x = [x^{(1)}, \cdots, x^{(l)}]$ where $x^{(i)}$ is the $i^{th}$ token, we have
\begin{equation*}
    \log p \left (x \middle \vert a \right ) = \log p \left (x^{(1)} \middle \vert a \right ) + \log p \left (x^{(2)} \middle \vert a,x^{(1)} \right ) + \cdots + \log p \left (x^{(l)} \middle \vert a, x^{(1)}, \cdots, x^{(l-1)} \right ) .
\end{equation*}

We leverage a bi-directional MLM such as a BERT,
and compute $p(x^{(i)}|a,x^{(<i)})$ auto-regressively:
For any $i \in [l]$,
compute $p(x^{(i)}|a,x^{(<i)})$ with the BERT,
and then replace $a^{(i)}$ with $x^{(i)}$ for $i+1$.
We compute this for a random subset of samples.
We report the $99^{\text{th}}$ percentile of $\kx(x,x)$ instead of $\sup_x \kx(x,x)$ due to the presence of outliers.
Experimental details are deferred to Appendix \ref{app:exp}.  \vspace{-.05 in}

\paragraph{Estimating $\bt$.}
We study Random masking, Block masking, Random masking + Flipping, and Block masking + Flipping.
For Masking + Flipping, we randomly mask $\alpha/2$ of the tokens, and replace another $\alpha/2$ of the tokens with random tokens;
this replace rate is much higher than standard practice, since we want to highlight the effect of flipping.
Figure \ref{subfig:kappa} plots the estimated $\log \bt^2$ of the four augmentations \wrt{} the mask rate $\alpha$,
where $\log \bt^2$ is calculated as the average of five runs with different random seeds.
The complexity drops as $\alpha$ increases as predicted by theory.
One observation is that the ``Random + Flip'' curve intersects with ``Block'' and ``Block + Flip'',
suggesting that block masking has a stronger effect when $\alpha$ is small, whereas flipping has a stronger effect when $\alpha$ is large.  \vspace{-.05 in}

\begin{figure}[!t]
\vskip -.5in
\centering
\begin{subfigure}[b]{.26\textwidth}
    \begin{tikzpicture}
\tikzstyle{every node}=[font=\small]
\begin{axis}[
label style={font=\tiny},
tick label style = {font=\tiny},
    xmin = 0, xmax = 1,
    ymin = 1, ymax = 2,
    xtick distance = 0.5,
    ytick distance = 0.5,
    xlabel={Mask Ratio $\alpha$},
    x label style={at={(axis description cs:0.5,-0.25)},anchor=north,inner sep=0},
    major grid style = {lightgray},
    minor grid style = {lightgray!25},
    width = 1.2\linewidth,
    height = .8\linewidth,
    legend style = {nodes={scale=0.5, transform shape},font={\tiny},at={(0.35,0.9)},anchor=west,inner sep=.5pt},
    legend cell align=left,
    ]
    \addplot[domain = 0:1,
        samples = 200,
        smooth,
        ultra thick,
        blue,
        dotted
    ] {2-x};
    \addlegendentry{$y=2-\alpha$};
    \addplot[domain = 0:1,
        samples = 200,
        smooth,
        very thick,
        red,
        dashed
    ] {2^(1-x)};   
    \addlegendentry{$y=2^{1 - \alpha}$};
    \addplot[domain = 0:1,
        samples = 200,
        smooth,
        very thick,
        teal,
    ] {(x^2-2*x+2)^(1-x/2)};
    \addlegendentry{$y=(\alpha^2 - 2\alpha + 2)^{1 - \frac{\alpha}{2}}$};
\end{axis}
\end{tikzpicture}
    \vskip -.22in
    \caption{Bases of three $\bt^2$.}
    \label{subfig:base}
\end{subfigure}
\begin{subfigure}[b]{.29\textwidth}
    \begin{tikzpicture}
\tikzstyle{every node}=[font=\small]
\begin{axis}[
label style={font=\tiny},
tick label style = {font=\tiny},
    xmin = 0, xmax = 0.8,
    ymin = 80, ymax = 360,
    xtick distance = 0.2,
    ytick distance = 100,
    yticklabel shift = -2.5,
    xlabel={Mask Ratio $\alpha$},
    x label style={at={(axis description cs:0.5,-0.28)},anchor=north,inner sep=0},
    ylabel={$\log \bt^2$},
    y label style={at={(axis description cs:-0.35,0.2)},anchor=west,inner sep=0},
    width = .81\linewidth,
    height = .75\linewidth,
    legend style = {nodes={scale=0.5, transform shape},font={\tiny},at={(1.05,0.5)},anchor=west,inner sep=.5pt},
    legend cell align=left,
    ]
    \addplot[
        ultra thick,
        blue,
        dotted
    ] 
    coordinates {
    (0.05,353.9439)(0.10,348.45206)(0.15,346.39916)(0.20,339.02706)(0.30,325.07554)(0.40,307.0025)(0.50,278.59496)(0.60,250.01232)(0.70,214.14242)(0.80,160.24052)
    };
    \addlegendentry{Random};
    \addplot[
        thick,
        red,
        solid
    ] 
    coordinates {
    (0.05,350.98668)(0.10,349.13778)(0.15,342.77486)(0.20,335.01478)(0.30,312.20536)(0.40,280.7551)(0.50,241.58934)(0.60,198.1497)(0.70,151.57564)(0.80,84.51188)
    };
    \addlegendentry{Random + Flip};
    \addplot[
        very thick,
        olive,
        dashed
    ] 
    coordinates {
    (0.05,346.77186)(0.10,339.8619)(0.15,327.13244)(0.20,313.89546)(0.30,280.6536)(0.40,254.40494)(0.50,217.35124)(0.60,187.59812)(0.70,152.70328)(0.80,113.85798)
    };
    \addlegendentry{Block};
    \addplot[
        ultra thick,
        darkgray,
        dash dot
    ] 
    coordinates {
    (0.05,350.4642)(0.10,343.733)(0.15,336.8131)(0.20,322.86658)(0.30,302.60346)(0.40,279.2389)(0.50,240.72884)(0.60,205.4515)(0.70,164.48324)(0.80,108.44108)
    };
    \addlegendentry{Block + Flip};
\end{axis}
\end{tikzpicture}
    \vskip -.07in
    \caption{Estimated $\log \bt^2$.}
    \label{subfig:kappa}
\end{subfigure}
\begin{subfigure}[b]{.41\textwidth}
    \begin{tikzpicture}
\tikzstyle{every node}=[font=\small]
\begin{axis}[
name=plot1,
label style={font=\tiny},
tick label style = {font=\tiny},
    xmin = 0, xmax = 0.8,
    ymin = 70, ymax = 90,
    xtick distance = 0.2,
    ytick distance = 10,
    xtick pos=left,
    yticklabel shift = -2.5,
    axis y line*=left,
    xlabel={Mask Ratio $\alpha$},
    x label style={at={(axis description cs:0.5,-0.24)},anchor=north,inner sep=0},
    ylabel={Test Accuracy},
    y label style={at={(axis description cs:-0.25,0.1)},anchor=west,inner sep=0,color=blue},
    width = .60\linewidth,
    height = .535\linewidth,
    ]
    \addplot[
        very thick,
        blue
    ] 
    coordinates {
    (0.05,85.5208)(0.10,85.4109)(0.15,86.8021)(0.20,86.4543)(0.30,86.3262)(0.40,86.2713)(0.50,84.7703)(0.60,82.354)(0.70,78.3452)(0.80,73.2748)
    };
    \node[circle,fill=red,inner sep=0pt,minimum size=3.9pt]  at (0.15,86.8021) {};
    \node[red, font=\tiny] at (0.44,88.3) {$(0.15,86.8)$};
\end{axis}
\begin{axis}[
label style={font=\tiny},
tick label style = {font=\tiny},
    xmin = 0, xmax = 0.8,
    ymin = -0.4, ymax = 4.4,
    ytick distance = 2,
    hide x axis,
    axis y line*=right,
    width = .60\linewidth,
    height = .535\linewidth,
    ]
    \addplot[
        ultra thick,
        teal,
        dashed
    ] 
    coordinates {
    (0.05,4.2055)(0.10,3.8285)(0.15,3.4483)(0.20,3.2481)(0.30,2.9752)(0.40,2.0678)(0.50,0.9194)(0.60,0.9785)(0.70,0.8668)(0.80,-0.386)
    };
\end{axis}

\node[magenta, font=\tiny] at (0.3,0.2) {QNLI};

\begin{axis}[
name=plot2,
at={($(plot1.east)+(20,0)$)},
anchor=west,
label style={font=\tiny},
tick label style = {font=\tiny},
    xmin = 0, xmax = 0.8,
    ymin = 80, ymax = 95,
    xtick distance = 0.2,
    ytick distance = 5,
    xtick pos=left,
    yticklabel shift = -2.5,
    axis y line*=left,
    xlabel={Mask Ratio $\alpha$},
    x label style={at={(axis description cs:0.5,-0.24)},anchor=north,inner sep=0},
    width = .60\linewidth,
    height = .535\linewidth,
    ]
    \addplot[
        very thick,
        blue
    ] 
    coordinates {
    (0.05,88.5321)(0.10,89.6789)(0.15,91.8578)(0.20,91.7431)(0.30,91.3991)(0.40,92.4312)(0.50,89.2202)(0.60,86.6972)(0.70,84.289)(0.80,80.1606)
    };
    \node[circle,fill=red,inner sep=0pt,minimum size=3.9pt]  at (0.40,92.4312) {};
    \node[red, font=\tiny] at (0.375,93.7) {$(0.40,92.4)$};
\end{axis}
\begin{axis}[
at={($(plot1.east)+(20,0)$)},
anchor=west,
label style={font=\tiny},
tick label style = {font=\tiny},
    xmin = 0, xmax = 0.8,
    ymin = 5.5, ymax = 12,
    xtick distance = 0.2,
    ytick distance = 3,
    yticklabel shift = -2.5,
    axis y line*=right,
    hide x axis,
    ylabel={Train-Test Gap},
    y label style={at={(axis description cs:1.24,0.05)},anchor=west,inner sep=0,color=teal},
    width = .60\linewidth,
    height = .535\linewidth,
    ]
    \addplot[
        ultra thick,
        teal,
        dashed
    ] 
    coordinates {
    (0.05,10.6408)(0.10,9.6411)(0.15,7.3939)(0.20,7.4581)(0.30,7.2631)(0.40,5.7573)(0.50,8.0032)(0.60,9.1022)(0.70,9.8549)(0.80,11.6106)
    };
\end{axis}
\node[magenta, font=\tiny] at (2.85,0.2) {SST-2};
\end{tikzpicture}
    \vskip -.07in
    \caption{Results on QNLI and SST-2.}
    \label{subfig:qnli}
\end{subfigure}
\vskip -.1in
\caption{Plots for Section \ref{sec:bt-big}. In (b), $\log \bt^2$ is estimated on \texttt{wikipedia-simple}.}
\label{fig:kappa}
\vskip -.23in
\end{figure}

\paragraph{Downstream performance.}
We also study how the mask ratio $\alpha$ affects the downstream performance, using QNLI~\citep{wang2018glue} and SST-2~\citep{SST2}.
For pretraining, we train \texttt{roberta-large} models with different mask ratios using the fast pretraining recipe in \citet{wettig2022should}.
We focus on Random masking and do not apply the 80-10-10 strategy.
For downstream, we fine-tune the encoder together with the linear head following common practice.
To align better with our theory, we explicitly use the average encoder \cref{eqn:avg-enc},
estimated by sampling 16 $a$'s for each $x$. \vspace{-.04 in}

We evaluate the train/test accuracies of the models, and plot the test accuracy (blue solid) and the train-test accuracy gap (green dashed) in Figure \ref{subfig:qnli}.
The highest test accuracy is achieved at $\alpha = 0.15$ on QNLI and at $\alpha = 0.40$ on SST-2 (marked in red).
The test accuracy is low when $\alpha$ is too small due to the large generalization gap, and also low when $\alpha$ is too large due to low training accuracy.
Regarding the train-test gap,
QNLI shows a monotonic decrease in the gap as the mask ratio grows, 
but the gap on SST-2 is U-shaped, with the lowest point at $\alpha = 0.40$.
This is likely because with $\alpha > 0.40$ is too strong an augmentation for SST-2 that Assumption \ref{ass:lap} is broken, in which case our theoretical results will not hold.
Thus, these results align with our theory that while augmentations should be sufficiently robust, they must not be so strong as to violate Assumption \ref{ass:lap}.
This suggests the presence of a ``sweet spot'', which is also supported by evidence in prior work~\citep{tian2020makes}.

\vspace{-.1 in}
\section{Conclusion}
\label{sec:conclusion}
\vspace{-.08 in}
This work establishes an RKHS approximation/regression framework that completely disentangles the effects of the model and the augmentation.
Our framework (i) clarifies the roles of the two stages of representation learning, as well as the role of augmentation with the isometry property \cref{eqn:lap-equ};
(ii) leads to nonparametric statistical guarantees free of model complexity or inductive bias, so that they are valid for an \textit{arbitrary} encoder; 
and (iii) formulates the augmentation complexity $\bt$ as a tool to quantitatively analyze the effect of augmentations.
We also refer our readers to two recent works that used similar techniques but derived quite different results:
\citet{wang2022spectral} studied an RKHS similar to $\laph$, but in a completely different context (they studied conditional moment models);
\citet{cabannes2023ssl} studied a similar problem, but they used a pre-defined kernel to define the function class and derived guarantees, while the RKHS $\laph$ in this work is completely induced by the data augmentation.
We defer discussions for more related work to Appendix \ref{app:related-work}. \vspace{-.05 in}

\paragraph{Limitations and open problems.}
The major limitation of this work is that $\bt$ can be exponential in $d_{\gX}$, which as we pointed out could be a natural consequence of the curse of dimensionality, and that the low-dimensional structure of real-world data could be a potential reason of circumventing the exponential dependency.
Another limitation is that we studied the average encoder, while usually people directly apply the original encoder to downstream.
The latter case requires the model to perform some type of implicit transformation as the upstream and downstream have different input spaces ($\gA$ and $\gX$),
and this transformation cannot be studied within the architecture-free framework of this work.
Additionally, an open problem is how to deal with the common upstream-to-downstream data distribution shift in practice, for which downstream fine-tuning could be critical.
This has been discussed in \citet{cabannes2023ssl}, but a more general analysis is desired.

\ificlrfinal
\subsubsection*{Code}
The code of \Cref{sec:bt-big} can be found at \url{https://colab.research.google.com/drive/1loSZLLI-qfoKE7BCIi1SWJKgruU6i4ku?usp=sharing}.

\subsubsection*{Acknowledgments}
We would like to thank Yutong He, Di He and Elan Rosenfeld for their very useful discussions, and Jeremy Cohen, Amrith Setlur, Xiaoyu Huang, Samuel Sokota, Zhili Feng, Swaminathan Gurumurthy and Zhengyang Geng for their feedback on the early draft of this work. We are also very grateful to our anonymous ICLR reviewers, with whose help this work has been greatly improved. We acknowledge the support of NSF via IIS-2211907, IIS-1909816, CCF-2238523, ONR via N00014-23-1-2368, and Amazon.
\fi

\bibliographystyle{iclr2024_conference}

\clearpage 
\appendix
\section{Discussion}
\label{app:discussion}

\paragraph{Why do we need Assumption \ref{ass:lap} and can it be relaxed?}
We first note that an assumption on the target function is necessary to establish any non-vacuous learning theory.
This is implied by the No Free Lunch Theorem \citep{wolpert1997no},
which states that if the target function is sampled uniformly at random from all possible target functions,
then any learner achieves the same performance as random guess.

Assumption \ref{ass:lap} implies an isometry property.
As pointed out by \citet{2023arXiv230405366G},
real-world datasets are highly structured, and the structures are largely shared across different domains.
Assumption \ref{ass:lap} and implied isometry property suggest that a well-designed augmentation is a key to capturing such a low-dimensional cross-domain structure.
Indeed, better augmentation has been the main driving force in the recent progress in contrastive learning and masked language modeling.
Our focus on augmentation is in stark contrast to prior work on the generalization of overparameterized models (see Appendix \ref{app:related-work}),
which can find the global minima fitting all samples,
yet cannot capture the invariant structure within the data and the target concept.

If we were to relax Assumption \ref{ass:lap} (\eg{} the target function $f^*$ can be outside but very close to $\laph$), additional assumptions on $f^*$ would be required for our proof with uniform deviation bounds to hold.
Namely, $f^*$ can no longer be \textit{any} $L^2$ function that is close to $\laph$.
For instance, we at least need to assume that $|f^*(x)|$ is \aew{} uniformly bounded by some constant $C$ which the final bound will depend on, or assume that $f^*$ belongs to some predefined RKHS similar to \citet{cabannes2023ssl}.
We feel that these additional assumptions would complicate our analysis and obscure the role of augmentation, 
hence we choose to assume $f^* \in \laph$ instead.

\paragraph{The main theorem allows the encoder to use any model. Does it mean that the inductive bias of the model is not useful as opposed to prior work?}
No, the inductive bias is still useful.
In fact, we believe that the model family and data augmentation are two key ingredients of modern representation learning.
The former has been studied by prior work \citep{saunshi2022understanding,haochen2022theoretical}, \rebuttal{while this work studies the latter and introduces a class of augmentation complexity based generalization bounds that is orthogonal to the class of classical model complexity based bounds, in the sense that the bounds are nonparametric and do not require any inductive biases from the model family.}
This new class of guarantees deepens our understanding because it clarifies the roles of these two ingredients.
However, a downside of our analysis is that it cannot leverage the geometric structure of the data.
For example, the data might be so good that it is linearly separable in the Euclidean space,
but our nonparametric framework cannot make use of this information.

The inductive bias of the model has at least two roles:
(i) It further reduces the complexity on top of the data augmentation, leading to even tighter bounds;
(ii) While we used the average encoder in our analysis,
in practice people usually directly apply the pretrained encoder to the data in the downstream task.
In this case, the model architecture has to implicitly perform some kind of transformation,
since the pretraining and downstream samples come from different spaces, in which case the inductive biases of the model will be helpful.

\rebuttal{
\paragraph{Comparing to the claim in \citet{saunshi2022understanding}.}
\citet{saunshi2022understanding} argued that there exist setups where any learning guarantees independent of the model inductive bias are necessarily vacuous.
They presented an example of a disjoint augmentation, such that any two different original samples $x_1$ and $x_2$ cannot be augmented to the same $a$.
In this case, our $K_X$ defined in \cref{eq:Ka} is zero everywhere, so apparently our analysis will not work.
The observation made by \citet{saunshi2022understanding} is that even for this augmentation, if there is a good model (feature map), then good generalization can still be achieved, and therefore they claimed that model inductive bias is irreplaceable. 

The analysis in this work is compatible with their observation.
Our analysis shows that
if the augmentation is good enough (such that \cref{ass:lap} is satisfied and the augmentation complexity is small enough, which is in fact a strong assumption), then we can achieve good generalization only with the augmentation and without any help of the model.
In other words, the disjoint augmentation example showcases a poor choice of augmentation, which is orthogonal to the emphasis of this paper.
}

\section{Related Work}
\label{app:related-work}
\subsection{Learning Guarantees for Big Models}

Big models with millions and even billions of parameters have been proved to work well on a wide variety of tasks.
However, their empirical success cannot be explained by
classical generalization bounds that use the VC-dimension or the Rademacher complexity.
This discrepancy was made clear by \citet{zhang2017understanding},
which shows that while enjoying good generalization on real tasks, modern neural networks have sufficient capacity to fit random labels.
This challenges the classical generalization theory, which appeals to the Occam's razor argument that big models cannot generalize well in general.
There is a large body of research on improving classical generalization bounds for neural networks and big models, and we highlight a few seminal works below.

\paragraph{Algorithmic stability.}
This line of research attributes the good generalization of complex models to the stability of the (random) training algorithm.
There is a folklore that one key reason why neural networks generalize so well is that they are trained by stochastic gradient descent (SGD).
Let $f(x;A,S)$ be the predictor trained by a random training algorithm $A$ on a finite training set $S$.
$A$ is called \textit{uniformly stable} \citep{kearns1997algorithmic,bousquet2002stability},
if $E_A|f(x;A,S) - f(x;A,S')|$ is uniformly bounded when $S$ and $S'$ only differ by one training sample.
$f$ is guaranteed to enjoy good generalization if $A$ is uniformly stable.
Then, \citet{hardt2016train} proved that SGD is uniformly stable under certain conditions.
There are also weaker notions of algorithmic stability \citep{mukherjee2006learning,shalev2010learnability}.
It has also been shown in \citet{li2018algorithmic,arora2019implicit} that gradient methods for overparameterized models provide an implicit regularization effect that helps generalization.
However, \citet{zhang2021understanding} challenged this line of work by
empirically showing that SGD on real neural networks is not very stable,
even with regularization.

\paragraph{PAC-Bayes and sharpness.}
Compared to algorithmic stability, the PAC-Bayes line of analysis attributes generalization to model stability, that is the empirical risk is insensitive to perturbations in model parameters.
If the model is stable, then the empirical risk is small within a large region around the global minima,
meaning that the risk landscape has low sharpness \citep{keskar2016large,neyshabur2017exploring} around the global minima.
The PAC-Bayes generalization bound \citet{mcallester1999pac,langford2001bounds} considers a prior distribution $P$ over the model's function space (usually Gaussian or uniform),
and a posterior stationary distribution $Q$ found by a training algorithm.
If $D_{KL}(Q \parallel P)$ is small,
then good generalization is guaranteed.
Thus, the more models there are in the support of $Q$,
the better the generalization.
So we can see that a lower sharpness around the global minima leads to better generalization.
Sharpness-based bounds are shown to be empirically tighter than other bounds in \citet{jiang2019fantastic},
and inspire a class of training method called sharpness-aware minimization (SAM) \citep{foret2021sharpnessaware,bartlett2022dynamics,wen2022does}.
Other generalization guarantees derived from the PAC-Bayes analysis with similar ideas include low spectral norm \citep{bartlett2017spectrally,neyshabur2017pac}, flat minima \citep{hochreiter1997flat}, and derandomization \citep{negrea2020defense}.

One prominent class of analysis within the PAC-Bayes paradigm uses \textit{compression} to obtain tighter, and empirically non-vacuous generalization bounds.
For instance, \citet{dziugaite2017computing} proposed to directly optimize the PAC-Bayes bound for multi-layer perceptrons,
and it was improved by \citet{rivasplata2019pac} with a relaxation on the bound.
Then, \citet{arora2018stronger} formulated the compression framework, and showed that models with noise stability are more compressible and thus lead to tighter bounds (also discovered in \citet{nagarajan2019deterministic}).
\citet{zhou2018non} then used the compression framework to obtain non-vacuous generalization bounds for ImageNet.
More recently, \citet{lotfi2022pac} compressed models by considering model equivariance,
and \citet{2023arXiv230405366G} verified that large language models favor simpler sequences.
Both papers proposed to use generalization bounds based on the Kolmogorov complexity \citep{kolmogorov1963tables}.

\paragraph{Implicit bias of overparameterized models.}
This line of analysis attributes the good generalization to the implicit bias of a specific model architecture,
\ie{} with certain training algorithms such as SGD, these models can find the global minima that can generalize.
First, on the optimization side, there is a series of work that shows that gradient descent finds the global optima of overparameterized neural networks,
starting from the results in \citet{kawaguchi2016deep} for linear neural networks,
and then \citet{li2017convergence,du2018gradient,du2019gradient,arora2019fine} for two-layer fully-connected ReLU networks,
and \citet{allen2019convergence} for other architectures such as CNNs and ResNets.
The analysis on overparameterized fully-connected networks with two or more layers is then extended to generalization in \citet{du2018power,allen2019learning}.
There is also analysis based on neural tangent kernels (NTK) \citep{jacot2018neural},
including \citet{lee2019wide,arora2019exact}.
Most of these work requires the neural network to be overparameterized, \ie{} the number of neurons is polynomially large comparing to the input size \citep{allen2019convergence}.
One caveat of this line of analysis, as argued in \citet{chizat2019lazy}, 
is that these overparameterized models might fall into the ``lazy training'' regime,
meaning that the training dynamics of these models resemble those of a linear model.
They argued that the phenomenal success of deep neural networks is unlikely to be due to their similarity to linear models,
and \citet{wang2020beyond} provided a counter-argument.

\subsection{Theoretical Study on Augmentation Based Representation Learning}
\label{app:comparison}

Many recent studies have been devoted to theoretically understanding representation learning.
One line of research studies the effectiveness of contrastive learning by showing its features are optimal for linear predictor of certain downstream tasks~\citep{saunshi2019theoretical,tosh2021contrastive,tosh2021multiview}, 
robust to class imbalance~\citep{liu2021selfsupervised}, 
and suitable for unsupervised domain adaptation~\citep{shen2022connect,haochen2022beyond}.
Masked prediction tasks have been shown to be useful for reducing the downstream sample complexity~\citep{lee2021predicting} and for parameter identifiability~\citep{liu2022masked}.
Relating to language applications, \citet{saunshi2019theoretical} explained why next-word prediction can benefit sentiment classification, 
and \citet{wei2021pretrained} studied the effect of prompt tuning through the lens of implicit Bayesian inference.
Regarding the optimization in representation learning,
there have been prior works on the training dynamics and loss landscapes of contrastive learning~\citep{WenLi21,jing2022understanding,tian2022deep}, non-contrastive learning~\citep{TianCG21,PokleTLR22,wen2022mechanism}, and masked prediction \citep{xiong2020layer,pmlr-v119-huang20f}.

Representation learning has adopted the idea of compression.
Classical representation learning aims to extract the low-dimensional manifold on which the data resides \citep{belkin2003laplacian,belkin2004semi,bengio2004learning}.
More recently, \cite{yu2020learning} proposes maximal coding rate reduction (MCR$^2$), which learns an explicitly meaningful low-dimensional representation by maximizing the rate reduction, defined as the difference in coding rate, \ie{} the number of bits needed to encode the embedding before and after a certain partition (\eg{} according to different classes).
\citet{yu2020learning} showed that MCR$^2$ is closely related to contrastive learning.
Follow-up work includes \citet{chan2022redunet,dai2022ctrl}.

More closely related to this work is the line of work that formulates contrastive learning as a Laplacian operator over the augmentation graph.
The idea of studying data augmentation from a kernel perspective was first explored in \citet{mroueh2015learning,raj2017local,dao2019kernel}.
The augmentation graph was defined in \citet{haochen2021provable}
in whose Theorem 4.2 proved a generalization bound for the spectral contrastive loss that includes a Rademacher complexity term \wrt{} $a_1,\cdots,a_N$,
which is incurred by sampling only one $a$ from each $x$ (see their Definition D.3).
Then, \citet{saunshi2022understanding} pointed out that this model-class-free bound could be vacuous with a hypercube construction.
One thing to note is that while \citet{saunshi2022understanding} claimed that the bound in \citet{haochen2021provable} is ``function class independent'', actually it is not.\footnote{Theorem 4.2 in the NeurIPS version and the arXiv [v7] version of \citet{haochen2021provable} have different forms. We have confirmed with the authors that this theorem does depend on the complexity of the function class.}
As a response to \citet{saunshi2022understanding},
\citet{haochen2022theoretical} included the effect of the encoder's inductive bias into their generalization bound.
Then, \citet{wang2023a} connected contrastive learning to message passing on the augmentation graph.

In \cref{sec:conclusion} we highlighted two related recent papers.
\cite{wang2022spectral} studied a similar RKHS as the $\laph$ defined in this work.
In particular, the kernels $k_x$ and $k_z$ in their work are similar to the kernels $K_A$ and $K_X$ in this work.
However, their work studies the conditional moment model, which is a completely different problem than ours. 
Their bounds are also distinct from ours and are with stronger assumptions.
Nevertheless,
the fact that they derived similar kernels to ours from a completely different angle suggests a strong innate connection,
which alludes to the role of augmentation in self-supervised pretraining, namely, it defines a ordered set of features as we articulated after \cref{eqn:lap-equ}.

Another related work is \citet{cabannes2023ssl}, who studied the same problem as this work.
The difference is that their work uses a pre-defined kernel,
while the kernels $K_A$ and $K_X$ in this work are completely induced by the augmentation, \ie{} there would be not be a kernel without the augmentation.
Their work studied regularized kernel regression,
while this work studies unregularized kernel regression.
Moreover, the bounds derived in their work and ours are significantly different.
We would also like to credit \citet{cabannes2023ssl} for studying the upstream-to-downstream distribution shift which provides inspiring preliminary analysis.
Last but not least, the follow-up work by \citet{zhai2024stkr} applies the framework developed in this paper to a more general semi-supervised kernel learning setting.

Finally, similar to \citet{saunshi2022understanding}, we point out one common issue of existing learning guarantees for contrastive learning: Most of them require the 
samples from the same downstream class to have some type of ``overlap'' in the augmentation graph,
but we find that sometimes these assumptions could be too strong to be reasonable.
For instance, \citet{wang2022chaos} assumed intra-class connectivity, \ie{} the subgraph of the augmentation graph given by each downstream class is connected; 
\citet{huang2023towards} assumed that each downstream class has a ``main part'' of samples, in which any two samples can be augmented to be close in Euclidean distance.
We find these two assumptions hard to be reasonable for any practical scenario.

\section{Variational Objectives that Extract the Top-d Eigenspace}
\label{app:objectives}
In Proposition \ref{prop:reg}, we demonstrated the optimality of the top-$d$ eigenspace.
The question is: How to extract the top-$d$ eigenspace?
We cannot do so via explicit eigendecomposition of the kernel in most real tasks,
where the kernel is huge and sparse.
Instead, we will show that several pretraining objectives can extract the top-$d$ eigenspace assuming optimization converges to the global minima,
because these objective are uniquely optimized by the top-$d$ eigenspace.

\paragraph{Contrastive learning.}
Let us warm up with contrastive learning, whose connection to kernels has been studied in prior work,
among which \citet{haochen2021provable} introduced the \textit{spectral contrastive loss}:
\begin{equation}
    \label{eqn:spectral-loss}
    \gL_{\textnormal{SCL}}(\hat{\Phi}) =  -2 \E \left [ \langle \hat{\Phi}(A), \hat{\Phi}(A^+) \rangle \right ] + \E \left [ \langle \hat{\Phi}(A), \hat{\Phi}(A^-) \rangle^2 \right ] ,
\end{equation}
where $(A,A^+)$ is a positive pair that is jointly sampled from $P_X(a,a^+)$, and $A^-$ is a negative sample that is sampled from $\pa$ independently of $A$. It has been shown that:
\begin{thm}
\label{thm:contrastive-minimizer}
\citep{haochen2021provable}
Eqn. (\ref{eqn:spectral-loss}) is minimized by the top-$d$ eigenspace, and is uniquely minimized by the top-$d$ eigenspace if $\lambda_{d+1} < \lambda_d$.
\end{thm}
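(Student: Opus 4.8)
The plan is to rewrite $\gL_{\textnormal{SCL}}$ as a low-rank matrix (operator) approximation objective and then invoke the Hilbert--Schmidt version of the Eckart--Young--Mirsky theorem. First I would expand the loss using the sampling distributions: positive pairs are drawn from the distribution of positive pairs on $\gA\times\gA$, which (as densities) equals $\ka(a,a')\,\pa(a)\,\pa(a')$ by \cref{eqn:ka-def}, while negatives are drawn independently from $\pa$. This gives
\begin{equation*}
\gL_{\textnormal{SCL}}(\hPhi) = \iint \Big[ \big(\hPhi(a)^\top\hPhi(a')\big)^2 - 2\,\ka(a,a')\,\hPhi(a)^\top\hPhi(a') \Big]\,d\pa(a)\,d\pa(a').
\end{equation*}
Completing the square pointwise inside the integral yields
\begin{equation*}
\gL_{\textnormal{SCL}}(\hPhi) = \big\| \hPhi^{\otimes 2} - \ka \big\|_{L^2(\pa\otimes\pa)}^2 - \|\ka\|_{L^2(\pa\otimes\pa)}^2,
\end{equation*}
where $\hPhi^{\otimes 2}(a,a') := \hPhi(a)^\top\hPhi(a')$ and the last term is a finite constant independent of $\hPhi$ (finiteness being exactly the Hilbert--Schmidt condition $\int \ka(a,a')^2\,d\pa(a)\,d\pa(a')<\infty$). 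Hence minimizing $\gL_{\textnormal{SCL}}$ over $d$-dimensional encoders is equivalent to finding the best $L^2(\pa\otimes\pa)$-approximation of $\ka$ by a symmetric positive-semidefinite kernel of rank at most $d$.

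Next I would bring in the spectral decomposition. By \cref{prop:dual-2}, $\tax\tax^*$ is a compact self-adjoint positive-semidefinite Hilbert--Schmidt operator with eigenpairs $(\lambda_i,\phi_i)$ and $\ka(a,a') = \sum_{i:\lambda_i>0}\lambda_i\,\phi_i(a)\,\phi_i(a')$ (convergence in $L^2(\pa\otimes\pa)$). The Eckart--Young--Mirsky theorem for Hilbert--Schmidt operators then says that among all rank-$\le d$ kernels $R$, the distance $\|\ka - R\|_{L^2(\pa\otimes\pa)}$ is minimized by the truncation $\ka_d := \sum_{i=1}^d\lambda_i\,\phi_i(a)\,\phi_i(a')$. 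Since every $\lambda_i\ge 0$, $\ka_d$ is itself positive-semidefinite, so it is realizable as $\hPhi^{\otimes 2}$ with $\hPhi = [\sqrt{\lambda_1}\phi_1,\dots,\sqrt{\lambda_d}\phi_d]$ (and any invertible linear reparametrization of $\hPhi$ leaves $\hPhi^{\otimes 2}$ unchanged): the PSD/rank-$\le d$ constraint is met automatically precisely because $\ka\succeq 0$. This shows the top-$d$ eigenspace minimizes $\gL_{\textnormal{SCL}}$. For uniqueness when $\lambda_{d+1}<\lambda_d$, the uniqueness clause of Eckart--Young--Mirsky forces $R=\ka_d$ at any minimizer, and $\ka_d$ determines exactly $\sspan\{\phi_1,\dots,\phi_d\}$; transporting through the duality $\psi_i = \lambda_i^{-1/2}\tax^*\phi_i$ shows the associated average encoder $\hPsi=\tax^*\hPhi$ spans $\psi_1,\dots,\psi_d$.

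The main obstacle I anticipate is not the algebra of completing the square but the infinite-dimensional bookkeeping: stating and applying the Hilbert--Schmidt form of Eckart--Young--Mirsky (including its uniqueness part) rather than the finite-matrix version, confirming $\ka\in L^2(\pa\otimes\pa)$ so that every expectation in the loss is finite, and verifying that the parametrization $R=\hPhi^{\otimes 2}$ with $\hPhi$ a $d$-tuple of $\lap$-functions exactly captures the feasible set ``symmetric PSD, rank $\le d$'' of the approximation problem. A secondary subtlety is eigenvalue multiplicity at the boundary: when $\lambda_{d+1}=\lambda_d$ the minimizing span is genuinely non-unique, which is exactly why the uniqueness claim needs the strict spectral gap $\lambda_{d+1}<\lambda_d$.
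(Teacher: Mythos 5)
Your proposal is correct and follows essentially the same route as the paper: both complete the square to recast $\gL_{\textnormal{SCL}}$ as a best rank-$\le d$ (automatically PSD) approximation of the positive-pair kernel and then invoke Eckart--Young--Mirsky, the paper doing this in eigenbasis coordinates ($\hPhi = \mC\Phistar$, $\mB = \mC^{\top}\mC$, $\gL_{\textnormal{SCL}} = \|\mB - \mD_\lambda\|_F^2 - \|\mD_\lambda\|_F^2$) while you phrase it at the kernel level in $L^2(\pa\otimes\pa)$. Your explicit appeal to the Hilbert--Schmidt form of Eckart--Young--Mirsky and the finiteness of $\|\ka\|_{L^2(\pa\otimes\pa)}$ is, if anything, slightly more careful about the infinite-dimensional bookkeeping than the paper's application of the finite-matrix statement, but it is the same argument.
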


Here, ``minimized by the top-$d$ eigenspace'' means that the objective is minimized when $\hat{\Phi}$ spans the top-$d$ eigenspace,
and ``uniquely minimized'' means that this condition is necessary. The proof is based on the following Eckart-Young-Mirsky Theorem:
\begin{thm}[Eckart-Young-Mirsky]
\label{thm:eym}
Given a matrix $\mA \in \R^{m \times n}$, consider the following optimization problem:
\begin{equation}
    \min_{\mB \in \R^{m \times n}, \textnormal{rank} (\mB) \le k} \| \mA - \mB \|_F^2
\end{equation}
where $1 \le k \le \min \{ m, n \}$. Let the SVD of $\mA$ and $\mB$ be $\mA = \mU \mSigma \mV^{\top}$ and $\mB = \mU_1 \mSigma_1 \mV_1^{\top}$, where $\mSigma = \diag(\sigma_1,\cdots,\sigma_{\min \{ m,n \}})$, and $\sigma_1 \ge \sigma_2 \ge \cdots \ge \sigma_{\min \{ m, n \}}$. Then $\mB$ is a minimizer of the above problem if $\mU_1$ is the first $k$ columns of $\mU$, $\mV_1$ is the first $k$ columns of $\mV$, and $\mSigma_1$ is the upper left $k \times k$ block of $\mSigma$. Moreover, if $\sigma_{k+1} < \sigma_k$, then this is the unique minimizer.
\end{thm}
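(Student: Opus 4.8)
The plan is to establish two matching halves --- \textbf{achievability}, that the truncated SVD $\mB = \mU_1\mSigma_1\mV_1^\top$ attains Frobenius residual $\sum_{i=k+1}^{\min\{m,n\}}\sigma_i^2$, and a \textbf{lower bound} showing no rank-$\le k$ matrix can do better --- and then to read off the uniqueness claim under $\sigma_{k+1}<\sigma_k$ from the equality case of the lower bound. Achievability is immediate from orthogonal invariance of $\|\cdot\|_F$: $\|\mA-\mB\|_F = \|\mU^\top(\mA-\mB)\mV\|_F$ reduces to $\|\diag(0,\dots,0,\sigma_{k+1},\dots,\sigma_{\min\{m,n\}})\|_F$, whose square is $\sum_{i>k}\sigma_i^2$.

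For the lower bound I would first reduce to a Rayleigh-quotient problem. Every rank-$\le k$ matrix $\mB$ has column space contained in $\mathrm{range}(\mX)$ for some $\mX \in \R^{m\times k}$ with $\mX^\top\mX = \mI_k$, and for a fixed such $\mX$ the residual $\|\mA - \mX\mC\|_F^2$ is minimized over $\mC$ by the least-squares choice $\mC = \mX^\top\mA$, giving $\|\mA - \mX\mX^\top\mA\|_F^2 = \|\mA\|_F^2 - \Tr(\mX^\top\mA\mA^\top\mX)$. Hence
\[
\min_{\rank(\mB)\le k}\|\mA - \mB\|_F^2 = \|\mA\|_F^2 - \max_{\mX^\top\mX = \mI_k}\Tr(\mX^\top\mA\mA^\top\mX).
\]
Since $\mA\mA^\top$ is symmetric positive semidefinite with eigenvalues $\sigma_1^2\ge\sigma_2^2\ge\cdots$, Ky Fan's maximum principle \citep{fan1949theorem} gives $\max_{\mX^\top\mX=\mI_k}\Tr(\mX^\top\mA\mA^\top\mX) = \sum_{i=1}^k\sigma_i^2$, attained when $\mathrm{range}(\mX)$ equals the span of the top-$k$ left singular vectors $u_1,\dots,u_k$ (the first $k$ columns of $\mU$). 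Substituting yields $\|\mA-\mB\|_F^2 \ge \sum_{i>k}\sigma_i^2$ for all rank-$\le k$ matrices $\mB$, and one checks that $\mX\mX^\top\mA = \mU_1\mSigma_1\mV_1^\top$ when $\mathrm{range}(\mX)=\sspan(u_1,\dots,u_k)$, matching achievability.

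The uniqueness statement under $\sigma_{k+1}<\sigma_k$ is the step I expect to require the most care, since it hinges on the sharp form of Ky Fan's principle: the strict eigenvalue gap $\sigma_k^2 > \sigma_{k+1}^2$ forces every maximizing $\mX$ to satisfy $\mathrm{range}(\mX)=\sspan(u_1,\dots,u_k)$ exactly (even though $\mX$ itself is determined only up to right multiplication by a $k\times k$ orthogonal matrix), after which $\mB = \mX\mX^\top\mA$ is pinned down as the orthogonal projection of $\mA$ onto that subspace, namely $\mU_1\mSigma_1\mV_1^\top$; the residual freedom of choosing singular vectors within equal-$\sigma$ blocks does not change $\mB$. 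A self-contained alternative I would keep in reserve, avoiding the appeal to Ky Fan, uses Weyl's inequality for singular values $\sigma_{i+j-1}(\mP+\mQ)\le\sigma_i(\mP)+\sigma_j(\mQ)$: taking $\mP = \mA-\mB$, $\mQ = \mB$, $j=k+1$ and using $\sigma_{k+1}(\mB)=0$ gives $\sigma_{i+k}(\mA)\le\sigma_i(\mA-\mB)$, so $\|\mA-\mB\|_F^2 = \sum_i\sigma_i(\mA-\mB)^2 \ge \sum_i\sigma_{i+k}(\mA)^2 = \sum_{i>k}\sigma_i^2$; tracking when each of these inequalities is tight again yields uniqueness, at the cost of somewhat heavier bookkeeping on singular subspaces.
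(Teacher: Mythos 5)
Your proposal is correct, but there is nothing in the paper to compare it against: Theorem~\ref{thm:eym} is stated as the classical Eckart--Young--Mirsky theorem and used as a black box (to show the spectral contrastive and spectral CLIP losses are minimized by the top-$d$ eigenspace); the paper never proves it. Your route --- restrict a rank-$\le k$ approximant to a $k$-dimensional column space $\mathrm{range}(\mX)$ with $\mX^{\top}\mX=\mI_k$, solve the inner least-squares problem to get $\mB=\mX\mX^{\top}\mA$ and $\|\mA-\mX\mX^{\top}\mA\|_F^2=\|\mA\|_F^2-\Tr(\mX^{\top}\mA\mA^{\top}\mX)$, then invoke Ky Fan's maximum principle --- is a standard and sound way to obtain both the optimal value $\sum_{i>k}\sigma_i^2$ and, from the equality case, uniqueness when $\sigma_{k+1}<\sigma_k$; it also sits naturally next to the paper, which cites \citet{fan1949theorem} and proves a functional analogue of exactly this trace bound by Abel summation (Lemma~\ref{lem:abel-trace}, used in Proposition~\ref{prop:lda}). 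The one step you assert rather than prove is the sharp form of Ky Fan: writing $\Tr(\mX^{\top}\mA\mA^{\top}\mX)=\sum_j\sigma_j^2 t_j$ with $t_j=\|\mX^{\top}u_j\|_2^2\in[0,1]$ and $\sum_j t_j=k$, the gap $\sigma_k^2>\sigma_{k+1}^2$ forces any maximizer to have $t_j=1$ for $j\le k$ and $t_j=0$ for $j>k$, hence $u_1,\dots,u_k\in\mathrm{range}(\mX)$ and $\mathrm{range}(\mX)=\mathrm{span}(u_1,\dots,u_k)$; this is the same bookkeeping as Lemma~\ref{lem:abel-trace} and closes the argument, since Pythagoras then pins the minimizer to $\mX\mX^{\top}\mA$, which equals $\mU_1\mSigma_1\mV_1^{\top}$ regardless of how singular vectors are chosen inside equal-$\sigma$ blocks among the top $k$ (the top-$k$ left singular subspace itself is unique under the gap). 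Your Weyl-inequality fallback for the lower bound is likewise valid.
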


To make this paper self-contained, here we recap the proof of Theorem \ref{thm:contrastive-minimizer} in \citet{haochen2021provable}:
\begin{proof}
Define matrix $\mC$ as: $\hat{\Phi} = \mC \Phistar$, and matrix $\mB$ as $\mB = \mC^{\top} \mC$. Then, $\langle \hat{\Phi}, \hat{\Phi} \rangle = \Phistar^{\top} \mB \Phistar$.
Let the SVD of $\mC$ be $\mC = \mU \mSigma \mV^{\top}$.
Denote $\mD_\lambda = \diag(\lambda_1,\lambda_2,\cdots)$.
Then, the spectral contrastive loss is equivalent to
\begin{equation}
    \gL_{\textnormal{SCL}}(\hat{\Phi}) = -2 \sum_i \lambda_i b_{i,i} + \sum_{i,j} b_{i,j}^2 = \| \mB - \mD_\lambda \|_F^2 - \| \mD_\lambda \|_F^2.
\end{equation}

So it suffices to minimize $\| \mB - \mD_\lambda \|_F^2$ where $\textnormal{rank}(\mB) \le d$. 
By Theorem \ref{thm:eym}, this is minimized if $\mV(i,j) = \delta_{i,j}$ and $\mSigma = \diag(\sqrt{\lambda_i})$, which means that $\hat{\Phi} = \mU \diag(\sqrt{\lambda_1},\cdots,\sqrt{\lambda_d}) \Phistar$ for some orthogonal matrix $\mU \in \R^{d \times d}$. Moreover, by Thm. \ref{thm:eym}, this is the unique minimizer if $\lambda_{d+1} < \lambda_d$.
\end{proof}

\paragraph{CLIP.}
CLIP \citep{CLIP} maximizes the similarity between two samples from different modalities.
There is an objective similar to the spectral contrastive loss, which we term the \textit{spectral CLIP loss}:
\begin{equation}
\label{eqn:spectral-clip-loss}
    \gL_{\textnormal{SCLIP}}(\hat{\Phi}, \hat{\Xi})  =  -2 \E [ \langle \hat{\Phi}(A), \hat{\Xi}(X^+) \rangle ] + \E [ \langle \hat{\Phi}(A), \hat{\Xi}(X^-) \rangle^2],
\end{equation}
where $\hat{\Phi}$ and $\hat{\Xi}$ are encoders for different modalities (\eg{} image and text).
$(A,X^+)$ is a positive pair, while $X^-$ is a negative sample independent of $A$.
This loss also appeared in \citet{wang2022spectral} (see their Section 4.1).
We will show that this loss also extracts the top-$d$ eigenspace.
\begin{thm}
\label{thm:spectral-clip}
Eqn. (\ref{eqn:spectral-clip-loss}) is minimized by the top-$d$ eigenspace, and is uniquely minimized by the top-$d$ eigenspace if $\lambda_{d+1} < \lambda_d$.
\end{thm}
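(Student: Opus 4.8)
The plan is to follow the proof of Theorem~\ref{thm:contrastive-minimizer} almost verbatim, the only genuinely new input being the spectral decomposition of the cross density, Eqn.~(\ref{eqn:rn-der}). First I would expand the two encoders in the dual orthonormal bases of Proposition~\ref{prop:dual-2}: write $\hat{\Phi} = \mC\,\Phistar$ and $\hat{\Xi} = \mE\,\Psistar$, where $\Phistar = [\phi_1,\phi_2,\cdots]$ is the eigenbasis of $\tax\tax^*$ (an orthonormal basis of $\lap$), $\Psistar = [\psi_1,\psi_2,\cdots]$ is the eigenbasis of $\tax^*\tax$ (an orthonormal basis of $\lxp$), and $\mC,\mE \in \R^{d\times\infty}$. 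Set $\mB := \mE^{\top}\mC$; this has rank at most $d$, and conversely every rank-$\le d$ (Hilbert--Schmidt) matrix arises this way via a rank factorization.

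Second I would evaluate the two terms of $\gL_{\textnormal{SCLIP}}$ in this parametrization. Since the positive pair $(A,X^+)$ is drawn from $P_{\gA\gX}$ and $\frac{dP_{\gA\gX}}{d(\pa\otimes\px)} = \sum_l \lambda_l^{1/2}\phi_l(a)\psi_l(x)$ by Eqn.~(\ref{eqn:rn-der}), orthonormality of $\{\phi_i\}$ in $\lap$ and of $\{\psi_j\}$ in $\lxp$ gives $\E[\phi_i(A)\psi_j(X^+)] = \lambda_i^{1/2}\delta_{ij}$, hence $\E[\langle \hat{\Phi}(A),\hat{\Xi}(X^+)\rangle] = \sum_i \lambda_i^{1/2}B_{ii}$. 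For the squared term, independence of $X^-$ from $A$ together with the same orthonormality yields $\E[\langle\hat{\Phi}(A),\hat{\Xi}(X^-)\rangle^2] = \Tr\!\big((\mC\mC^{\top})(\mE\mE^{\top})\big) = \|\mB\|_F^2 = \sum_{i,j}B_{ij}^2$. Completing the square gives
\begin{equation*}
  \gL_{\textnormal{SCLIP}}(\hat{\Phi},\hat{\Xi}) \;=\; \big\| \mB - \diag(\sqrt{\lambda_1},\sqrt{\lambda_2},\cdots) \big\|_F^2 \;-\; S_\lambda ,
\end{equation*}
the CLIP analogue of the identity in the proof of Theorem~\ref{thm:contrastive-minimizer}, with $\sqrt{\lambda_i}$ in place of $\lambda_i$ since the cross density carries $\lambda_i^{1/2}$ rather than $\lambda_i$.

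Third I would invoke Eckart--Young--Mirsky (Theorem~\ref{thm:eym}) with $\mA = \diag(\sqrt{\lambda_1},\sqrt{\lambda_2},\cdots)$, whose SVD is itself: the loss is minimized exactly when $\mB$ is a best rank-$d$ approximation of $\mA$, which is achieved when $\hat{\Phi}$ spans the top-$d$ eigenspace of $\tax\tax^*$ and $\hat{\Xi}$ spans the top-$d$ eigenspace of $\tax^*\tax$ (then $\mB = \diag(\sqrt{\lambda_1},\cdots,\sqrt{\lambda_d},0,\cdots)$), and when $\lambda_{d+1}<\lambda_d$ this $\mB$ is the \emph{unique} best rank-$d$ approximation. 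For the converse direction of uniqueness I would argue on ranks: if $\mB = \mE^{\top}\mC$ equals $\diag(\sqrt{\lambda_1},\cdots,\sqrt{\lambda_d},0,\cdots)$, which has rank $d$ (using $\lambda_d > 0$, implied by $\lambda_{d+1}<\lambda_d$), then $\rank\mC = \rank\mE = d$; the row space of $\mE^{\top}\mC$ is contained in the row space of $\mC$, and since both are $d$-dimensional they are equal, forcing the columns of $\mC$ indexed beyond $d$ to vanish, \ie{} $\hat{\Phi}$ lies in $\sspan(\phi_1,\cdots,\phi_d)$ and, having rank $d$, spans it; symmetrically (via column spaces) $\hat{\Xi}$ spans $\sspan(\psi_1,\cdots,\psi_d)$.

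I expect the main obstacle to be the bookkeeping around the two-encoder structure: identifying the correct ``rank-$d$ object'' as the product $\mB = \mE^{\top}\mC$ rather than a single Gram matrix (as in the one-encoder case), and then translating ``$\mB$ is the unique best rank-$d$ approximation'' back into a statement about \emph{both} spans via the rank argument above. The cross-term computation itself is routine once Eqn.~(\ref{eqn:rn-der}) is applied, and the infinite-dimensional indexing is harmless because $\diag(\sqrt{\lambda_i})$ is Hilbert--Schmidt, \ie{} $\sum_i \lambda_i = S_\lambda < \infty$.
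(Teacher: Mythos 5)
Your proposal is correct and takes essentially the same route as the paper: expand both encoders in the dual eigenbases of Proposition~\ref{prop:dual-2}, use \cref{eqn:rn-der} to evaluate the positive and negative terms, and complete the square to get $\gL_{\textnormal{SCLIP}} = \| \mB - \mD_\lambda^{1/2} \|_F^2 - S_\lambda$ with $\mB$ the product of the two coefficient matrices, then invoke Eckart--Young--Mirsky (Theorem~\ref{thm:eym}). The only difference is the endgame: the paper first optimizes $\hat{\Xi}$ out by least squares (via the SVD of $\mC$) and applies EYM to the resulting projection problem $\min_{\mV} \| (\mI - \mV \mV^{\top}) \mD_\lambda^{1/2} \|_F^2$, whereas you apply EYM directly to the rank-$\le d$ product and recover that \emph{both} encoders must span their respective top-$d$ eigenspaces via the row/column-space rank argument --- an equally valid finish that is, if anything, slightly more explicit on the uniqueness direction.
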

\begin{proof}
    Let $\hat{\Phi} = \mC \Phistar$ and $\hat{\Xi} = \mS \Psistar$.
    Let $\Xi^\dagger = \tax \hat{\Xi} = \mS \mD_\lambda^{1/2} \Phistar$.
    Then, we have
    \begin{align*}
        \E [ \langle \hat{\Phi}(A), \hat{\Xi}(X^+) \rangle ] &= \iint \left \langle \hat{\Phi}(a), \hat{\Xi}(x) \right  \rangle p(x|a) p(a) da dx \\ 
        & = \int \left \langle \hat{\Phi}(a), \int \hat{\Xi}(x) p(x|a) dx \right \rangle p(a) da \\ 
        &= \int \left \langle \hat{\Phi}(a), \Xi^\dagger(a) \right \rangle p(a) da = \Tr\left (\mC^{\top} \mS \mD_\lambda^{1/2} \right )  .
    \end{align*}
    We also have
    \begin{align*}
        \E [ \langle \hat{\Phi}(A), \hat{\Xi}(X^-) \rangle^2] & = \iint \left ( \sum_{k=1}^d \hat{\Phi}(a)_k \hat{\Xi}(x)_k \right )^2 p(a) p(z) da dz \\ 
        &= \sum_{k,l=1}^d \left ( \int \hat{\Phi}(a)_k \hat{\Phi}(a)_l p(a) da \right ) \left ( \int \hat{\Xi}(x)_k \hat{\Xi}(x)_l p(x) dx \right ) \\ 
        & = \Tr \left ( \mC \mC^{\top} \mS \mS^{\top} \right ) = \left \| \mC^{\top} \mS \right \|_F^2   .
    \end{align*}
    Therefore, the spectral CLIP loss is equivalent to
    \begin{equation}
    \label{eqn:spec-clip-equiv}
        \gL_{\textnormal{SCLIP}}(\hat{\Phi}, \hat{\Xi}) = \left \| \mC^{\top} \mS \right \|_F^2 - 2 \Tr\left (\mC^{\top} \mS \mD_\lambda^{1/2} \right ) = \left \| \mC^{\top} \mS - \mD_\lambda^{1/2} \right \|_F^2 - \left \| \mD_\lambda^{1/2} \right \|_F^2  .
    \end{equation}
    Let the SVD of $\mC$ be $\mC = \mU \mSigma \mV^{\top}$.
    Let $\mM = \mSigma \mU^{\top} \mS$. Then, $\mC^{\top} \mS = \mV \mM$.
    It is well known in linear regression that
    Eqn. (\ref{eqn:spec-clip-equiv}) is minimized when $\mM = \left ( \mV^{\top} \mV  \right )^{-1} \mV^{\top} \mD_\lambda^{1/2} = \mV^{\top} \mD_\lambda^{1/2}$, so
    \begin{equation*}
        \min_{\hat{\Phi}, \hat{\Xi}} \gL_{\textnormal{SCLIP}}(\hat{\Phi}, \hat{\Xi}) = \min_{\mV} \left \| (\mI - \mV \mV^{\top} ) \mD_\lambda^{1/2} \right \|_F^2 - \left \| \mD_\lambda^{1/2} \right \|_F^2  .
    \end{equation*}
By Theorem \ref{thm:eym}, this is minimized when the top $d$ rows of $\mV$ is an orthogonal matrix, and this condition is also necessary if $\lambda_{d+1} < \lambda_d$.
This proves the result.
Moreover, we can see that $\hat{\Xi}$ is the minimizer if it extracts the top-$d$ eigenspace of $\Psistar$.
\end{proof}

\paragraph{Regularized Barlow Twins.}
Barlow twins \citep{zbontar2021barlow} is a non-contrastive learning method that trains without negative sampling.
We consider the following \textit{regularized Barlow Twins loss}:
\begin{equation}
\label{eqn:modified-bt}
 \gL_{\textnormal{RBT}}(\hat{\Phi};\alpha,\beta) = \sum_{k=1}^d \left ( \E \left [ \hat{\Phi}(A)_k \hat{\Phi}(A^+)_k \right ] - 1 \right ) ^2 + \alpha \sum_{k \neq l} \left ( \E \left [ \hat{\Phi}(A)_k \hat{\Phi}(A^+)_l \right ] \right ) ^2 + \beta \E \left [ \left \| \hat{\Phi}(A) \right \|_2^2 \right ] .
\end{equation}
When the regularization term $\beta$ is close to 0 so that the last term is much smaller than the first two, the above can be considered as a constrained optimization task:
\begin{equation}
\label{eqn:rbt-constrained}
    \min_\theta \quad  \E \left [ \left \| \hat{\Phi}(A) \right \|_2^2 \right ] \qquad \text{s.t.} \qquad  \gL_{\textnormal{RBT}}(\hat{\Phi};\alpha,0) = 0.
\end{equation}
\begin{thm}
\label{thm:modified-bt-prob}
    The constrained optimization problem Eqn. (\ref{eqn:rbt-constrained}) is minimized by the top-$d$ eigenspace, and is uniquely minimized by the top-$d$ eigenspace if $\lambda_{d+1} < \lambda_d$.
\end{thm}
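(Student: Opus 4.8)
The plan is to mirror the proofs of Theorems~\ref{thm:contrastive-minimizer} and~\ref{thm:spectral-clip}, reducing~(\ref{eqn:rbt-constrained}) to a Ky Fan eigenvalue optimization. Write $\hat{\Phi} = \mC \Phistar$ as in the proof of Theorem~\ref{thm:contrastive-minimizer}, so that $\hat\phi_k = \sum_i C_{ki}\phi_i$ with $C_{ki} = \langle \hat\phi_k, \phi_i\rangle_\pa$; since $\{\phi_i\}$ is an orthonormal basis of $\lap$ (Proposition~\ref{prop:dual-2}), the objective is $\E[\|\hat\Phi(A)\|_2^2] = \sum_k \|\hat\phi_k\|_\pa^2 = \Tr(\mC\mC^\top)$. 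Using the Mercer expansion $\ka(a,a') = \sum_i \lambda_i \phi_i(a)\phi_i(a')$ together with $\pap(a,a') = \ka(a,a')\pa(a)\pa(a')$, I would compute $\E[\hat\Phi(A)_k\hat\Phi(A^+)_l] = \sum_i \lambda_i C_{ki}C_{li} = (\mC\mD_\lambda\mC^\top)_{kl}$ with $\mD_\lambda = \diag(\lambda_1,\lambda_2,\dots)$. Because $\gL_{\textnormal{RBT}}(\hat\Phi;\alpha,0)$ is a sum of squares (with $\alpha > 0$), the constraint $\gL_{\textnormal{RBT}}(\hat\Phi;\alpha,0) = 0$ is equivalent to $\mC\mD_\lambda\mC^\top = \mI_d$.

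Thus~(\ref{eqn:rbt-constrained}) becomes: minimize $\Tr(\mC\mC^\top)$ subject to $\mC\mD_\lambda\mC^\top = \mI_d$. Feasibility already forces $\lambda_d > 0$ (otherwise the left-hand side has rank $< d$), and any optimal $\mC$ satisfies $C_{ki} = 0$ whenever $\lambda_i = 0$ (zeroing such columns preserves feasibility and strictly decreases the objective), so I restrict to the positive part of the spectrum and set $\mB := \mD_\lambda^{1/2}\mC^\top$. Then $\mB^\top\mB = \mI_d$, i.e. $\mB$ is an isometry, $\mP := \mB\mB^\top$ is a rank-$d$ orthogonal projection, $\Tr(\mC\mC^\top) = \Tr(\mD_\lambda^{-1}\mP)$, and conversely every rank-$d$ projection arises this way. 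The problem is therefore exactly $\min\{\Tr(\mD_\lambda^{-1}\mP) : \mP \text{ a rank-}d\text{ orthogonal projection}\}$, to which Ky Fan's theorem~\citep{fan1949theorem} applies: the eigenvalues of $\mD_\lambda^{-1}$ in increasing order are $\lambda_1^{-1} \le \lambda_2^{-1} \le \cdots$, so the minimum is $\sum_{i=1}^d \lambda_i^{-1}$, attained by the projection onto $\sspan\{e_1,\dots,e_d\}$. Unwinding, this corresponds to $\mC$ being supported on its first $d$ columns, i.e. $\hat\phi_k \in \sspan\{\phi_1,\dots,\phi_d\}$ for every $k$; since $\rank(\mC\mD_\lambda\mC^\top) = d$, the $\hat\phi_k$ span exactly the top-$d$ eigenspace.

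For the uniqueness claim, when $\lambda_{d+1} < \lambda_d$ the gap $\lambda_d^{-1} < \lambda_{d+1}^{-1}$ between the $d$-th and $(d+1)$-th smallest eigenvalues of $\mD_\lambda^{-1}$ is strict, so Ky Fan's theorem yields a unique minimizing projection, namely the one onto $\sspan\{e_i : \lambda_i \ge \lambda_d\} = \sspan\{e_1,\dots,e_d\}$ (a well-defined subspace even if $\lambda_1,\dots,\lambda_d$ have multiplicities), which forces $\hat\Phi$ to span the top-$d$ eigenspace. I expect the main obstacle to be purely bookkeeping: justifying that feasibility pins $\lambda_d > 0$ and that zero-eigenvalue columns can be discarded, and invoking Ky Fan in infinite dimensions for the unbounded operator $\mD_\lambda^{-1}$ — which is legitimate since $\mD_\lambda^{-1}$ is bounded below by $\lambda_1^{-1} = 1$, so the sum of its $d$ smallest eigenvalues still admits the usual variational characterization via rank-$d$ projections. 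The expectation computations are routine and parallel those already carried out in Appendix~\ref{app:objectives}.
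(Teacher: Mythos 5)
Your proposal is correct, and it starts from exactly the same reduction as the paper: writing $\hat{\Phi} = \mC \Phistar$, computing $\E[\hat{\Phi}(A)_k\hat{\Phi}(A^+)_l] = (\mC\mD_\lambda\mC^{\top})_{k,l}$ and $\E[\|\hat{\Phi}(A)\|_2^2] = \|\mC\|_F^2$, so that the constrained problem becomes minimizing $\|\mC\|_F^2$ subject to $\mC\mD_\lambda\mC^{\top} = \mI$. Where you diverge is the final optimization step. The paper sets $\mG = \mC\sqrt{\mD_\lambda}$, uses $\mG\mG^{\top} = \mI$ to bound the partial sums $\sum_{k\le j}\vg_k^{\top}\vg_k \le j$, and finishes with an Abel transformation, obtaining $\|\mC\|_F^2 \ge \sum_{i=1}^d \lambda_i^{-1}$ together with the equality condition (necessary when $\lambda_{d+1}<\lambda_d$) that $\vg_1,\dots,\vg_d$ form an orthonormal basis of $\R^d$; this only ever manipulates $\mD_\lambda$ itself, so no unbounded operator appears. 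You instead pass to $\mB = \mD_\lambda^{1/2}\mC^{\top}$, identify the problem with minimizing $\Tr(\mD_\lambda^{-1}\mP)$ over rank-$d$ orthogonal projections $\mP$, and invoke Ky Fan's variational principle for $\mD_\lambda^{-1}$, including its equality/uniqueness case under the gap $\lambda_{d+1}<\lambda_d$. The two arguments carry the same mathematical content (the paper itself remarks that its Abel-type trace lemma is a functional version of Fan's theorem), but yours buys a cleaner conceptual statement at the cost of the bookkeeping you flag: justifying Ky Fan for the unbounded operator $\mD_\lambda^{-1}$ restricted to projections in its form domain, which the paper's summation-by-parts route avoids entirely. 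On the other hand, your explicit handling of the zero eigenvalues (feasibility forcing $\lambda_d>0$, and optimality forcing the columns with $\lambda_i=0$ to vanish) is slightly more careful than the paper, whose expression $\sum_i \lambda_i^{-1}\vg_i^{\top}\vg_i$ silently treats those indices. No gap in substance; both routes establish the theorem as stated.
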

\begin{proof}
    Define matrix $\mC = (c_{i,j})$ as: $\hat{\Phi} = \mC \Phistar$.
    Then, for all $k,l \in [d]$, there is 
    \begin{equation*}
    \E [\hat{\Phi}(A)_k \hat{\Phi}(A^+)_l] = \sum_i \lambda_i c_{k,i} c_{l,i}.
\end{equation*}
Thus, $ \gL_{\textnormal{RBT}}(\hat{\Phi};\alpha,0) = 0$ is equivalent to $\mC \mD_\lambda \mC^{\top} = \mI$.
Meanwhile, $\E \left [ \left \| \hat{\Phi}(A) \right \|_2^2 \right ] = \| \mC \|_F^2$.
So the optimization problem is equivalent to minimizing $\| \mC \|_F^2$ subject to $\mC \mD_\lambda \mC^{\top} = \mI$.

Let $\mG = \mC \sqrt{\mD_\lambda} = (g_{k,i})$, then $\mG \mG^{\top} = \sum_k \vg_k \vg_k^{\top} = \mI$, where $\vg_k = [g_{k,1},\cdots,g_{k,d}]$.
For all $j \in [d]$, define $\mM_j = \sum_{k=1}^j \vg_k \vg_k^{\top}$.
Then, $\textnormal{rank}(\mM_j) \le j$,
and all eigenvalues of $\mM_j$ belong to $[0,1]$.
So the sum of eigenvalues of $\mM_j$ is at most $j$, which implies that
\begin{equation*}
    \sum_{k=1}^j \vg_j^{\top} \vg_j = \sum_{k=1}^j \Tr \left ( \vg_j^{\top} \vg_j \right ) = \sum_{k=1}^j \Tr \left ( \vg_j \vg_j^{\top} \right ) = \Tr(\mM_j) \le j .
\end{equation*}
Denote $\frac{1}{\lambda_0} = 0$. Thus, by Abel transformation, we have
\begin{equation*}
    \| \mC \|_F^2 = \sum_i \frac{1}{\lambda_i} \vg_i^{\top} \vg_i = \sum_{j \ge 0} \left ( d - \sum_{k=1}^j \vg_k^{\top} \vg_k \right ) \left ( \frac{1}{\lambda_{j+1}} - \frac{1}{\lambda_j} \right ) \ge \sum_{j=0}^{d-1} (d-j) \left ( \frac{1}{\lambda_{j+1}} - \frac{1}{\lambda_j} \right ) = \sum_{i=1}^d \frac{1}{\lambda_i}.
\end{equation*}
A sufficient condition of achieving the above equality is $\sum_{k=1}^j \vg_k^{\top} \vg_k = j$ for all $j \in [d]$,
and it is also necessary if $\lambda_{d+1} < \lambda_d$.
This condition is equivalent to $\vg_1,\cdots,\vg_d$ forming an orthonormal basis of $\R^d$, \ie{} $\hat{\Phi}$ extracts the top-$d$ eigenspace.
\end{proof}

\paragraph{Maximizing the ratio trace.}
Now we demonstrate the connection of contrastive learning and Barlow Twins to maximizing the ratio trace defined in Definition \ref{def:rt}.
We start with the regularized Barlow Twins loss, which we have shown is equivalent to minimizing $\| \mC \|_F^2$ subject to $\mC \mD_\lambda \mC^{\top} = \mI$.
Now, observe that $\mG = \mC \mC^{\top}$,
and $\mF = \mC \mD_\lambda \mC^{\top}$.
Thus, regularized Barlow Twins is essentially minimizing $\Tr(\mF^{-1} \mG)$,
which is similar but not equivalent to maximizing $\Tr(\mG^{-1} \mF)$.

Now we consider a variant of VICReg \citep{bardes2021vicreg}, also given by \citet[Eqn.~(2)]{cabannes2023ssl}:
\begin{equation}
\label{eqn:vicreg}
    \gL_{\textnormal{VICReg}}(\hat{\Phi}; \beta) = \left \| \E \left [ \hat{\Phi}(A) \hat{\Phi}(A)^{\top}  \right ] - \mI \right \|_F^2 + \beta \E \left [ \left \| \hat{\Phi}(A) - \hat{\Phi}(A^+) \right \|_2^2 \right ]  .
\end{equation}
When the regularization term $\beta$ is close to 0,
we have $\E \left [ \hat{\Phi}(A) \hat{\Phi}(A)^{\top}  \right ] = \mG = \mI$.
And we minimize $\E \left [ \left \| \hat{\Phi}(A) - \hat{\Phi}(A^+) \right \|_2^2 \right ]$, which is equivalent to $-2 \E \left [ \Tr \left (  \hat{\Phi}(A) \hat{\Phi}(A^+) \right ) \right ] + c = -2\Tr(\mF) + c$ for some constant $c$.
Thus, this objective is equivalent to maximizing $\Tr(\mG^{-1} \mF)$, the ratio trace, when $\beta \approx 0$.
And when $\beta = 1$,
Eqn. (\ref{eqn:vicreg}) is equivalent to the spectral contrastive loss.

\section{Proofs for Section \ref{sec:prelim}}

\label{app:sec2}

\paragraph{$\tax^*\tax$ and $\tax \tax^*$ are integral operators.}
\begin{equation}
\begin{cases}
(\taxstar \taxlong f)(x) = (\tax^* \tax f)(x) = \int  \kx(x,x') f(x') p(x') dx' ; \\ 
(\taxlong \taxstar g)(a) = (\tax \tax^* g)(a) = \int \ka(a,a')  g(a') p(a') da' .
\end{cases}
\end{equation}

\begin{proof}
    We only show the first equation, and the second one can be proved in the same way.
    \begin{align*}
        (\tax^* \tax f)(x) & = \tax^* \left ( \int f(x') p(x'|a) dx' \right ) = \int \left ( \int f(x') p(x'|a) dx' \right ) p(a|x) da \\ 
        & = \iint f(x') p(a|x) p(x'|a) da dx' = \iint f(x') \frac{p(a|x) p(a|x')}{p(a)} p(x') da dx' \\ 
        & = \int  \kx(x,x') f(x') p(x') dx'   .
    \end{align*}
\end{proof}

\paragraph{Proposition \ref{prop:dual-2} (Duality).} 
\textit{
$\tax \tax^*$ shares the same non-zero eigenvalues as $\tax^* \tax$, and there exist eigenfunctions $\{ \phi_i \}$ of $ \tax \tax^*$ that form an orthonormal basis of $\lap$, such that for any $\lambda_i > 0$,
\begin{equation}
    \psi_i = \lambda_i^{-1/2} \tax^* \phi_i \quad \text{and} \quad \phi_i = \lambda_i^{-1/2} \tax \psi_i ,
\end{equation}
and we also have the following spectral decomposition of the Radon-Nikodym derivative:
\begin{equation}
\label{eqn:rn-der-restated}
    \frac{dP_{ \gA \gX}}{d(\pa \otimes \px)} = \frac{p(a,x)}{p(a) p(x)} = \sum_i \lambda_i^{1/2} \phi_i(a) \psi_i(x)  .
\end{equation}
}
\begin{proof}
    Suppose $\lambda_i, \psi_i(x)$ is a pair of eigenvalue and eigenfunction of $\tax^* \tax$,
and $\lambda_i > 0$.
Then, we have
$\tax \tax^* \tax \psi_i = \lambda_i \tax \psi_i$,
which means that $\tax \psi_i$ is an eigenfunction of $\tax \tax^*$ with eigenvalue $\lambda_i$.
The $\lambda_i^{-1/2}$ is used for normalization.
To see this,
let $\phi_i = \lambda_i^{-1/2} \tax \psi_i$. Then, we have
\begin{align*}
    \langle \phi_i, \phi_j \rangle_\pa & = \lambda_i^{-1/2} \lambda_j^{-1/2} \langle \tax \psi_i, \tax \psi_j \rangle_\pa \\ 
    & = \lambda_i^{-1/2} \lambda_j^{-1/2} \langle \tax^* \tax \psi_i, \psi_j \rangle_\px \\ 
    & = \lambda_i^{-1/2} \lambda_j^{-1/2} \langle \lambda_i \psi_i, \psi_j \rangle_\px = \delta_{i,j} .
\end{align*}

We can prove the reverse direction similarly.
And for any fixed $x$, there is
\begin{equation}
    \left \langle \frac{p(a,x)}{p(a)p(x)}, \phi_i \right \rangle_\pa = \int \frac{p(a,x)}{p(a)p(x)} \phi_i(a) p(a) da = \int p(a|x) \phi_i(a) da = \sqrt{\lambda_i} \psi_i(x)  . 
\end{equation}
which implies Eqn. (\ref{eqn:rn-der-restated}). 
\end{proof}

\paragraph{Basic properties of $\laph$.}
\textit{
\begin{enumerate}[(i)]
    \item $\kx$ is the reproducing kernel of $\laph$, such that for all $f \in \laph$, $f(x) = \langle f, \kx(x, \cdot) \rangle_{\laph}$.
    \item $\laph = R(\tax^*)$. 
    \item $\laph$ is isometric to $\sspan(\set{\phi_i}_{\lambda_i > 0})$, a subspace of $\lap$, and $\| f\|_{\laph} = \inf_{g: f = \tax^* g} \|g \|_\pa$.
    \item \rebuttal{For any $f^* \in \Bdowne \subset R(\tax^*)$, let $f^* = \sum_i u_i \psi_i$. Define $g_0 := \sum_i \lambda_i^{-1/2} u_i \phi_i$. Then, $g_0$ must satisfy \cref{eqn:ass-lap}, so we can choose $g^* = g_0$, in which case \cref{eqn:ass-lap} is equivalent to:}
\begin{equation}
\label{eqn:ip-equiv}
  \langle g^*, (I - \tax \tax^*) g^* \rangle_\pa \le \epsilon \| g^* \|_\pa^2 \; \Leftrightarrow \; \sum_i \frac{1-\lambda_i}{\lambda_i} u_i^2 \le \epsilon \sum_i \frac{1}{\lambda_i} u_i^2 .
\end{equation}
\end{enumerate}
}

\begin{proof}
\begin{enumerate}[(i)]
    \item 
    First, note that $\laph = \sset{\sum_{i: \lambda_i > 0} a_i \ve_i }{\sum_i a_i^2 < \infty} $ where $\ve_i = \lambda_i^{-1/2} \psi_i$, so it is isomorphic to $\ell^2((a_i)_{i: \lambda_i > 0})$ and is thus a Hilbert space.
    Then, $\kx(x,x') = \sum_i \lambda_i \psi_i(x) \psi_i(x')  $. For any $f \in \laph$, let $f = \sum_i u_i \psi_i$, then
    \[
    \langle f(x'), \kx(x, x') \rangle_{\laph} = \sum_i \frac{1}{\lambda_i} u_i (\lambda_i \psi_i(x)) = \sum_i u_i \psi_i(x) = f(x).
    \]
    \item For any $f = \sum_i u_i \psi_i \in \laph$, there is $\sum_i \lambda_i^{-1} u_i^2 < \infty$ by definition. So for any $\lambda_i = 0$, there must be $u_i = 0$. Let $g = \sum_i \lambda_i^{-1/2} u_i \psi_i$. Then, $\|g\|_\pa^2 = \sum_i \lambda_i^{-1} u_i^2 < \infty$, meaning that $g \in \lap$.
    And there is $f = \tax^* g$, so $f \in R(\tax^*)$, which implies that $\laph \subseteq R(\tax^*)$.
    Meanwhile, for any $f = \tax^* g \in R(\tax^*)$, let $g = \sum_i v_i \phi_i$, then $\sum_i v_i^2 < \infty$.
    Then, $f = \sum_i \lambda_i^{1/2} v_i \psi_i$ by duality,
    so $\sum_i \lambda_i^{-1} (\lambda_i^{1/2} v_i)^2 < \infty$,
    meaning that $f \in \laph$, so $R(\tax^*) \subseteq \laph$.
    \item For any $f = \sum_i u_i \psi_i \in \laph$, let $g = \sum_i \lambda_i^{-1/2} u_i \psi_i$. By the proof of (ii) we know that $f \mapsto g$ is bijective, and $\|f\|_\laph = \|g\|_\pa$. Moreover, $g \in \sspan(\set{\phi_i}_{\lambda_i > 0})$.
    \item \rebuttal{Let $g^* = \sum_i v_i \phi_i$.
Then, since we have $f^* = \tax^* g^* = \sum_i \lambda_i^{1/2} v_i \psi_i$,
for any $\lambda_i > 0$,
there is $v_i = \lambda^{-1/2} u_i$;
and for any $\lambda_i = 0$, there is $u_i = 0$.
Let $g^* = g_0 + g_1$, where $g_0 = \sum_i \lambda_i^{-1/2} u_i \phi_i$,
and $g_1 \perp g_0$ and $\tax^* g_1 = 0$.
By duality, $\tax \tax^* g_0$ belongs to the linear span of $\{ \phi_i \}_{\lambda_i > 0}$,
so $g_1 \perp \tax \tax^* g_0$.
As we will show later, Eqn. (\ref{eqn:ass-lap}) is equivalent to \cref{eqn:ip-equiv}, which is the random walk normalized Laplacian over the augmentation graph \citep[Section~1.2]{chung1997spectral}.
This is equivalent to $\langle g^*, (I - \tax \tax^*) g^* \rangle_\pa \le \epsilon \| g^* \|_\pa^2$,
which is further equivalent to
$\langle g_0, (I - \tax \tax^*) g_0 \rangle_\pa + \| g_1 \|_\pa^2 \le \epsilon (\| g_0 \|_\pa^2 + \| g_1 \|_\pa^2)$ (note that $\tax \tax^* g_1 = 0$).
This implies that $\langle g_0, (I - \tax \tax^*) g_0 \rangle_\pa \le \epsilon \| g_0 \|_\pa^2 $,
\ie{} $g_0$ satisfies Eqn. (\ref{eqn:ass-lap}).
Since we also have $f^* = \tax^* g_0$,
$g_0$ must satisfy Assumption \ref{ass:lap}, so we can choose $g^* = g_0$. }

Next, to show the equivalence to \cref{eqn:ip-equiv}, We just need to show that $\langle g^*, (I - \tax \tax^*) g^* \rangle_\pa = \frac{1}{2} \E_{X \sim \px} \E_{A, A' \sim p(\cdot | X)} \left [ (g^*(A) - g^*(A'))^2 \right ]$. And indeed, we have:
\begin{align*}
    \langle g^*, (I - \tax \tax^*) g^* \rangle_\pa &= \left \langle g^*,g^* - \int g^*(a') \ka(\cdot,a') p(a') da' \right \rangle_\pa \\ 
    & = \|g^*\|_\pa^2 - \iint g(a) g(a') \frac{\int p(a|x) p(a'|x) p(x) dx}{p(a) p(a')} p(a') p(a) da da' \\ 
    & = \frac{1}{2} \E[g^*(A)^2] + \frac{1}{2} \E[g^*(A')^2] - \frac{1}{2} \E_{X \sim \px} \E_{A, A' \sim p(\cdot | X)}[2 g^*(A) g^*(A')] \\ 
    & = \frac{1}{2} \E_{X \sim \px} \E_{A, A' \sim p(\cdot | X)} \left [ (g^*(A) - g^*(A'))^2 \right ] ,
\end{align*}
\end{enumerate}
as desired.
\end{proof}

\section{Proofs for Section 3}

\subsection{Local Gaussian Complexity and Localized Rademacher Complexity}

We first provide the definition of the two complexities we will use in our analysis.
For a function $f$, let $\|f\|_n^2 := \frac{1}{n} \sum_{i=1}^n f(\tilde{x}_i)^2 $ be its mean on the downstream samples.

\begin{defi}
\citep[Eqns.~(13.16) \& (14.3)]{wainwright2019high}
For any $B, \epsilon > 0$, define
\begin{equation}
    \gF_0 := \sset{f_1 - f_2}{f_i \in \learnh, \|f_i\|_{\laph} \le \frac{B}{\sqrt{1 - \epsilon}}} = \sset{f \in \learnh}{\|f\|_{\laph} \le \frac{2B}{\sqrt{1 - \epsilon}}}.
\end{equation}
Then, the \textit{local Gaussian complexity} around $\fproj$ at scale $\delta > 0$ is given by
\begin{equation}
    \gG_n(\delta; \gF_0) := \underset{\omega_1,\cdots,\omega_n}{\E} \left [ \sup_{f \in \gF_0, \| f \|_n \le \delta} \left | \frac{1}{n} \sum_{i=1}^n \omega_i f(\tilde{x}_i) \right | \right ] ,
\end{equation}
where $\omega_1,\cdots,\omega_n$ are \iid{} $\gN(0,1)$ variates.
And define
\begin{equation}
    \gF_* := \sset{f = f_1 + \alpha f^*}{\alpha \in [-1,1], f_1 \in \learnh, \|f_1\|_{\laph} \le \frac{B}{\sqrt{1 - \epsilon}}} .
\end{equation}
Then, the \textit{localized population Rademacher complexity} of radius $\delta > 0$ is given by
\begin{equation}
    \lrad_n(\delta; \gF_*) := \underset{\sigma_1,\cdots,\sigma_n,x_1,\cdots,x_n}{\E} \left [ \sup_{f \in \gF_*, \| f \|_\px \le \delta} \left | \frac{1}{n} \sum_{i=1}^n \sigma_i f(x_i) \right | \right ] ,
\end{equation}
where $\sigma_1,\cdots,\sigma_n$ are \iid{} Rademacher variables taking values in $\set{-1,+1}$ equiprobably.
\end{defi}

Our master plan is to apply Theorems 13.13 and 14.1 of \citet{wainwright2019high} to $\fproj = \tax^*(\Pi_{\hPhi} g^*)$, where $\Pi_{\hPhi}$ is the projection operator onto $\hPhi$ in $\lxp$, and $\fproj$ is the projection of $f^*$ onto $\learnh$ \wrt{} $\langle \cdot, \cdot \rangle_\laph$.
Therefore, we need to bound $\gG_n(\delta; \gF_0)$ and $\lrad_n(\delta; \gF_*)$.
We start with the following uniform bound:
\begin{prop}
\label{prop:m-bound}
    If $f = \tax^* g$, and $\|g\|_\pa \le T$,
    then $|f(x)| \le \bt T$ for all $x$.
\end{prop}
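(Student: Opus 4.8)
The plan is to obtain the pointwise bound by a single application of the Cauchy--Schwarz inequality in $\lap$, feeding in the definition of the augmentation complexity $\bt$. First I would write out the adjoint operator explicitly: $f(x) = (\tax^* g)(x) = \int g(a)\, p(a|x)\, da$, and then re-express the integral against the base measure $\pa$ as $f(x) = \big\langle g,\ \tfrac{p(\cdot|x)}{\pa(\cdot)} \big\rangle_\pa$. Cauchy--Schwarz then gives $|f(x)| \le \|g\|_\pa \cdot \big\| \tfrac{p(\cdot|x)}{\pa(\cdot)} \big\|_\pa$, and a direct computation identifies the second factor: $\big\| \tfrac{p(\cdot|x)}{\pa(\cdot)} \big\|_\pa^2 = \int \tfrac{p(a|x)^2}{\pa(a)}\, da = \kx(x,x)$, by the formula for the dual kernel in \cref{eq:Ka} (equivalently, the identity displayed in \cref{def:bt}).

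It then remains to invoke \cref{def:bt}, which states $\kx(x,x) \le \bt^2$ for $\px$-almost every $x$. Combining the two displays yields $|f(x)| \le \|g\|_\pa \cdot \bt \le \bt T$, which is the claimed bound.

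I would also record the equivalent RKHS-flavored derivation as a consistency check: by the reproducing property of $\kx$ (property (i) of $\laph$), $f(x) = \langle f, \kx(x,\cdot)\rangle_{\laph}$, so $|f(x)| \le \|f\|_{\laph}\sqrt{\kx(x,x)} \le \bt\,\|f\|_{\laph}$; and since $f = \tax^* g$, property (iii) of $\laph$ gives $\|f\|_{\laph} = \inf_{g'\,:\, f = \tax^* g'} \|g'\|_\pa \le \|g\|_\pa \le T$, so again $|f(x)| \le \bt T$. Either route is a one-liner.

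There is essentially no obstacle here: the proposition is a restatement of Cauchy--Schwarz together with the definition of $\bt$. The only point worth a word of care is measure-theoretic --- the identity $\kx(x,x) = \int p(a|x)^2/\pa(a)\, da$ and the bound $\kx(x,x) \le \bt^2$ hold for $\px$-almost all $x$, so the conclusion $|f(x)| \le \bt T$ is likewise an $\px$-a.e.\ statement; this is precisely the form needed when Proposition~\ref{prop:m-bound} is subsequently used to control the local Gaussian complexity $\gG_n$ and the localized Rademacher complexity $\lrad_n$ of $\gF_0$ and $\gF_*$.
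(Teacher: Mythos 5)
Your proof is correct and takes essentially the same route as the paper: both arguments are a single application of Cauchy--Schwarz together with the bound $\kx(x,x) \le \bt^2$, the only difference being that the paper carries out the Cauchy--Schwarz step in the eigenbasis via the spectral decomposition of $p(a|x)/\pa(a)$ in Eqn.~(\ref{eqn:rn-der}), which is equivalent (by Parseval) to your direct identification of $\bigl\| p(\cdot|x)/\pa(\cdot) \bigr\|_\pa^2 = \kx(x,x)$. Your $\px$-a.e.\ caveat is appropriate and does not affect how the proposition is used later.
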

\begin{proof}
By Eqn. (\ref{eqn:rn-der}), we have $p(a|x) = \sum_i \sqrt{\lambda_i} \phi_i(a) \psi_i(x) p(a)$.
For any $g = \sum_i u_i \phi_i \in \lap$ such that $\|g\|_\pa \le T$,
$(\tax^* g)(x) = \int g(a) p(a|x) da = \sum_i \sqrt{\lambda_i} u_i \psi_i(x)$.
Then, by Cauchy-Schwarz inequality, we have for all $x$,
$f(x)^2 = (\tax^* g)(x)^2 \le (\sum_i \lambda_i \psi_i(x)^2) (\sum_i u_i^2) \le \bt^2 T^2$.
\end{proof}

This proposition immediately implies that $f^*$ and $\fproj$ are uniformly bounded:
\begin{cor}
\label{cor:con-1}
For any $f^* \in \Bdowne$,
    Eqn. (\ref{eqn:lap-equ}) ensures that $\|g^*\|_\pa^2 \le \frac{B^2}{1-\epsilon}$,
   so $|f^*(x)| \le \frac{\bt B}{\sqrt{1-\epsilon}}$ for all $x$.
    Moreover, $\| \Pi_{\hPhi} g^* \|_\pa \le \|g^*\|_\pa$
   implies that $\|\fproj\|_\laph \le \frac{B}{\sqrt{1-\epsilon}}$, and $| \fproj(x) | \le \frac{\bt B}{\sqrt{1-\epsilon}}$ for all $x$.
\end{cor}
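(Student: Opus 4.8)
The plan is to chain together three ingredients already in place: the isometry property \cref{eqn:lap-equ}, the variational characterization $\| f \|_{\laph} = \inf_{g:\, f = \tax^* g} \| g \|_\pa$ from property (iii) together with the canonical choice $g^* = g_0$ from property (iv), and the uniform bound \cref{prop:m-bound}. There is no deep step here; the corollary is essentially a bookkeeping consequence of these facts, so I would present it as a short computation.

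First I would bound $\| g^* \|_\pa$. Since $f^* \in \Bdowne$ we have $\| f^* \|_\px \le B$, and the left inequality of \cref{eqn:lap-equ} gives $(1 - \epsilon) \| f^* \|_{\laph}^2 \le \| f^* \|_\px^2 \le B^2$, hence $\| f^* \|_{\laph} \le B / \sqrt{1 - \epsilon}$. Writing $f^* = \sum_i u_i \psi_i$ and taking $g^* = g_0 = \sum_i \lambda_i^{-1/2} u_i \phi_i$ as in property (iv), the definition of the $\laph$-inner product yields $\| g^* \|_\pa^2 = \sum_i \lambda_i^{-1} u_i^2 = \| f^* \|_{\laph}^2 \le B^2 / (1 - \epsilon)$, which is the first claimed inequality. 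Applying \cref{prop:m-bound} with $f = f^* = \tax^* g^*$ and $T = B / \sqrt{1 - \epsilon}$ then gives $|f^*(x)| \le \bt B / \sqrt{1 - \epsilon}$ for $\px$-almost all $x$.

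Next I would handle $\fproj = \tax^*(\Pi_{\hPhi} g^*)$. The corollary statement already supplies $\| \Pi_{\hPhi} g^* \|_\pa \le \| g^* \|_\pa$ (this is just that an orthogonal projection is norm-nonincreasing on $\lap$), so $\| \Pi_{\hPhi} g^* \|_\pa \le B / \sqrt{1 - \epsilon}$. Since $\fproj = \tax^* g$ with $g = \Pi_{\hPhi} g^*$, the infimum characterization of the $\laph$-norm gives $\| \fproj \|_{\laph} \le \| \Pi_{\hPhi} g^* \|_\pa \le B / \sqrt{1 - \epsilon}$. Finally, invoking \cref{prop:m-bound} once more with $f = \fproj$, $g = \Pi_{\hPhi} g^*$ and $T = B / \sqrt{1 - \epsilon}$ yields $|\fproj(x)| \le \bt B / \sqrt{1 - \epsilon}$ for all $x$.

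The only point requiring any care — and the place I would double-check — is the identification $\| g^* \|_\pa = \| f^* \|_{\laph}$: this needs $g^*$ to be the minimal-norm preimage $g_0$, since an arbitrary $g$ with $\tax^* g = f^*$ only satisfies $\| g \|_\pa \ge \| f^* \|_{\laph}$, which would not close the argument. That choice is legitimate precisely because property (iv) shows $g_0$ still satisfies \cref{eqn:ass-lap}. With this caveat observed, no genuine obstacle arises.
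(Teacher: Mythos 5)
Your proposal is correct and follows essentially the same route as the paper: the paper treats this corollary as an immediate consequence of \cref{prop:m-bound}, deriving $\|g^*\|_\pa^2 \le B^2/(1-\epsilon)$ from the isometry property (with the canonical choice $g^* = g_0$ so that $\|g^*\|_\pa = \|f^*\|_\laph$) and then applying the uniform bound to $f^* = \tax^* g^*$ and to $\fproj = \tax^*(\Pi_{\hPhi} g^*)$ via non-expansiveness of the projection. Your caveat about needing the minimal-norm preimage is exactly the convention the paper adopts in property (iv), so nothing is missing.
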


We will also use the following simple result in linear algebra:
\begin{lem}
\label{lem:abel-trace}
    Let $\mD_\lambda = \diag(\lambda_1,\lambda_2,\cdots)$ where $\lambda_1 \ge \lambda_2 \ge \cdots \ge 0$ and $\lambda_i \rightarrow 0$.
    Let $\mQ$ be a matrix with $d$ rows that are unit vectors.
    Then, $\Tr(\mQ \mD_\lambda \mQ^{\top}) \le \lambda_1 + \cdots + \lambda_d$.
\end{lem}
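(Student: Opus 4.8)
The plan is to rewrite the trace in terms of the diagonal of the projection $\mP := \mQ^\top\mQ$, and then reduce everything to an elementary ``bathtub principle'' inequality. Writing $\vq_1,\dots,\vq_d$ for the rows of $\mQ$ (orthonormal, as in all the applications in this paper, so that $\mQ\mQ^\top=\mI$), cyclicity of the trace gives
\begin{equation*}
\Tr(\mQ\mD_\lambda\mQ^\top) = \Tr(\mD_\lambda\,\mQ^\top\mQ) = \Tr(\mD_\lambda\mP) = \sum_i \lambda_i c_i, \qquad c_i := \mP_{ii} = \sum_{k=1}^d q_{k,i}^2 .
\end{equation*}
Since $\mP$ is an orthogonal projection of rank $d$, its diagonal entries satisfy $0 \le c_i \le 1$, and $\sum_i c_i = \Tr(\mP) = d$. (It is the bound $c_i\le 1$ --- i.e.\ orthonormality of the rows, not just unit norm --- that makes the claim true, and it is exactly the property available here.)

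It then suffices to show $\sum_i \lambda_i c_i \le \sum_{i=1}^d \lambda_i$ whenever $0\le c_i\le 1$, $\sum_i c_i = d$, and $\lambda_1\ge\lambda_2\ge\cdots\ge0$. Setting $T := \sum_{i=1}^d(1-c_i) = d - \sum_{i\le d}c_i \ge 0$ and noting $\sum_{i>d}c_i = T$, I would write
\begin{equation*}
\sum_{i=1}^d\lambda_i - \sum_i\lambda_i c_i = \sum_{i=1}^d\lambda_i(1-c_i) - \sum_{i>d}\lambda_i c_i \ge \lambda_d\sum_{i=1}^d(1-c_i) - \lambda_{d+1}\sum_{i>d}c_i = (\lambda_d-\lambda_{d+1})\,T \ge 0,
\end{equation*}
where the inequality uses $\lambda_i\ge\lambda_d$ on $i\le d$ together with $1-c_i\ge0$, and $\lambda_i\le\lambda_{d+1}$ on $i>d$ together with $c_i\ge0$; all sums involved are of nonnegative terms and bounded by $\lambda_1 d$, so convergence and rearrangement cause no trouble even when $\mQ$ has infinitely many columns.

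There is no real obstacle: the argument is elementary linear algebra plus a rearrangement inequality, and the only substantive point is the reduction to $\sum_i\lambda_i c_i$ with $c_i\in[0,1]$ and $\sum_i c_i = d$, which is where orthonormality of the rows enters. As an even shorter alternative I could invoke Ky Fan's maximum principle \citep{fan1949theorem}: $\Tr(\mQ\mD_\lambda\mQ^\top)=\sum_{k=1}^d\vq_k^\top\mD_\lambda\vq_k\le\max\bigl\{\sum_{k=1}^d\vv_k^\top\mD_\lambda\vv_k:\ \vv_1,\dots,\vv_d\ \text{orthonormal}\bigr\}$, and the right-hand side is exactly the sum of the $d$ largest eigenvalues of $\mD_\lambda$, namely $\lambda_1+\cdots+\lambda_d$.
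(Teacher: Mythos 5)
Your argument is correct, and at its core it follows the same reduction as the paper's proof: both rewrite $\Tr(\mQ \mD_\lambda \mQ^{\top}) = \Tr(\mD_\lambda \mQ^{\top}\mQ) = \sum_i \lambda_i c_i$ with $c_i = \vq_i^{\top}\vq_i$ the squared column norms, and both rest on the two facts $c_i \le 1$ and $\sum_i c_i \le d$. The paper then concludes with an Abel (summation-by-parts) transformation using the partial-sum bounds $\sum_{i \le j} \vq_i^{\top}\vq_i \le \min\{j, d\}$, whereas you conclude by splitting the sum at index $d$ and comparing against $\lambda_d$ and $\lambda_{d+1}$; these are interchangeable proofs of the same scalar majorization fact, and your Ky Fan alternative is likewise equivalent (the paper itself cites \citet{fan1949theorem} for the functional analogue in Proposition~\ref{prop:lda}). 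The genuinely valuable point in your write-up is the observation that unit-norm rows alone do not suffice: two identical unit rows give $\Tr(\mQ\mD_\lambda\mQ^{\top}) = 2\lambda_1 > \lambda_1 + \lambda_2$ whenever $\lambda_1 > \lambda_2$, so the bound $c_i \le 1$ --- equivalently $\mQ\mQ^{\top} \preceq \mI$, e.g.\ orthonormal rows --- is genuinely needed. The paper's proof uses this implicitly when it asserts $\sum_{i=1}^{j}\vq_i^{\top}\vq_i \le j$ for $j \le d$ without justification (only the bound for $j > d$ follows from unit rows); since every invocation of the lemma in the paper supplies orthonormal rows, making that hypothesis explicit, as you do, is the correct reading of the statement.
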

\begin{proof}
Let $\vq_i$ be the $i$-th column of $\mQ$.
Then for all $j \in [d]$, there is $\sum_{i=1}^j \vq_i^{\top} \vq_i \le j$.
And for $j > d$, $\sum_{i=1}^j \vq_i^{\top} \vq_i \le d$.
Thus, using Abel transformation, we have
\begin{align*}
 \Tr(\mQ \mD_{\lambda} \mQ^{\top}) &= \Tr(\mD_{\lambda} \mQ^{\top} \mQ ) = \sum_{i=1}^{\infty} \lambda_i \vq_i^{\top} \vq_i =  \sum_{j=1}^\infty \left (\sum_{i=1}^j \vq_i^{\top} \vq_i \right ) (\lambda_j - \lambda_{j+1}) \le \sum_{i=1}^d \lambda_i ,
\end{align*}
which proves the assertion.
\end{proof}

This result has many implications. For instance, for any rank-$d$ subspace of $\laph$, its trace (the sum of its eigenvalues) is at most $S_\lambda(d)$.

Now, let us bound $\gG_n(\delta; \gF_0)$ with the following result:
\begin{lem}
\label{lem:13-22}
(Application of \citet[Lemma~13.22]{wainwright2019high})
    Let $\gH$ be an RKHS with reproducing kernel $K$.
    Given samples $\tilde{x}_1,\cdots,\tilde{x}_n$,
    let $\mK$ be the normalized kernel matrix with entries $\mK(i,j) = K(\tilde{x}_i, \tilde{x}_j) / n$.
    Let $\mu_1 \ge \cdots \ge \mu_n \ge 0$ be the eigenvalues of $\mK$.
    Then for all $\delta > 0$, we have
    \begin{equation}
        \E \left [ \sup_{\|f\|_\gH \le T, \|f\|_n \le \delta} \left |  \frac{1}{n} \sum_{i=1}^n \omega_i f(\tilde{x}_i)   \right | \right ] \le \sqrt{\frac{2}{n}} \sqrt{\sum_{j=1}^n \min \{ \delta^2, \mu_j T^2 \} } ,
    \end{equation}
    where $\omega_1,\cdots,\omega_n$ are \iid{} $\gN(0,1)$ variates.
    We apply this result to $K = \kx$.
    By Definition \ref{def:bt}, all elements on the diagonal of $\mK$ are at most $\bt^2 / n$,
    so $\sum_j \mu_j = \Tr(\mK) \le \bt^2$.
    Thus,  we have
    \begin{equation}
        \gG_n(\delta; \gF_0) \le \sqrt{\frac{8 \bt^2 B^2}{n (1-\epsilon)}} \qquad \text{for any } \learnh  .
    \end{equation}
\end{lem}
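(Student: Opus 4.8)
The plan is to establish the two assertions of the lemma in order. The first inequality is a verbatim transcription of \citet[Lemma~13.22]{wainwright2019high} into our notation: for an RKHS $\gH$ with reproducing kernel $K$, sample points $\tilde x_1,\dots,\tilde x_n$, normalized kernel matrix $\mK(i,j)=K(\tilde x_i,\tilde x_j)/n$ with eigenvalues $\mu_1\ge\cdots\ge\mu_n\ge 0$, that result gives $\E\big[\sup_{\|f\|_\gH\le T,\,\|f\|_n\le\delta}|\frac{1}{n}\sum_i\omega_i f(\tilde x_i)|\big]\le\sqrt{2/n}\,\big(\sum_{j=1}^n\min\{\delta^2,\mu_j T^2\}\big)^{1/2}$ for \iid{} $\gN(0,1)$ variates $\omega_i$. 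I would simply state it and cite, since no genuine work is needed for this part.

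For the concrete bound on $\gG_n(\delta;\gF_0)$, the key step is to enlarge the index set before applying the general inequality. By its definition, $\gF_0=\{f\in\learnh:\|f\|_{\laph}\le\frac{2B}{\sqrt{1-\epsilon}}\}$, and since $\learnh$ is a subspace of $\laph$ we have $\gF_0\subseteq\{f\in\laph:\|f\|_{\laph}\le\frac{2B}{\sqrt{1-\epsilon}}\}$; replacing $\gF_0$ by this larger set can only increase the supremum defining $\gG_n$. I would then apply the general inequality with $\gH=\laph$, $K=\kx$, and $T=\frac{2B}{\sqrt{1-\epsilon}}$. By Definition~\ref{def:bt}, $\kx(x,x)\le\bt^2$ for $\px$-almost all $x$, so every diagonal entry of $\mK$ is at most $\bt^2/n$ and hence $\sum_{j=1}^n\mu_j=\Tr(\mK)\le\bt^2$. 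Bounding $\min\{\delta^2,\mu_j T^2\}\le\mu_j T^2$ then gives $\sum_j\min\{\delta^2,\mu_j T^2\}\le T^2\,\Tr(\mK)\le\bt^2 T^2$, so $\gG_n(\delta;\gF_0)\le\sqrt{2/n}\,\bt T=\sqrt{8\bt^2 B^2/(n(1-\epsilon))}$, which is independent of $\delta$ and of the particular subspace $\learnh$ — exactly the ``for any $\learnh$'' claim.

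There is no real obstacle here; the argument is a short reduction. The two points that require care are: (i) the direction of the enlargement inequality, which is valid precisely because we are bounding a supremum from above; and (ii) bounding the trace of the \emph{empirical} kernel matrix $\mK$ on the downstream points via Definition~\ref{def:bt} (the uniform $L^\infty$ control on $\kx(x,x)$), rather than via the population trace $S_\lambda$. The cost of this simple route is that discarding the $\min$ with $\delta^2$ makes the bound $\delta$-independent and hence loose for small $\delta$; this is acceptable because Lemma~\ref{thm:main} only needs a uniform-over-encoders estimate to feed into the critical-radius machinery of \citet[Theorems~13.13 and~14.1]{wainwright2019high}.
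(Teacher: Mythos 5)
Your proposal is correct and follows essentially the same route as the paper, whose argument is already inlined in the lemma statement: cite \citet[Lemma~13.22]{wainwright2019high} with $\gH=\laph$, $K=\kx$, and $T=\tfrac{2B}{\sqrt{1-\epsilon}}$, use $\gF_0\subseteq\{f\in\laph:\|f\|_{\laph}\le T\}$, bound $\min\{\delta^2,\mu_j T^2\}\le\mu_j T^2$ and $\Tr(\mK)\le\bt^2$ via Definition~\ref{def:bt}, yielding $\sqrt{2/n}\,\bt T=\sqrt{8\bt^2B^2/(n(1-\epsilon))}$. The only steps you spell out beyond the paper (the enlargement of the index set and the discarding of the $\min$) are exactly the intended implicit ones, so there is nothing to correct.
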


Regarding $\lrad_n(\delta; \gF_*)$,
$\gF_*$ is also a subset of RKHS $\hat{\gH}_*$, which is the linear span of $\hat{\Psi}$ and $f^*$, and is a subspace of $\laph$ whose rank is at most $(d+1)$.
By Lemma \ref{lem:abel-trace}, the sum of eigenvalues of $\hat{\gH}_*$ is at most $S_\lambda (d+1)$.
Since $\|f^*\|_{\laph} \le \frac{B}{\sqrt{1 - \epsilon}}$,
all $f \in \gF_*$ satisfy $\|f\|_{\laph} \le \frac{2B}{\sqrt{1 - \epsilon}}$.
So we have the following bound for $\lrad_n(\delta; \gF_*)$:

\begin{lem}
(Application of \citet[Corollary~14.5]{wainwright2019high})
Let $\mu_1,\mu_2,\cdots$ be the eigenvalues of the RKHS $\hat{\gH}_*$.
Since $\rank(\hat{\gH}_*) \le \rank(\learnh) + 1$, we have
\begin{equation}
\label{eqn:radn-bound}
    \lrad_n(\delta; \gF_*) \le \sqrt{\frac{2}{n}} \sqrt{\sum_{j=1}^\infty \min \set{ \delta^2, \frac{4\mu_j B^2}{1-\epsilon} } } \le \sqrt{\frac{8 B^2}{n (1-\epsilon)} S_{\lambda}(d+1)} \quad \text{if }\rank(\learnh) \le d ,
\end{equation}
and for an arbitrary $\learnh$, we can simply replace $S_\lambda(d+1)$ with $S_\lambda$.
\end{lem}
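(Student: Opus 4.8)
The plan is to realize $\gF_*$ as a subset of a ball in a \emph{finite-rank sub-RKHS} of $\laph$ and then invoke the eigenvalue-based bound of \citet[Corollary~14.5]{wainwright2019high}, exactly in the spirit of the Gaussian-complexity argument in \cref{lem:13-22}. Let $\hat{\gH}_*$ be the RKHS spanned by the coordinates of $\hPsi$ together with $f^*$. Since $\hat{\psi}_i = \tax^*\hat{\phi}_i$ and $f^*$ all lie in $\laph = R(\tax^*)$, the space $\hat{\gH}_*$ is a closed subspace of $\laph$, its rank is at most $\rank(\learnh)+1\le d+1$ (when $\rank(\learnh)\le d$), and on this subspace the $\hat{\gH}_*$-norm coincides with $\|\cdot\|_{\laph}$. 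By the isometry property \eqref{eqn:lap-equ} together with $\|f^*\|_\px\le B$ we have $\|f^*\|_{\laph}\le B/\sqrt{1-\epsilon}$, so for any $f=f_1+\alpha f^*\in\gF_*$ the triangle inequality gives $\|f\|_{\hat{\gH}_*}=\|f\|_{\laph}\le\|f_1\|_{\laph}+|\alpha|\,\|f^*\|_{\laph}\le 2B/\sqrt{1-\epsilon}$. Hence $\gF_*$ lies in the ball $\{f\in\hat{\gH}_*:\|f\|_{\hat{\gH}_*}\le 2B/\sqrt{1-\epsilon}\}$, it is star-shaped about $0$, and by \cref{prop:m-bound} it is uniformly bounded ($|f(x)|\le 2\bt B/\sqrt{1-\epsilon}$), so the hypotheses of the corollary are met.

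Next, let $\mu_1\ge\mu_2\ge\cdots$ be the eigenvalues of the integral operator $f\mapsto\int \hat{K}_*(\cdot,x)f(x)\,d\px(x)$ attached to $\hat{\gH}_*$ (of which at most $d+1$ are nonzero). Applying \citet[Corollary~14.5]{wainwright2019high} to the radius-$2B/\sqrt{1-\epsilon}$ ball of $\hat{\gH}_*$ yields the first inequality in \eqref{eqn:radn-bound}, namely $\lrad_n(\delta;\gF_*)\le\sqrt{2/n}\,\sqrt{\sum_{j\ge1}\min\{\delta^2,\,4\mu_j B^2/(1-\epsilon)\}}$. Dropping the $\delta^2$ term in each summand, $\sum_j\min\{\delta^2,4\mu_jB^2/(1-\epsilon)\}\le\tfrac{4B^2}{1-\epsilon}\sum_j\mu_j=\tfrac{4B^2}{1-\epsilon}\,\Tr$, where $\Tr$ is the trace of that operator, i.e.\ the trace of the rank-$\le(d+1)$ subspace $\hat{\gH}_*$ of $\laph$. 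Writing the reproducing kernel $\hat{K}_*$ through an orthonormal (in $\laph$) basis of $\hat{\gH}_*$ and expanding that basis in $\{\psi_i\}$ puts the normalized Gram/trace expression in the form $\Tr(\mQ\mD_\lambda\mQ^\top)$ with $\mQ$ having orthonormal rows, so \cref{lem:abel-trace} gives $\Tr\le S_\lambda(d+1)$. Substituting yields $\lrad_n(\delta;\gF_*)\le\sqrt{\tfrac{8B^2}{n(1-\epsilon)}\,S_\lambda(d+1)}$. For an arbitrary $\learnh$, $\hat{\gH}_*$ is still a subspace of $\laph$, whose full trace is $\Tr(T_{K_X})=\sum_i\lambda_i=S_\lambda$ by \cref{def:bt}, so the same steps give the bound with $S_\lambda(d+1)$ replaced by $S_\lambda$.

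The main obstacle is the trace step: relating the kernel-operator eigenvalues $\{\mu_j\}$ of a finite-rank sub-RKHS of $\laph$ to the global spectrum $\{\lambda_i\}$, i.e.\ verifying that the trace of any $k$-dimensional sub-RKHS of $\laph$ is at most $S_\lambda(k)$. This requires being careful about which normalization of $\hat{\gH}_*$ makes $\|\cdot\|_{\hat{\gH}_*}$ agree with $\|\cdot\|_{\laph}$ on the span (so that the radius $2B/\sqrt{1-\epsilon}$ is the correct one to feed into Corollary~14.5) and that, under this normalization, the trace takes the form $\Tr(\mQ\mD_\lambda\mQ^\top)$ covered by \cref{lem:abel-trace}; once that identification is in place the bound is just the finite-dimensional Abel-summation estimate already recorded there. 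Everything else — the embedding of $\gF_*$, the boundedness of the kernel, and the $\min\{\delta^2,\cdot\}\le\cdot$ relaxation — is routine.
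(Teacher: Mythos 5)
Your proposal is correct and follows essentially the same route as the paper: embed $\gF_*$ in the radius-$\frac{2B}{\sqrt{1-\epsilon}}$ ball of the rank-$\le(d+1)$ sub-RKHS $\hat{\gH}_*$ (with the inherited $\laph$ inner product), apply \citet[Corollary~14.5]{wainwright2019high}, and bound $\sum_j \mu_j$ by $S_\lambda(d+1)$ via the $\Tr(\mQ \mD_\lambda \mQ^{\top})$ reduction of \cref{lem:abel-trace}. You merely spell out the norm, rank, and trace identifications that the paper leaves implicit, so there is nothing to change.
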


\subsection{Proofs}

\paragraph{Lemma \ref{thm:main}.}
\textit{
    Suppose $\nse_1,\cdots,\nse_n$ are \iid{} $\gN(0,\sigma^2)$ variates.
    If $\hat{\Phi}$ has $d$ dimensions ($d$ can be $\infty$),
    then we have the following uniform bound
    over all $f^* = \tax^* g^* \in \Bdowne$:
    \begin{equation*}
    \begin{aligned}
        & \underset{\tilde{x}_i,\nse_i}{\sP} \left [ \forall f^* \in \Bdowne, \| \hat{f} - f^* \|_\px^2 \le  9 \|  \fproj - f^* \|_\px^2 + \frac{c_0 \bt (B^2 + \sigma B)}{1-\epsilon} \sqrt{\frac{S_\lambda(d+1)}{n} } \right ] \\ 
        \ge \; & 1 - c_1 \exp \left ( -\frac{c_2 \sqrt{2n S_\lambda(d+1)}}{\bt} \right ) - \exp \left ( - \sqrt{\frac{2 n \bt^2 B^2 }{1 - \epsilon}}  \right )  ,
    \end{aligned}
    \end{equation*}
    where $\fproj = \tax^*(\Pi_{\hPhi} g^*)$ is the projection of $f^*$ onto $\learnh$ \wrt{} $\langle \cdot, \cdot \rangle_\laph$,
    and $c_0,c_1, c_2$ are universal constants.
    Moreover, $S_\lambda(d+1) \le \min \set{d+1, \bt^2}$.
}
\begin{proof}
    By Proposition \ref{prop:m-bound}, all functions in $\gF_*$ are $b$-uniformly bounded,
    with $b = \frac{2 \bt B}{\sqrt{1-\epsilon}}$.
    And obviously $\gF_*$ is star-shaped, meaning that for all $f \in \gF_*$ and all $\beta \in [0,1]$, $\beta f \in \gF_*$.
    Let $t^2 = b \cdot \sqrt{\frac{8 B^2}{n (1-\epsilon)} S_\lambda (d+1)} \ge b \lrad_n(\delta; \gF_*)$.
    Then, by \citet[Theorem~14.1]{wainwright2019high}, we have
\begin{equation}
    \sP \left [ \left | \|f \|_n^2 - \|f \|_\px^2 \right |  \ge \frac{1}{2} \| f \|_\px^2 + \frac{t^2}{2}  \right ] \le c_1 \exp \left ( - c_2 \frac{n t^2}{b^2} \right ) \qquad \text{for all } f \in \gF_* 
\end{equation}
for universal constant $c_1, c_2$.
We know that $\hat{f}-f^* \in \gF_*$ and $\fproj - f^* \in \gF_*$, which means that

\begin{equation}
\begin{aligned}
  & \sP \left [ \left (\| \hat{f} - f^* \|_\px^2 \ge 2 \| \hat{f} - f^* \|_n^2 + t^2 \right ) \vee \left ( \| \fproj - f^*\|_n^2 \ge \frac{3}{2}\| \fproj - f^*\|_\px^2 + \frac{t^2}{2} \right ) \right ]  \\ 
  & \le c_1 \exp \left (-c_2 \frac{nt^2}{b^2} \right )  .  
\end{aligned}
\end{equation}

Let $\delta_n^2 = 2\sigma \sqrt{\frac{8 \bt^2 B^2}{n(1-\epsilon)}}$.
By Lemma \ref{lem:13-22}, we have $\delta_n^2 \ge 2 \sigma \gG_n(\delta_n; \gF_0)$.
And $\gF_0$ is also star-shaped.
Thus, by setting $\gamma = 1/2$ in \citet[Theorem~13.13]{wainwright2019high}, we have\footnote{Please refer to the proof of \citet[Theorem~13.13]{wainwright2019high} for removing the universal constants in this theorem.}
\begin{equation}
\label{eqn:sim-135}
    \sP \left [ \| \hat{f} - f^* \|_n^2 \ge 3 \| \fproj - f^*\|_n^2 + 32  \delta_n^2   \right ] \le \exp \left ( - \frac{n  \delta_n^2}{2 \sigma^2}  \right )  .
\end{equation}

Combining the two inequalities above with the union bound,
we obtain the result.
\end{proof}

Now we prove Lemma \ref{lem:f_diff}.
Without loss of generality, suppose $h_1,\cdots,h_{d'}$ are linearly independent.
Let $\hat{\gH}_{d'} := \sspan \set{h_1,\cdots,h_{d'}}$.
Let $g^* = g_0 + \beta g_1$,
where $g_0 = \Pi_{\hat{\gH}_{d'}} g^*$,
$g_1 \perp g_0$,
and $\|g_1\|_\pa = 1$.
So by Lemma \ref{lem:abel-trace}, we have:
\begin{prop}
\label{prop:lda}
    $\| \tax^* (\mG_h^{-1/2}h_1)\|_\px^2 + \cdots + \| \tax^* (\mG_h^{-1/2}h_{d'})\|_\px^2 + \| \tax^* g_1 \|_\px^2 \le \lambda_1 + \cdots + \lambda_{d'+1}$.
\end{prop}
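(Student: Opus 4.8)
The plan is to reduce the inequality to the trace bound of \cref{lem:abel-trace} by rewriting the left-hand side as $\Tr(\mQ\mD_\lambda\mQ^{\top})$ for a matrix $\mQ$ with $d'+1$ orthonormal rows. First I would orthonormalize: writing $\vh = (h_1,\dots,h_{d'})^{\top}$ and $\tilde h_i := (\mG_h^{-1/2}\vh)_i$, a one-line Gram computation gives $\langle \tilde h_i,\tilde h_j\rangle_\pa = (\mG_h^{-1/2}\mG_h\mG_h^{-1/2})_{ij} = \delta_{ij}$, so $\tilde h_1,\dots,\tilde h_{d'}$ is an orthonormal basis of $\hat{\gH}_{d'} = \sspan\{h_1,\dots,h_{d'}\}$ and $\tax^*(\mG_h^{-1/2}h_i)=\tax^*\tilde h_i$. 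Since $\beta g_1 = g^* - g_0 = g^* - \Pi_{\hat{\gH}_{d'}}g^*$ is the residual of an orthogonal projection in $\lap$, it is orthogonal to all of $\hat{\gH}_{d'}$ — not merely to $g_0$ — (and when $g^*\in\hat{\gH}_{d'}$ one takes $g_1$ to be any unit vector of $\hat{\gH}_{d'}^{\perp}$); hence $v_1:=\tilde h_1,\dots,v_{d'}:=\tilde h_{d'},\,v_{d'+1}:=g_1$ form an orthonormal system of $d'+1$ vectors in $\lap$.

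Next I would convert each summand using the adjoint relation $\langle \tax f,g\rangle_\pa = \langle f,\tax^* g\rangle_\px$: taking $f=\tax^* v$ and $g=v$ gives $\|\tax^* v\|_\px^2 = \langle v,\tax\tax^* v\rangle_\pa$ for every $v\in\lap$, so the left-hand side of the proposition equals $\sum_{k=1}^{d'+1}\langle v_k,\tax\tax^* v_k\rangle_\pa$. Expanding each $v_k$ in the complete orthonormal eigenbasis $\{\phi_j\}$ of $\tax\tax^*$ provided by \cref{prop:dual-2} (with $\tax\tax^*\phi_j=\lambda_j\phi_j$, including the zero eigenvalues, so the expansion is valid and $\tax\tax^*$ acts diagonally), write $v_k=\sum_j q_{kj}\phi_j$; then $\langle v_k,\tax\tax^* v_k\rangle_\pa = \sum_j\lambda_j q_{kj}^2$, whence $\sum_{k=1}^{d'+1}\langle v_k,\tax\tax^* v_k\rangle_\pa = \sum_j\lambda_j\sum_{k=1}^{d'+1}q_{kj}^2 = \Tr(\mQ\mD_\lambda\mQ^{\top})$ for $\mQ:=(q_{kj})$. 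Orthonormality of $\{v_k\}$ means exactly $\mQ\mQ^{\top}=I_{d'+1}$, i.e. $\mQ$ has orthonormal rows, so \cref{lem:abel-trace} (applied with its ``$d$'' equal to $d'+1$) yields $\Tr(\mQ\mD_\lambda\mQ^{\top})\le\lambda_1+\cdots+\lambda_{d'+1}$, which is the claim.

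I do not anticipate a genuine obstacle: the argument is a short chain of Hilbert-space identities plus a cited lemma. The two points that need a little care are (i) using that $g_1$ is orthogonal to the \emph{entire} subspace $\hat{\gH}_{d'}$ and not just to the single vector $g_0$, which is what promotes the $d'$-dimensional orthonormal family $\{\tilde h_i\}$ to a $(d'+1)$-dimensional one and is the whole reason the bound reaches $\lambda_{d'+1}$; and (ii) working with a \emph{complete} orthonormal basis $\{\phi_j\}$ of $\lap$, so that the coordinate matrix $\mQ$ really does satisfy $\mQ\mQ^{\top}=I_{d'+1}$ and $\tax\tax^*$ is genuinely diagonalized. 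Both are furnished directly by the construction of $g_1$ and by \cref{prop:dual-2}, so once they are invoked the proof is essentially an application of \cref{lem:abel-trace}.
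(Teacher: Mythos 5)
Your proof is correct and follows essentially the same route as the paper: expand the $(d'+1)$ orthonormal functions $\mG_h^{-1/2}h_1,\dots,\mG_h^{-1/2}h_{d'},g_1$ in the eigenbasis $\{\phi_i\}$ of $\tax\tax^*$ via a coordinate matrix $\mQ$ with orthonormal rows, identify the left-hand side with $\Tr(\mQ\mD_\lambda\mQ^{\top})$, and invoke \cref{lem:abel-trace}. The only difference is presentational: you make explicit the adjoint identity $\|\tax^* v\|_\px^2=\langle v,\tax\tax^* v\rangle_\pa$ and the fact that $g_1$ is orthogonal to all of $\hat{\gH}_{d'}$ (not merely to $g_0$), both of which the paper leaves implicit.
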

\begin{proof}
    Let $\left [ \mG_h^{-1/2}h_1, \cdots, \mG_h^{-1/2}h_{d'}, g_1 \right ] = \mQ \Phistar$, where $\mQ$ is a matrix with $(d'+1)$ orthonormal rows.
    Then, $\left [ \tax^*( \mG_h^{-1/2}h_1), \cdots, \tax^* (\mG_h^{-1/2}h_{d'}), \tax^* g_1 \right ] = \mQ \dlbd^{1/2} \Phistar$.
    Thus, we have
\begin{equation*}
\| \tax^* (\mG_h^{-1/2}h_1)\|_\px^2 + \cdots + \| \tax^* (\mG_h^{-1/2}h_{d'})\|_\px^2 + \| \tax^* g_1 \|_\px^2 = \Tr(\mQ \dlbd \mQ^{\top}).
\end{equation*}
Then, applying Lemma \ref{lem:abel-trace} completes the proof.
\end{proof}
\begin{remark}
    This proposition is the functional version of \citet[Theorem~1]{fan1949theorem}.
\end{remark}

Notice that $\| \tax^* (\mG_h^{-1/2}h_1)\|_\px^2 + \cdots + \| \tax^* (\mG_h^{-1/2}h_{d'})\|_\px^2 = \Tr(\mG_h^{-1/2} \mF_h \mG_h^{-1/2}) = \Tr(\mG_h^{-1} \mF_h)$.
With this, we can prove Lemma \ref{lem:f_diff}:
\paragraph{Lemma \ref{lem:f_diff}.}
\textit{
    For any $f^* \in \Bdowne$, there is
    \begin{equation*}
       \|\fproj-f^*\|_\px^2 \le   \frac{\tau^2 }{1 - \tau^2} \frac{\tau + \epsilon}{1-\epsilon} B^2 .
    \end{equation*}
}
\begin{proof}
    Let $\alpha^2 = \|g_0\|_\pa^2$,
    and $\beta^2 = \|g_0 - g^*\|_\pa^2$.
    By Corollary \ref{cor:con-1}, $\alpha^2 + \beta^2 \le \frac{B^2}{1-\epsilon}$.
    Eqn. (\ref{eqn:lap-equ}) implies that
\begin{equation*}
    (1-\epsilon)(\alpha^2 + \beta^2) \le \| \tax^* (g_0 + \beta g_1) \|_\px^2 \le \alpha^2 + \beta^2 \tau^2 + 2 \alpha \beta \tau  ,
\end{equation*}
since $\|\tax^* g_0\|_\px^2 \le \|g_0\|_\pa^2 = \alpha^2$,
and $\|\tax^* g_1\|_\px^2 \le \tau^2$ by Proposition \ref{prop:lda}.
Thus,
\begin{equation*}
    (1-\tau^2) \beta^2 \le \epsilon (\alpha^2 + \beta^2) + 2 \alpha \beta \tau \le (\epsilon + \tau) (\alpha^2 + \beta^2) \le (\epsilon + \tau) \frac{B^2}{1-\epsilon}  .
\end{equation*}
Thus, we have $\|\fproj-f^*\|_\px^2 = \| \tax^* (g_0 - g^*) \|_\px^2  = \beta^2 \|\tax^* g_1\|_\px^2 \le \beta^2 \tau^2 $, which leads to the inequality we need to prove.
Finally, by setting $h_i = \hat{\phi}_i$, we can see that $ \tau^2 \le S_\lambda(d+1) - \Tr(\mG^{-1} \mF)$.
And for all $d' \le d$, $\Tr(\mG_h^{-1} \mF_h) \le S_\lambda(d')$,
so $\tau^2 \ge \lambda_{d+1}$.
\end{proof}

\section{Proofs for Section 4}

\paragraph{Proposition \ref{prop:reg}.}
\textit{
   For any $\hPsi=[\hat{\psi}_1,\cdots,\hat{\psi}_d]$ where $\hat{\psi}_i \in \lxp$, it holds that
    \begin{equation}
    \label{eqn:guarantee-1-restated}
     \err(\hPsi; \Bdowne) \ge \frac{\lambda_{d+1}}{1 - \lambda_{d+1}} \frac{\epsilon}{1 - \epsilon}  B^2 \quad \text{given that} \quad  \frac{\lambda_{d+1}}{1 - \lambda_{d+1}} \frac{\epsilon}{1 - \epsilon} \le \frac{1}{2}  .
    \end{equation}
    To attain equality, it is sufficient for $\hPsi$ to span the top-$d$ eigenspace, and also necessary if $\lambda_{d+1} < \lambda_d$.
}

\begin{proof}
\textit{Necessity:}
Since $\hat{\Psi}$ is at most rank-$d$,
there must be a function in $\sspan \set{\psi_1,\cdots,\psi_{d+1}}$ that is orthogonal to $\hat{\Psi}$.
Thus, we can find two functions $f_1,f_2 \in \sspan \set{\psi_1,\cdots,\psi_{d+1}}$ such that: $\| f_1 \|_\px = \| f_2 \|_\px = 1$,
$f_1$ is orthogonal to $\hat{\Psi}$,
$f_2 = \vu^{\top} \hat{\Psi}$ (which means that $f_2 \perp f_1$),
and $\psi_1 \in \sspan \set{f_1, f_2}$.
Recall that $\lambda_1 = 1$, and $\psi_1 \equiv 1$.
Let $\psi_1 = \alpha_1 f_1 + \alpha_2 f_2$, then $\alpha_1^2 + \alpha_2^2 = 1$.
Without loss of generality, suppose $\alpha_1,\alpha_2 \in [0,1]$.
Let $f_0 = \alpha_2 f_1 - \alpha_1 f_2$.
Then, $\| f_0 \|_\px = 1$, $f_0 \perp \psi_1$.
Note that we also have $\langle \psi_1, f_0 \rangle_\laph = 0$ by duality.
Let $\beta_1, \beta_2 \in [0,1]$ be any value such that $f = \beta_1 \psi_1 + \beta_2 f_0$ satisfies $\| f \|_\px^2 = \beta_1^2 + \beta_2^2 = 1$, and $\|f \|_\laph^2 \le \frac{1}{1-\epsilon}$.
This is satisfied as long as $\beta_2^2 \le \frac{\epsilon}{1-\epsilon} \frac{\lambda_{d+1}}{1 - \lambda_{d+1}}$,
because $ \|f \|_\laph^2 \le \beta_1^2 + \frac{\beta_2^2}{\lambda_{d+1}} = 1 + \frac{1 - \lambda_{d+1}}{\lambda_{d+1}} \beta_2^2 \le \frac{1}{1-\epsilon}$.
Moreover, we have $Bf \in \Bdowne$.

It is easy to show that $F(\alpha_1) = \alpha_1 \beta_1 + \alpha_2 \beta_2 = \alpha_1 \beta_1 + \sqrt{1 - \alpha_1^2} \beta_2 \  (\alpha_1 \in [0,1])$ first increases then decreases, so $F(\alpha_1)^2 \ge \min \set{F(0)^2, F(1)^2} = \min \set{\beta_1^2, \beta_2^2}$, which can be $\frac{\epsilon}{1-\epsilon} \frac{\lambda_{d+1}}{1 - \lambda_{d+1}}$ in the worst case given that it is at most $\frac{1}{2}$,
in which case the prediction error of $Bf$ is $\| B (\alpha_1 \beta_1 + \alpha_2 \beta_2)f_1 \|_\px^2 =  F(\alpha_1)^2 B^2 = \frac{\epsilon}{1 - \epsilon} \frac{\lambda_{d+1}}{1 - \lambda_{d+1}} B^2$.
Thus, for any $\hat{\Psi}$, we can find a function $Bf \in \Bdowne$ such that $\min_{w} \err(w^{\top} \hat{\Psi}, Bf) \ge \frac{\epsilon}{1 - \epsilon} \frac{\lambda_{d+1}}{1 - \lambda_{d+1}} B^2$.

When $\lambda_d > \lambda_{d+1}$, to attain equality,
we need $\alpha_1 = 0$, and $\| f \|_\laph^2 = \beta_1^2 + \frac{\beta_2^2}{\lambda_{d+1}}$,
which means that $f_0 = \psi_{d+1}$.
Thus, only $f_1 = f_0 = \psi_{d+1}$ is orthogonal to $\hat{\Psi}$, so $\hat{\Psi}$ must span the top-$d$ eigenspace.

    \textit{Sufficiency:}
Suppose $\hat{\Psi}$ spans the top-$d$ eigenspace.
For any $f \in \Bdowne$ such that $f = \sum_i u_i \psi_i$,
we have $\sum_i u_i^2 \le B^2$, and $\sum_i \frac{1-\epsilon-\lambda_i}{\lambda_i} u_i^2 \le 0$.
Let $a = \sum_{i \ge d+1} u_i^2$ and $b = \sum_{i=1}^d u_i^2$. Then, $a = \min_{w} \err(w^{\top} \hat{\Psi}, f)$, and $a + b \le B^2$.
So we have
\begin{align*}
    0 & \ge \sum_i \frac{1-\epsilon-\lambda_i}{\lambda_i} u_i^2 \ge -\epsilon b + \frac{1 - \epsilon - \lambda_{d+1}}{\lambda_{d+1}} a \quad \left (\text{since } \frac{1-\epsilon-\lambda}{\lambda}\text{ decreases with }\lambda \right ) \\ 
    & \ge - \epsilon (B^2 - a) + \frac{1 - \epsilon - \lambda_{d+1}}{\lambda_{d+1}} a  \\ 
    &= -\epsilon B^2 + (1-\epsilon) \frac{1-\lambda_{d+1}}{\lambda_{d+1}}  a  ,
\end{align*}
which combined with the necessity part implies that $\err(\hPsi; \Bdowne) = \frac{\epsilon}{1 - \epsilon} \frac{\lambda_{d+1}}{1 - \lambda_{d+1}} B^2$.
\end{proof}

\paragraph{Lemma \ref{lem:ratio-trace-gen}.}
\textit{
Suppose there exists a constant $C > 0$ such that $\E_\pa[g^4] \le C^2 \| g \|_\pa^2$, for all $g = w^{\top} \hat{\Phi}$ where $\|g\|_\pa \le 1$.
Then, for any $\delta > 0$, it holds with probability at least $1 - \delta$ that
\begin{equation*}
| \Tr(\mhG^{-1} \mhF) - \Tr(\mG^{-1} \mF) | \le \left (2 + \sqrt{2 \log \frac{2}{\delta}} \right ) \frac{ C \bt + \bt^2}{\sqrt{N}} d  .
\end{equation*}
}

\begin{proof}
Since multiplying an invertible $d \times d$ matrix to $\hat{\Phi}$ does not change either $\Tr(\mhG^{-1} \mhF)$ or $\Tr(\mG^{-1} \mF)$,
for simplicity let us multiply $\mG^{-1/2}$ to $\hat{\Phi}$,
so that $\langle \hat{\phi}_i, \hat{\phi}_j \rangle_\pa = \delta_{i,j}$ for all $i,j \in [d]$ (\ie{} $\mG = \mI$).
Define $\gF_1 = \sset{f \in \laph}{\| f\|_{\laph} \le 1}$.
Its Rademacher complexity is given by
\begin{equation}
    \rad_N (\gF_1) = \underset{x_1,\cdots,x_N}{\E} \underset{\sigma_1,\cdots,\sigma_N}{\E} \left [ \sup_{f \in \gF_1} \frac{1}{N} \sum_{k=1}^N \sigma_k f(x_k) \right ].
\end{equation}
By \citet[Theorem~6.12]{mohri2018foundations},
we have $\rad_N (\gF_1) \le \bt N^{-1/2}$.
Moreover, by Proposition \ref{prop:m-bound},
all $f \in \gF_1$ satisfy $|f(x)| \le \bt$ for all $x$.
Thus, by \citet[Theorem~4.10]{wainwright2019high}, for any $\delta > 0$, with probability at least $1 - \delta / 2$, it holds for all $f \in \gF_1$ that
\begin{equation}
\label{eqn:rad-fc-restated}
    \left | \frac{1}{N} \sum_{k=1}^N f(x_k) - \E[f(X)]  \right | \le 2 \rad_N (\gF_1) + \bt  \sqrt{\frac{2}{N} \log \frac{2}{\delta}} \le \left (2 + \sqrt{2 \log \frac{2}{\delta}} \right ) \frac{ \bt}{\sqrt{N}}.
\end{equation}

Define matrix $\mM = \mhG^{-1/2} \mhF \mhG^{-1/2} = (m_{i,j})_{i,j \in [d]}$. $\|\mM \|_2 \le 1$, so $\sum_{i=1}^d m_{i,j}^2 \le 1$ for all $j \in [d]$.
Consider $\Tr((\mI - \mhG) \mM)$.
For any $j \in [d]$, we have
\begin{equation*}
    ((\mI - \mhG) \mM)(j,j) = \left \langle \hat{\phi}_j, \sum_{i=1}^d m_{i,j} \hat{\phi}_i \right \rangle_{\hpa} - \left \langle \hat{\phi}_j, \sum_{i=1}^d m_{i,j} \hat{\phi}_i \right \rangle_{\pa}   .
\end{equation*}

Note that $\left \| \sum_{i=1}^d m_{i,j} \hat{\phi}_i \right \|_\pa \le 1$, so $\left \| \hat{\phi}_j \left ( \sum_{i=1}^d m_{i,j} \hat{\phi}_i \right ) \right \|_\pa^2 \le \sqrt{\E[\hat{\phi}_j^4] \E \left[ \left ( \sum_{i=1}^d m_{i,j} \hat{\phi}_i \right )^4 \right ]} \le C^2$, which means that $ C^{-1} \tax^* \left ( \hat{\phi}_j \left ( \sum_{i=1}^d m_{i,j} \hat{\phi}_i \right ) \right ) \in \gF_1$.
So if Eqn. (\ref{eqn:rad-fc-restated}) holds, then for all $j \in [d]$, we have
\begin{align*}
    ((\mI - \mhG) \mM)(j,j) & = \left | \frac{1}{N} \sum_{k=1}^N \tax^* \left (  \hat{\phi}_j \left ( \sum_{i=1}^d m_{i,j} \hat{\phi}_i \right ) \right )(x_k) - \E \left [ \tax^* \left (  \hat{\phi}_j \left ( \sum_{i=1}^d m_{i,j} \hat{\phi}_i \right ) \right ) (X) \right ] \right | \\ 
    & \le \left (2 + \sqrt{2 \log \frac{2}{\delta}} \right ) \frac{ C \bt}{\sqrt{N}},
\end{align*}

which implies that
\begin{align*}
    \Tr \left (\mhG^{-1} \mhF - \mhF \right ) = \Tr \left ( \mhG^{-1/2} ( \mI - \mhG  ) \mhG^{-1/2} \mhF \right ) = \Tr \left (  (\mI - \mhG) \mM \right )  \le \left (2 + \sqrt{2 \log \frac{2}{\delta}} \right ) \frac{ C \bt d}{\sqrt{N}} .
\end{align*}

Next, define $\gF_2 = \sset{f_1f_2}{f_1, f_2 \in \laph, \|f_1\|_\laph \le 1, \|f_2 \|_\laph \le 1}$.
By Proposition \ref{prop:emp-rad-bound} (proved after this lemma), we have $\rad_N (\gF_2) \le \bt^2 N^{-1/2}$.
And all $f \in \gF_2$ satisfy $|f(x)| \le \bt^2$ for all $x$ by Proposition \ref{prop:m-bound}.
So with probability at least $1 - \delta/2$, we have for all $f \in \gF_2$,
\begin{equation}
\label{eqn:rad-f1}
    \left | \frac{1}{N} \sum_{k=1}^N f(x_k) - \E[f(X)]  \right | \le 2 \rad_N (\gF_2) +  \bt^2  \sqrt{\frac{2}{N} \log \frac{2}{\delta}} \le \left (2 + \sqrt{2 \log \frac{2}{\delta}} \right ) \frac{  \bt^2}{\sqrt{N}}.
\end{equation}

Note that $\| \hat{\psi}_i \|_\laph \le 1$. 
So under Eqn. (\ref{eqn:rad-f1}), we have for all $i,j \in [d]$,
\begin{equation*}
    \left | \langle \hat{\psi}_i, \hat{\psi}_j \rangle_\hpx -\langle \hat{\psi}_i, \hat{\psi}_j \rangle_\px  \right | = \left | \frac{1}{N} \sum_{k=1}^N \hat{\psi}_i(x_k) \hat{\psi}_j(x_k) - \E[\hat{\psi}_i \hat{\psi}_j] \right | \le \left (2 + \sqrt{2 \log \frac{2}{\delta}} \right ) \frac{  \bt^2}{\sqrt{N}},
\end{equation*}
which implies that $\Tr \left ( \mhF - \mG^{-1} \mF \right ) = \Tr \left ( \mhF - \mF \right ) \le  \left (2 + \sqrt{2 \log \frac{2}{\delta}} \right ) \frac{  \bt^2 d}{\sqrt{N}}$.

Finally, applying the union bound completes the proof.
\end{proof}

\begin{prop}
\label{prop:emp-rad-bound}
Let $\gF_2 = \sset{f_1f_2}{f_1, f_2 \in \laph, \|f_1\|_\laph \le 1, \|f_2 \|_\laph \le 1}$.
Then,
    $\rad_N(\gF_2) \le \frac{\bt^2}{\sqrt{N}}$.
\end{prop}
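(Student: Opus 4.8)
The plan is to bound $\gF_2$ by the unit ball of the ``product RKHS'' with reproducing kernel $\kx^2$ (the pointwise square of $\kx$) and then run the textbook second-moment computation for Rademacher averages of kernel classes. First I would introduce the canonical feature map $\Phi(x) := \kx(x,\cdot) \in \laph$. Since $\kx$ is the reproducing kernel of $\laph$, we have $\|\Phi(x)\|_\laph^2 = \langle \kx(x,\cdot), \kx(x,\cdot)\rangle_\laph = \kx(x,x) \le \bt^2$ for $\px$-almost every $x$ by \cref{def:bt}, and for any $f_i \in \laph$ with $\|f_i\|_\laph \le 1$ the reproducing property gives $f_i(x) = \langle f_i, \Phi(x)\rangle_\laph$. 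The key observation is that the product linearizes: $f_1(x) f_2(x) = \langle f_1 \otimes f_2,\ \Phi(x) \otimes \Phi(x)\rangle_{\laph \otimes \laph}$, an inner product in the Hilbert tensor product $\laph \otimes \laph$, with $\|f_1 \otimes f_2\|_{\laph \otimes \laph} = \|f_1\|_\laph \|f_2\|_\laph \le 1$ and $\|\Phi(x)\otimes\Phi(x)\|_{\laph\otimes\laph} = \|\Phi(x)\|_\laph^2 \le \bt^2$.

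With this, given samples $x_1,\dots,x_N$ and \iid{} Rademacher variables $\sigma_1,\dots,\sigma_N$, Cauchy--Schwarz in $\laph \otimes \laph$ gives, uniformly over all $f = f_1 f_2 \in \gF_2$,
\begin{equation*}
    \frac1N \sum_{k=1}^N \sigma_k f(x_k) = \left\langle f_1 \otimes f_2,\ \frac1N \sum_{k=1}^N \sigma_k\, \Phi(x_k)\otimes\Phi(x_k) \right\rangle_{\laph\otimes\laph} \le \left\| \frac1N \sum_{k=1}^N \sigma_k\, \Phi(x_k)\otimes\Phi(x_k) \right\|_{\laph\otimes\laph}.
\end{equation*}
Taking $\E_\sigma$, applying Jensen's inequality, and using $\E[\sigma_k \sigma_\ell] = \delta_{k\ell}$ to kill the cross terms,
\begin{equation*}
    \E_\sigma \left\| \frac1N \sum_{k=1}^N \sigma_k\, \Phi(x_k)\otimes\Phi(x_k) \right\|_{\laph\otimes\laph} \le \left( \frac1{N^2} \sum_{k=1}^N \|\Phi(x_k)\|_\laph^4 \right)^{1/2} \le \left( \frac{N \bt^4}{N^2} \right)^{1/2} = \frac{\bt^2}{\sqrt N}.
\end{equation*}
The right-hand side does not depend on $x_1,\dots,x_N$, so taking the outer expectation over the samples leaves it unchanged and yields $\rad_N(\gF_2) \le \bt^2 / \sqrt N$.

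The one step that genuinely needs justification rather than bookkeeping is the tensor linearization --- specifically that $\Phi(x)\otimes\Phi(x)$ is a bona fide element of $\laph\otimes\laph$ whose norm equals $\|\Phi(x)\|_\laph^2$, and that pairing it against $f_1\otimes f_2$ reproduces the pointwise product. This is precisely the classical fact that the pointwise product of two RKHS functions lies in the RKHS of the product kernel with norm at most the product of the norms; I would either cite it or verify it directly using the eigenbasis expansions $f_i = \sum_j u_j^{(i)} \psi_j$ and $\kx(x,\cdot) = \sum_j \lambda_j \psi_j(x)\,\psi_j$, which reduces the claim to a finite-dimensional Cauchy--Schwarz. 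Everything after that is the standard variance bound on Rademacher sums.
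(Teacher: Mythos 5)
Your proof is correct and is essentially the paper's own argument in coordinate-free form: the paper expands $f_1,f_2$ in the eigenbasis, writes $f_i(x)=\vu^{\top}\Psi(x)$, and bounds the Rademacher sum by the Frobenius norm of $\sum_k \sigma_k \Psi_k\Psi_k^{\top}$ before applying Jensen and $\E[\sigma_k\sigma_l]=\delta_{k,l}$, which is exactly your Hilbert--Schmidt/tensor-norm computation with $\Phi(x_k)\otimes\Phi(x_k)$ in place of $\Psi_k\Psi_k^{\top}$. The tensor linearization step you flag is sound (it is just the reproducing property plus $\langle a\otimes b, c\otimes d\rangle=\langle a,c\rangle\langle b,d\rangle$, with $\kx(x,x)\le\bt^2$ holding $\px$-a.e.\ by \cref{def:bt}), so no gap remains.
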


\begin{proof}
 For any $h(x) = f_1(x) f_2(x) \in \gF_2$, let $f_1 = \tax^* g_1$ and $f_2 = \tax^* g_2$, where $\| g_1\|_\pa \le 1$ and $\|g_2\|_\pa \le 1$. Let $g_1 = \sum_i u_i \phi_i$ and $g_2 = \sum_i v_i \phi_i$. Let $\vu = [u_1, u_2, \cdots]$ and $\vv = [v_1, v_2, \cdots]$. Then, $\| \vu \|_2 \le 1$ and $\| \vv \|_2 \le 1$. And we have $f_1 = \sum_i \lambda_i^{1/2} u_i \psi_i$, and $f_2 = \sum_i \lambda_i^{1/2} v_i \psi_i$.

For any $x \in \gX$, let $\Psi(x) = [\lambda_1^{1/2} \psi_1(x), \lambda_2^{1/2} \psi_2(x), \cdots ]$.  Then, $f_1(x) = \vu^{\top} \Psi(x)$ and $f_2(x) = \vv^{\top} \Psi(x)$.
Denote $\Psi_k = \Psi(x_k)$.
Then, $\Psi_k^{\top} \Psi_k \le \bt^2$ for all $k \in [N]$.
So for any $S = \{ x_1,\cdots,x_N\}$,
the empirical Rademacher complexity satisfies
\begin{align*}
    \hat{\rad}_S(\gF_2) & \le \underset{\vsigma}{\E} \left [ \sup_{\|\vu\|_2 \le 1, \| \vv \|_2 \le 1} \left | \frac{1}{N} \sum_{k=1}^N \sigma_k \vu^{\top} \Psi_k \Psi_k^{\top} \vv \right |  \right ] \\ 
    & \le \frac{1}{N} \underset{\vsigma}{\E} \left [  \left \| \sum_{k=1}^N \sigma_k \Psi_k \Psi_k^{\top}  \right  \|_2   \right ] \\ 
    & \le \frac{1}{N} \underset{\vsigma}{\E} \left [  \left \| \sum_{k=1}^N \sigma_k \Psi_k \Psi_k^{\top}  \right  \|_F   \right ] \\ 
    & = \frac{1}{N} \underset{\vsigma}{\E} \left [ \Tr \left ( \left ( \sum_{k=1}^N \sigma_k \Psi_k \Psi_k^{\top} \right )^{\top} \left ( \sum_{l=1}^N \sigma_l \Psi_l \Psi_l^{\top} \right ) \right )^{1/2}  \right ] \\ 
    & \le \frac{1}{N}   \sqrt{ \underset{\vsigma}{\E} \left [  \Tr \left ( \sum_{k,l=1}^N \sigma_k \sigma_l \Psi_k \Psi_k^{\top}  \Psi_l \Psi_l^{\top}   \right )    \right ]  } \qquad (\text{Jensen}) \\ 
    & = \frac{1}{N}\sqrt{ \Tr \left ( \sum_{k,l=1}^N  \E[\sigma_k \sigma_l]   \Psi_k \Psi_k^{\top}  \Psi_l \Psi_l^{\top} \right ) } \\ 
    & = \frac{1}{N} \sqrt{\Tr \left ( \sum_{k=1}^N  \Psi_k \Psi_k^{\top} \Psi_k \Psi_k^{\top} \right )} \\ 
    & \le \frac{1}{N} \sqrt{N \bt^4} = \frac{\bt^2}{\sqrt{N}}  .
\end{align*}
Then, since $\rad_N(\gF_2) = \E_S [\hat{\rad}_S(\gF_2)]$, we obtain the result.
\end{proof}

\paragraph{Lemma \ref{lem:lambda-diff}.}
\textit{
    Suppose $\hat{\phi}_i = \bar{\phi}_i$ for $i \in [d]$.
    Let $\gamma_{\mG} := \lambda_{\max}(\mG) / \lambda_{\min}(\mG)$,
    which is the condition number of $\mG$.
    Then, for any $\delta > 0$, both
    \begin{equation*}
        \sum_{j=1}^{d} \bar{\lambda}_j \ge \sum_{i=1}^{d} \lambda_i - \left (2 + \sqrt{2 \log \frac{2}{\delta}} \right ) \frac{ (\lambda_d^{-1}+1) \bt^2 }{\sqrt{N}} d
    \end{equation*}
    and Eqn. (\ref{eqn:ratio-trace-gen}) with $C = \bt \bar{\lambda}_d^{-1} \gamma_{\mG}^{1/2}$ hold simultaneously for $\learnh = \ephd$ with probability at least $1 - \delta$.
}
    
\begin{proof}
Denote $\Phistard = [\phi_1,\cdots,\phi_d]$ and $\hPhistard = [\bar{\phi}_1,\cdots,\bar{\phi}_d]$.
Let $\hPhistard =\mP \Phistar$,
where $\mP$ is a matrix with $d$ rows.
Observe that for any $g = \sum_i u_i \bar{\phi}_i$ such that $\|g\|_\pa \le 1$, 
we have $g = \bar{\tax} \tax^* \left (\sum_i \bar{\lambda}_i^{-1}  u_i \bar{\phi}_i  \right )$.
Let $\vu = (u_1,\cdots,u_d)$,
then there is $g = \vu^{\top} \hPhistard = \vu^{\top} \mP \Phistar$,
so $\| \mP^{\top} \vu\|_2 \le 1$.
Thus, we have $\|\sum_i \bar{\lambda}_i^{-1}  u_i \bar{\phi}_i\|_\pa = \|\mP^{\top} \mD_{\bar{\lambda}^d}^{-1} \vu \|_2 = \|\mP^{\top} \mD_{\bar{\lambda}^d}^{-1} (\mP \mP^{\top})^{-1} \mP \mP^{\top} \vu \|_2$.

So we just need to show that $\|\mP^{\top} \mD_{\bar{\lambda}^d}^{-1} (\mP \mP^{\top})^{-1} \mP   \|_2 \le \bar{\lambda}_d^{-1} \gamma_{\mG}^{1/2}$.
$\|\mP^{\top} \mD_{\bar{\lambda}^d}^{-1} (\mP \mP^{\top})^{-1} \mP \|_2$ is equal to the square root of the largest eigenvalue of $\mP^{\top} \mD_{\bar{\lambda}^d}^{-1}  (\mP \mP^{\top})^{-1} \mD_{\bar{\lambda}^d}^{-1} \mP$,
and by using two simple linear algebra exercises: (i) $\lambda_{\max}(\mA \mB) \le \lambda_{\max}(\mA) \lambda_{\max}(\mB)$ for positive definite matrices $\mA$ and $\mB$, and (ii) $\mA \mB$ and $\mB \mA$ share the same non-zero eigenvalues (Sylvester's Theorem),
and the fact that $\mG = \mP \mP^{\top}$,
we can show that the largest eigenvalue of this matrix is at most $\bar{\lambda}_d^{-2} \gamma_{\mG}$.

Therefore, we have $\|\mP^{\top} \mD_{\bar{\lambda}^d}^{-1} (\mP \mP^{\top})^{-1} \mP \|_2 \le \bar{\lambda}_d^{-1} \gamma_{\mG}^{1/2}$,
which combined with $\| \mP^{\top} \vu\|_2 \le 1$ implies that $\|\sum_i \bar{\lambda}_i^{-1}  u_i \bar{\phi}_i\|_\pa \le \bar{\lambda}_d^{-1} \gamma_{\mG}^{1/2}$.
By Proposition \ref{prop:m-bound}, $|\tax^* \left (\sum_i \bar{\lambda}_i^{-1}  u_i \bar{\phi}_i  \right ) (x)| \le \bt \bar{\lambda}_d^{-1} \gamma_{\mG}^{1/2}$ for all $x$,
so we have $|\bar{\tax} \tax^* \left (\sum_i \bar{\lambda}_i^{-1}  u_i \bar{\phi}_i  \right )(a)| = |\int \tax^* \left (\sum_i \bar{\lambda}_i^{-1}  u_i \bar{\phi}_i  \right )(x) p(x|a) dx | \le \bt \bar{\lambda}_d^{-1} \gamma_{\mG}^{1/2}$ for all $a$.
This means that with $C = \bt \bar{\lambda}_d^{-1} \gamma_{\mG}^{1/2}$,
$g$ satisfies the condition of Lemma \ref{lem:ratio-trace-gen}.
Therefore, with probability at least $1-\delta$, both Eqn. (\ref{eqn:rad-fc-restated}) and Eqn. (\ref{eqn:rad-f1}) hold and they lead to Eqn. (\ref{eqn:ratio-trace-gen}).

Now let $\Phistard =\mQ \hPhistar$, where $\mQ$ is a matrix with $d$ rows.
Consider two matrices $\mQ \mQ^{\top}, \mQ \mD_{\bar{\lambda}} \mQ^{\top} \in \R^{d \times d}$ where $\mD_{\bar{\lambda}} = \diag(\bar{\lambda}_1, \bar{\lambda}_2, \cdots)$, 
for which we have
\begin{equation*}
    (\mQ \mQ^{\top}) (i,j) = \langle \phi_i, \phi_j \rangle_\hpa \qquad \text{and} \qquad (\mQ \mD_{\bar{\lambda}} \mQ^{\top}) (i,j) = \langle \tax^* \phi_i, \tax^* \phi_j \rangle_\hpx  .
\end{equation*}
We have $(\langle \phi_i, \phi_j \rangle_\pa)_{i,j \in [d]} = \mI$ and $( \langle \tax^* \phi_i, \tax^* \phi_j \rangle_\px)_{i,j \in [d]} = \mD_{\lambda^d} := \diag(\lambda_1,\cdots,\lambda_d)$.
Moreover, for any $g = \vu^{\top} \Phistard$ such that $\| g \|_\pa \le 1$,
there is $g = \tax \tax^* \left ( \sum_i \lambda_i^{-1} u_i \phi_i \right )$,
and obviously $\| \sum_i \lambda_i^{-1} u_i \phi_i\|_\pa \le \lambda_d^{-1}$.
Thus, we can show that for all $a$, $|g(a)| \le \bt \lambda_d^{-1}$,
which means that $\Phistard$ satisfies the fourth-moment control assumption in Lemma \ref{lem:ratio-trace-gen} with $C' = \bt \lambda_d^{-1}$.
So similar to the proof of Lemma \ref{lem:ratio-trace-gen},
for all $\vu \in \R^d$ such that $\| \vu \|_2 \le 1$,
we can show that
\begin{equation*}
    \left | \vu^{\top} (\mQ \mQ^{\top} - \mI) \vu \right | = \left | \left \langle \vu^{\top} \Phistard, \vu^{\top} \Phistard \right \rangle_{\hpa} - \left \langle \vu^{\top} \Phistard, \vu^{\top} \Phistard \right \rangle_{\pa} \right | \le \left (2 + \sqrt{2 \log \frac{2}{\delta}} \right ) \frac{  \bt^2 \lambda_d^{-1} }{\sqrt{N}} ,
\end{equation*}
which implies that $\|\mQ \mQ^{\top} \|_2 \le 1+ \left (2 + \sqrt{2 \log \frac{2}{\delta}} \right ) \frac{\bt^2 \lambda_d^{-1}}{\sqrt{N}}$ .
It is easy to show that all non-zero eigenvalues of $\mQ^{\top} \mQ$ are also eigenvalues of $\mQ \mQ^{\top}$, so $\| \mQ^{\top} \mQ \|_2 \le 1+ \left (2 + \sqrt{2 \log \frac{2}{\delta}} \right ) \frac{\bt^2 \lambda_d^{-1}}{\sqrt{N}}$.
Moreover, similar to the proof of Lemma \ref{lem:ratio-trace-gen}, we can show that for all $i,j \in [d]$,
\begin{numcases}{}
    \left | \left ( \mQ \mQ^{\top} - \mI \right ) (i,j) \right | \le \left (2 + \sqrt{2 \log \frac{2}{\delta}} \right ) \frac{ \bt^2 \lambda_d^{-1}}{\sqrt{N}}  ;  \label{eqn:proof-lem-lbd-1} \\ 
    \left | \left ( \mQ \mD_{\bar{\lambda}} \mQ^{\top} - \mD_{\lambda^d} \right ) (i,j) \right | \le \left (2 + \sqrt{2 \log \frac{2}{\delta}} \right ) \frac{  \bt^2}{\sqrt{N}} . \label{eqn:proof-lem-lbd-2}
\end{numcases}

Let $\vq_i$ be the $i$-th column of $\mQ$.
Then for all $i \in [d]$, $\vq_i^{\top} \vq_i \le 1+ \left (2 + \sqrt{2 \log \frac{2}{\delta}} \right ) \frac{\bt^2 \lambda_d^{-1}}{\sqrt{N}}$.
And we also have $\sum_{i=1}^{\infty} \vq_i^{\top} \vq_i = \Tr(\mQ^{\top} \mQ) = \Tr(\mQ \mQ^{\top}) \le d + \left (2 + \sqrt{2 \log \frac{2}{\delta}} \right ) \frac{ \bt^2 \lambda_d^{-1} d}{\sqrt{N}}$.
Thus, we have
\begin{align*}
    & \sum_{i=1}^d \lambda_i - \left (2 + \sqrt{2 \log \frac{2}{\delta}} \right ) \frac{ \bt^2}{\sqrt{N}} d  \le \Tr(\mQ \mD_{\bar{\lambda}} \mQ^{\top}) = \Tr(\mD_{\bar{\lambda}} \mQ^{\top} \mQ ) \\ 
    = \; & \sum_{i=1}^{\infty} \bar{\lambda}_i \vq_i^{\top} \vq_i  =  \sum_{j=1}^\infty \left (\sum_{i=1}^j \vq_i^{\top} \vq_i \right ) (\bar{\lambda}_j - \bar{\lambda}_{j+1}) \\ 
    \le \; & \sum_{i=1}^d \bar{\lambda}_i \left [ 1+ \left (2 + \sqrt{2 \log \frac{2}{\delta}} \right ) \frac{\bt^2 \lambda_d^{-1}}{\sqrt{N}} \right ] \le \sum_{i=1}^d \bar{\lambda}_i + \left (2 + \sqrt{2 \log \frac{2}{\delta}} \right ) \frac{ \bt^2 \lambda_d^{-1} d }{\sqrt{N}}  ,
\end{align*}
which proves the assertion.
\end{proof}

\section{Proofs for Section \ref{sec:bt-big}}
\label{sec:proofs_hypercube_examples}

\paragraph{Example 1.}
\textit{
Consider $\gX = \{-1, 1 \}^{d_\gX}$, where each $x \in \gX$ is a vector of $-1$ and $1$ with length $d_\gX$.
Let $p_{\gX}$ be the uniform distribution over $\gX$.
Consider a random masking augmentation, where for any $x \in \gX$, each coordinate $x^i$ is randomly masked to be $0$
with probability $\alpha \in (0,1)$ independently,
where $\alpha$ is the mask ratio ($0$ is the \texttt{[MASK]} token).
Then, $\bt^2 = (2 - \alpha)^{d_\gX}$.
}

\begin{proof}
We know that
$\bt^2 \ge \int \frac{p(a|x)^2}{p(a)} da$,
and by symmetry, the right-hand-side is the same for all $x$.
Given an $a$, suppose $a$ has $r$ coordinates masked and $(d_{\gX} - r)$ coordinates unmasked.
Then, there are $2^r$ possible $x$ that can be augmented to $a$.
For each of these $x$, $p(a|x) = \alpha^r (1-\alpha)^{d_{\gX} - r}$.
So $p(a) = \int p(a|x) p(x) dx = 2^{r - d_{\gX}} \alpha^r (1-\alpha)^{d_{\gX} - r}$.
Thus, we have
\begin{align*}
   \bt^2 & = \int \frac{p(a|x)^2}{p(a)} da = \sum_{r=0}^{d_\gX} \binom{d_\gX}{r} \frac{\alpha^{2r} (1-\alpha)^{2d_{\gX} - 2r}}{2^{r - d_{\gX}} \alpha^r (1-\alpha)^{d_{\gX} - r}} \\ 
   & = \sum_{r=0}^{d_\gX} \binom{d_\gX}{r} \alpha^r (2-2\alpha)^{d_\gX - r} \\ 
   & = (\alpha + 2 - 2 \alpha)^{d_\gX} = (2 - \alpha)^{d_\gX} ,
\end{align*}
which completes the proof.
\end{proof}

\paragraph{Example 2.}
\textit{
Consider a random block masking augmentation with ratio $\alpha$, which for any $x \in \gX$ masks $x^i,x^{i+1},\cdots,x^{i+r-1}$ for a uniformly random $i$, and $r = \lceil \alpha d_{\gX} \rceil$.
Then, $\bt^2 \le [2^{(1-\alpha) }]^{d_{\gX}}$.
}

\begin{proof}
For any $a$, we have $p(a) = \frac{1}{d_\gX - r + 1} \frac{1}{2^{d_\gX - r}}$, and $p(a|x) = \frac{1}{d_\gX - r + 1}$ if $x$ can be augmented to $a$.
So there always is $\frac{p(a|x)}{p(a)} = 2^{d_\gX - r} \le 2^{(1-\alpha) d_\gX}$. Thus, we have $\bt^2 \le 2^{(1-\alpha)d_\gX}$.
\end{proof}

\paragraph{Example 3.}
\textit{
Consider random block masking $+$ flipping with ratio $\alpha$, where for any $x \in \gX$, first $x^i,\cdots,x^{i+r-1}$ are masked to be $0$ for a uniformly random $i$ and $r = \lceil \frac{\alpha}{2} d_{\gX} \rceil$, and then each remaining coordinate is randomly flipped sign ($1 \rightarrow -1$ and $-1 \rightarrow 1$) with probability $\frac{\alpha}{2}$ independently.
Then, $\bt^2 \le \left [(\alpha^2-2\alpha+2)^{(1-\alpha/2)} \right ]^{d_{\gX}}$.
}

\begin{proof}
For any $a$, we have $p(a) = \frac{1}{d_\gX - r + 1} \frac{1}{2^{d_\gX - r}}$.
Suppose $a$ is augmented from $x$, and among the unmasked $(d_\gX - r)$ coordinates, $a$ and $x$ have $k$ disagreeing coordinates.
For a given $k$,
there are $(d_\gX - r + 1) \binom{d_\gX - r}{k}$ possible $a$,
and we have $p(a|x) = \frac{1}{d_\gX - r + 1}  (\frac{\alpha}{2})^{k} (1 - \frac{\alpha}{2})^{d_\gX - r - k}$.
Thus, we have
\begin{align*}
    \int \frac{p(a|x)^2}{p(a)} da & = \sum_{k=0}^{d_\gX - r} (d_\gX - r + 1) \binom{d_\gX - r}{k} \frac{\frac{1}{(d_\gX - r + 1)^2}(\frac{\alpha}{2})^{2k} (1 - \frac{\alpha}{2})^{2d_\gX - 2r - 2k}}{\frac{1}{d_\gX - r + 1} \frac{1}{2^{d_\gX - r}}} \\ 
   & = \sum_{k=0}^{d_\gX - r}\binom{d_\gX - r}{k} 2^{d_\gX - r} \left ( \frac{\alpha^2}{4} \right )^k \left ( 1 - \alpha + \frac{\alpha^2}{4}\right )^{d_\gX - r - k} \\ 
   & = 2^{d_\gX - r} \left ( \frac{\alpha^2}{4} + 1 - \alpha + \frac{\alpha^2}{4} \right )^{d_\gX - r} \\ 
   & \le \left ( \alpha^2 - 2 \alpha + 2 \right )^{d_\gX - r} \le \left ( \alpha^2 - 2 \alpha + 2 \right )^{(1-\alpha/2)d_{\gX}}  ,
\end{align*}
which completes the proof.
\end{proof}

\section{Experiment Details}
\label{app:exp}

This section contains the details of the experiments we conducted in Section \ref{sec:exp}.

\subsection{Estimating the Augmentation Complexity of Masked Language Modeling}

We estimate the $\bt$ of four augmentations, namely Random masking, Block masking, Random masking + Flipping, and Block masking + Flipping, on the widely used NLP dataset \texttt{wikipedia-simple}.
Since it is prohibitively expensive to iterate over the entire dataset,
we instead aim to estimate $\bt$ with a subset of the dataset.
In our experiments, instead of estimating $\sup_x \kx(x,x)$,
we estimate its 99$^{th}$ percentile.
This is because of two reasons:
\begin{enumerate}[(i)]
    \item $\sup_x \kx(x,x)$ is statistically impossible to estimate from a subset of data without any extra assumptions on the distribution of $\kx(x,x)$. This means that without iterating over the entire dataset, we cannot get a finite confidence interval.
    The percentile, on the other hand, can be estimated with a finite confidence interval via sampling regardless of the distribution of $\kx(x,x)$, which is a well-known result in statistics \citep[Section~5.2]{hahn2011statistical}.
    \item Almost all real datasets contain outliers, \ie{} samples that are very different from the other samples. These samples will have a very large $\kx(x,x)$, because their augmentations rarely overlap with the augmentations of other samples.
    Therefore, $\sup_x \kx(x,x)$ itself is not really meaningful because it is too sensitive to outliers, and cannot show how the augmentation works on most part of the population.
\end{enumerate}

As a demonstration of (ii), the following is an example outlier in the \texttt{wikipedia-simple} dataset,\footnote{Dataset available at \href{https://huggingface.co/datasets/wikipedia/viewer/20220301.simple/train}{huggingface.co/datasets/wikipedia}. The index of the example outlier is 199562.}
and it has a very large $\kx(x,x)$:
{
\color{darkgray}
\begin{verbatim}
Geroldsgrün is a municipality in Hof, Bavaria, Germany.

Geography

Boroughs

 Dürrenwaid 
 Dürrenwaiderhammer
 Langenbachtal
 Langenau
 Langenbach
 Lotharheil
 Mühlleiten
 Geroldsgrün  
 Geroldsreuth
 Großenreuth
 Hermesgrün 
 Hertwegsgrün   
 Hirschberglein     
 Silberstein  
 Steinbach
 Untersteinbach

References
\end{verbatim}
}

We can see that this sample is very different from a typical sample in the dataset in that (a) It contains lots of German words while this is an English dataset; (b) It mostly consists of individual words, while most other samples are comprised of sentences and paragraphs.
Such outliers are rare in the dataset,
but they result in a large $\kx(x,x)$.
Therefore, we instead estimate the $\beta$-th percentile of $\kx(x,x)$ for some $\beta$ close to 100. This is a more robust estimator that can be computed with a subset of data.

Specifically, we sample $m=1000$ samples $x_1,\cdots,x_m$ from the dataset uniformly at random.
The maximum length is set at $l = 64$.
For each $x_i$,
we estimate $\int \frac{p(x_i|a)}{p(x_i)} p(a|x_i) da$ via sampling $r=255$ \iid{} augmentations from $p(a|x_i)$ and computing the mean of $\frac{p(x_i|a_j)}{p(x_i)}$.
So it suffices to estimate $p(x_i|a_j)$ and $p(x_i)$.
To do this, we leverage a \texttt{bert-base-uncased}, which is a bi-directional language model.
Then, we can estimate $p(x|a)$ for any $x$ and $a$ as follows:
For a sample $x=[x^{(1)}, \cdots, x^{(l)}]$ ($x^{(i)}$ is a token) and its augmentation $a$,
there is
\begin{equation*}
    \log p \left (x \middle \vert a \right ) = \log p \left (x^{(1)} \middle \vert a \right ) + \log p \left (x^{(2)} \middle \vert a,x^{(1)} \right ) + \cdots + \log p \left (x^{(l)} \middle \vert a, x^{(1)}, \cdots, x^{(l-1)} \right ) .
\end{equation*}

So we can estimate $\log p \left (x \middle \vert a \right )$ in an autoregressive fashion:
First estimate $\log p \left (x^{(1)} \middle \vert a \right )$,
then replace $a^{(1)}$ with $x^{(1)}$ and estimate $\log p \left (x^{(2)} \middle \vert a,x^{(1)} \right )$,
then replace $a^{(2)}$ with $x^{(2)}$ and estimate $\log p \left (x^{(3)} \middle \vert a,x^{(1)}, x^{(2)} \right )$,
and so on.
The bi-directionality of BERT is important for such estimation.
To compute $p(x)$,
we set $a$ to be a fully masked sentence,
and then compute $p(x|a) = p(x)$.

First, we decide the percentile of $\kx(x,x)$ to report.
In Figure \ref{fig:hist}, we plot the histogram of $\log \left ( \int \frac{p(x|a)}{p(x)} \hat{p}(a|x) da \right )$ estimated using the above method of random masking with mask ratio $\alpha=0.15$ and $\alpha = 0.4$.
The red dashed lines indicate the $99^{th}$ percentile,
and we can see that they cut the tail of the data while preserving the bulk of the mass.
Thus, we report the $99^{th}$ percentile for this dataset.

\begin{figure}[!t]
    \centering
\begin{subfigure}[b]{.49\textwidth}
\includegraphics[width=\linewidth]{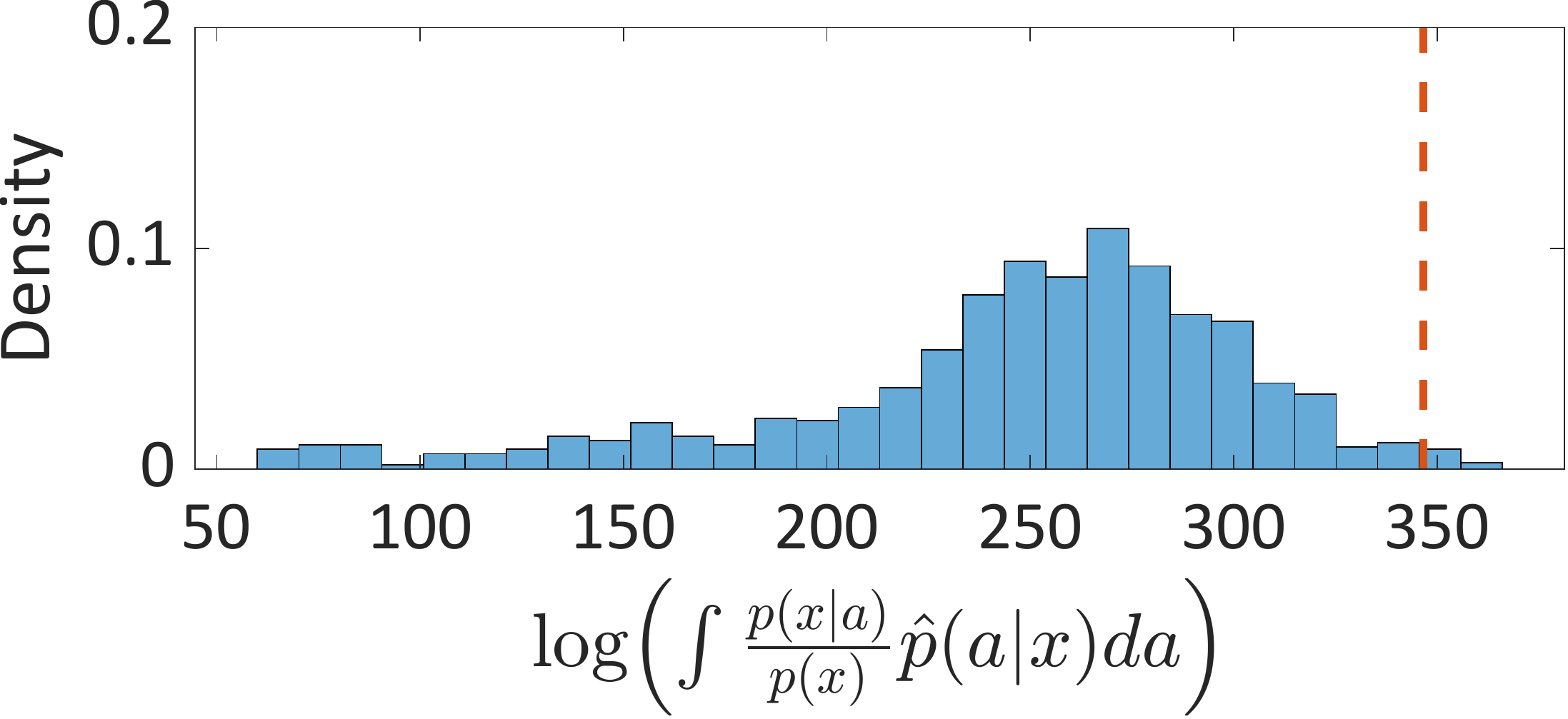}
\end{subfigure}
\hfill
\begin{subfigure}[b]{.49\textwidth}
\includegraphics[width=\linewidth]{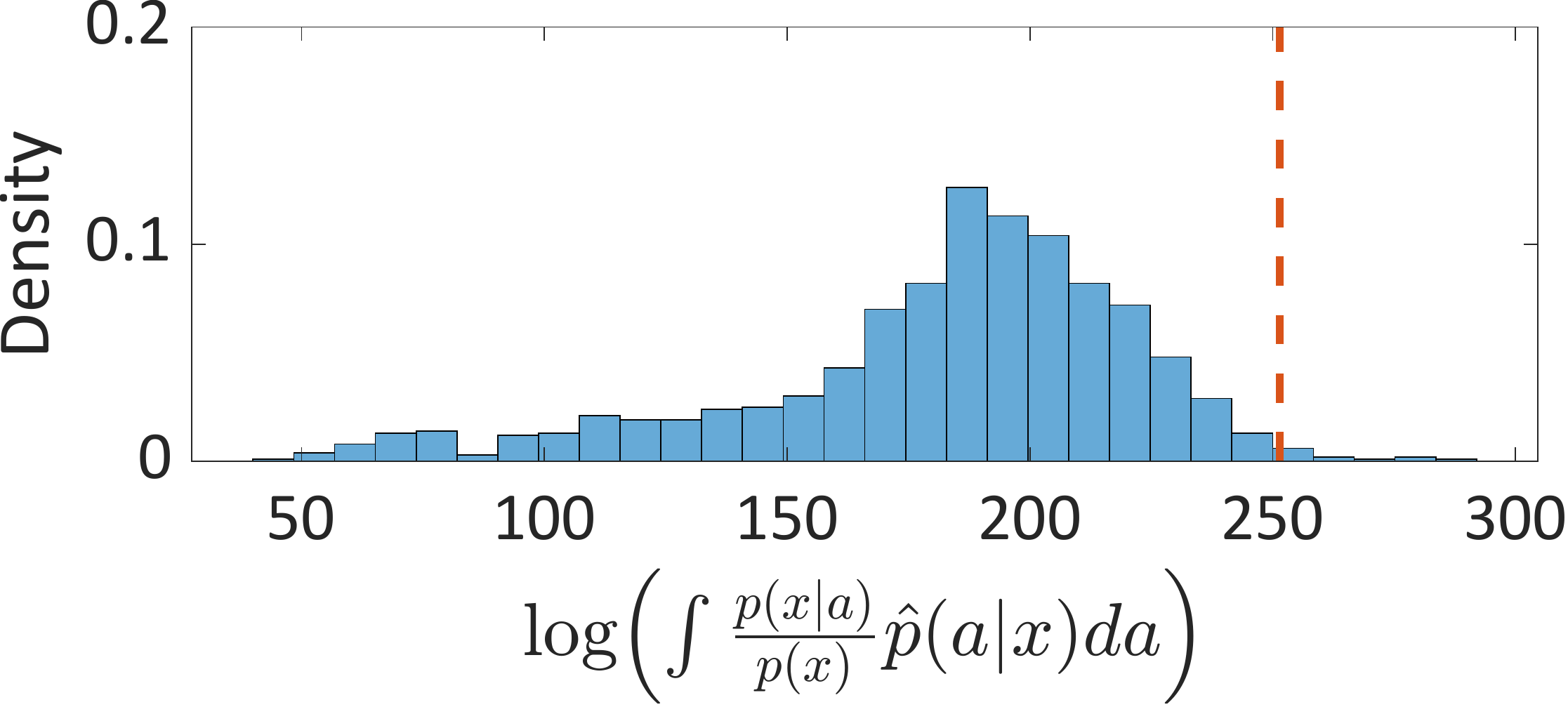}
\end{subfigure}
\caption{Plots of $\log \left ( \int \frac{p(x|a)}{p(x)} \hat{p}(a|x) da \right )$ for random masking. \textbf{Left:} $\alpha=0.15$; \textbf{Right:} $\alpha=0.4$.}
\label{fig:hist}
\end{figure}

We study four masking augmentations: Random masking (mask $r = \lceil \alpha d_\gX \rceil$ tokens uniformly at random), block masking (mask a block of $r$ tokens uniformly at random), random masking and flipping (mask $r/2$ tokens uniformly at random and then replace $r / 2$ remaining tokens with random tokens), and block masking and flipping (mask a block of $r/2$ tokens uniformly at random and then replace $r / 2$ remaining tokens with random tokens). 
In standard BERT, the number of randomly replaced tokens is much smaller than the number of masked tokens, but here we make them the same in order to magnify their difference.

\begin{table}[!t]
  \caption{99$^{th}$ percentile of estimated $\log \kx(x,x)$ on \texttt{wikipedia-simple}.}
  \label{tab:est-wikipedia}
  \centering
  \begin{tabular}{l|cccc}
    \toprule
    & Random Mask & Random Mask + Flip & Block Mask  & Block Mask +  Flip \\ 
    \midrule
    $\alpha=0.05$ & \res{353.94}{4.63} & \res{350.99}{3.20}  & \res{346.77}{4.47}  &  \res{350.46}{1.43}  \\ 
    $\alpha=0.10$ & \res{348.45}{2.94} & \res{349.14}{1.15}  & \res{339.86}{5.62}  &  \res{343.73}{5.76}  \\ 
    $\alpha=0.15$ & \res{346.40}{1.94} & \res{342.77}{4.17}  & \res{327.13}{2.33}  &  \res{336.81}{5.27}  \\ 
    $\alpha=0.20$ & \res{339.03}{2.88} & \res{335.01}{2.65}  & \res{313.90}{2.55}  & \res{322.87}{3.55} \\ 
    $\alpha=0.30$ & \res{325.08}{2.35} & \res{312.21}{5.96}  & \res{280.65}{1.87}  & \res{302.60}{2.49} \\ 
    $\alpha=0.40$ & \res{307.00}{4.55} & \res{280.76}{0.71}  & \res{254.40}{1.48}  & \res{279.24}{4.59} \\ 
    $\alpha=0.50$ & \res{278.59}{2.46}   & \res{241.59}{4.50} & \res{217.35}{2.81}  & \res{240.73}{1.58}  \\
    $\alpha=0.60$ & \res{250.01}{3.43} & \res{198.15}{4.01}   & \res{187.60}{2.45} & \res{205.45}{2.63}  \\
    $\alpha=0.70$ & \res{214.14}{0.88} & \res{151.58}{3.32}   & \res{152.70}{1.60} & \res{164.48}{2.59}  \\
    $\alpha=0.80$ & \res{160.24}{2.44} & \res{84.51}{2.35}   & \res{113.86}{2.39} & \res{108.44}{1.43}  \\
    \bottomrule
  \end{tabular}
\end{table}

In Table \ref{tab:est-wikipedia}, we report the estimated 99$^{th}$ percentile of $\log \kx(x,x)$.
Each experiment is run five times with different random seeds, and we report the average and standard deviation.
We can see that:
\begin{enumerate}[(i)]
    \item $\bt$ decreases as $\alpha$ grows, showing that a stronger augmentation has a lower complexity.
    \item Random masking always has the highest complexity. Regarding the effect of block masking and flipping, block masking has a greater effect when $\alpha$ is small and it makes the complexity lower, whereas flipping has a greater effect when $\alpha$ is larger.
\end{enumerate}

To conclude, as long as there is a way to estimate $\frac{p(x|a)}{p(x)}$ or $\frac{p(a|x)}{p(a)}$, we can estimate $\bt$ via sampling from the dataset. Note that in our estimation, BERT is only used for estimating $p(x|a)$.
$\bt$ itself is defined free of any model.

\subsection{Analyzing the Effect of Augmentation on Downstream Performance}

Our theoretical analysis implies that as long as Assumption \ref{ass:lap} is satisfied, a smaller $\bt$, \ie{} a stronger augmentation, leads to better generalization.
However, this does not mean that a stronger augmentation will always lead to a better test performance of tighter generalization gap, due to the following reasons:
\begin{enumerate}[(i)]
    \item If the augmentation is too strong, then the model might have low training performance. In the extreme case, if the sentences are 100\% masked, then the language model can learn nothing.
    \item Stronger augmentations usually work better with larger models.
    For instance, \citet{wettig2022should} demonstrated that large models with strong augmentations can achieve higher performance than standard models and augmentations.
    However, for small models, stronger augmentations can lead to lower performance.
    One possible reason is that small models are not expressive enough, but we conjecture that this is not the true cause since a Transformer with a moderate size is already sufficiently complex.
    We conjecture that the real reason lies in optimization, in that it would be harder for the optimizer to find a point close to the global minima with a small model and a strong augmentation.
    The big model, on the other hand, has more parameters and could be easier to optimize in practice.
    \item If the augmentation is too strong, then Assumption \ref{ass:lap} will be violated, and our results will not hold so that the generalization gap can be large.
\end{enumerate}

We study the effect of the augmentation on two real NLP downstream tasks, namely QNLI and SST-2. 
We study random masking with different mask ratios.
Unlike the experiments in \citet{wettig2022should} that applied the pretrained encoder directly to downstream tasks,
in our experiments we explicitly use the average encoder Eqn. (\ref{eqn:avg-enc}),
which is estimated by sampling 16 augmentations for each sample $x$ and then averaging their embeddings.
We do, however, fine-tune the encoder during downstream as people do in common practice,
which has been proven useful,
\footnote{We find in our experiments that without fine-tuning,
the downstream classifiers cannot achieve meaningful training accuracy, merely higher than 70\% for a binary classification task.
With the training accuracy so low, there is no empirical generalization gap, and the test accuracy is even higher than the training accuracy in many cases,
so these results are not meaningful.}
even though our theory does not analyze fine-tuning.
We acknowledge that there is a discrepancy between the theory and real practice,
which we aim to address in the future.

Details of our experiments:
\begin{enumerate}[(i)]
    \item We train \texttt{roberta-large} models using the fast training recipe provided by \citet{wettig2022should}.
    For $\alpha \ge 0.15$,
    we directly use the Huggingface checkpoints they provide.
    For $\alpha = 0.05, 0.10$,
    we pretrain new encoders using their source code\footnote{\url{https://github.com/princeton-nlp/DinkyTrain}} without any modification. 
    Note that these models use pre-layer normalization (see the original github repository regarding this detail).
    We use 8 NVIDIA A6000 GPUs for pretraining.
    \item For downstream fine-tuning and evaluation, we use the official repository provided by Huggingface.\footnote{\url{https://github.com/huggingface/transformers/tree/main/examples/pytorch/text-classification}}
    The only modification we make is that we explicitly use the average encoder.
    The classifiers are trained for 3 epochs on QNLI and 6 epochs on SST-2.
    All hyperparameters are kept the same as the official repository.
    We use 4 NVIDIA A6000 GPUs for downstream training and evaluation.
\end{enumerate}

The results are plotted in Figure \ref{subfig:qnli} in Section \ref{sec:exp}. As discussed there,
the ``sweet spot'' where the model achieves the highest test performance is in the middle,
where the augmentation is strong enough to have good generalization,
yet not too strong to break Assumption \ref{ass:lap} or to make the training performance too low.

\end{document}